\theoremstyle{plain}
\theoremstyle{remark}
\newcommand{\R}{\mathbb{R}}
\newcommand{\bw}{\mathbb{w}}
\newcommand{\bP}{\mathbb{P}}
\newcommand{\cC}{\mathcal{C}}
\newcommand{\cG}{\mathcal{G}}
\newcommand{\cD}{\mathcal{D}}
\newcommand{\cE}{\mathcal{E}}
\newcommand{\cN}{\mathcal{N}}
  \def\cC{{\mathcal{C}}} \def\cD{{\mathcal{D}}}
\def\cE{{\mathcal{E}}}  \def\cG{{\mathcal{G}}} 
 \def\cN{{\mathcal{N}}} \def\cO{{\mathcal{O}}} \def\cP{{\mathcal{P}}}
 \def\cV{{\mathcal{V}}}
\def\ba{{\mathbf{a}}} \def\bb{{\mathbf{b}}}
  \def\bw{{\mathbf{w}}} \def\bx{{\mathbf{x}}} \def\by{{\mathbf{y}}}
 \def\bG{{\mathbf{G}}}  \def\bI{{\mathbf{I}}} 
\def\bP{{\mathbf{P}}}    
  \def\bW{{\mathbf{W}}} \def\bX{{\mathbf{X}}} 
\newcommand{\bef}{\begin{figure}}
\newcommand{\eef}{\end{figure}}
\newcommand{\beq}{\begin{eqnarray}}
\newcommand{\eeq}{\end{eqnarray}}
\def\Var{\mathrm{Var}}
\newtheorem{assumption}{Assumption}
\newtheorem{remark}{Remark}
\newcommand{\DyBW}{\texttt{DSGD-AAU}\xspace}
\renewcommand\footnotetextcopyrightpermission[1]{}
\begin{document}

\title{Straggler-Resilient Decentralized Learning \\via Adaptive Asynchronous Updates}
\author{Guojun Xiong$^{1}$, Gang Yan$^{2}$,  Shiqiang Wang$^{3}$,  Jian Li$^{1}$}
\authornote{The first two authors contributed equally to this research.}
\affiliation{
  \institution{$^1$Stony Brook University, $^2$University of California Merced, $^3$IBM T. J. Watson Research Center}
  \country{}
  }

\renewcommand{\shortauthors}{Trovato et al.}

\begin{abstract}
With the increasing demand for large-scale training of machine learning models, fully decentralized optimization methods have recently been advocated as alternatives to the popular parameter server framework.  In this paradigm, each worker maintains a local estimate of the optimal parameter vector, and iteratively updates it by waiting and averaging all estimates obtained from its neighbors, and then corrects it on the basis of its local dataset. However, the synchronization phase is sensitive to stragglers. An efficient way to mitigate this effect is to consider asynchronous updates, where each worker computes stochastic gradients and communicates with other workers at its own pace. Unfortunately, fully asynchronous updates suffer from staleness of stragglers' parameters.  To address these limitations, we propose a fully decentralized algorithm \DyBW with adaptive asynchronous updates via adaptively determining the number of neighbor workers for each worker to communicate with.  We show that \DyBW achieves a linear speedup for convergence and demonstrate its effectiveness via extensive experiments. 
\end{abstract}

\maketitle

\section{Introduction}\label{intro}

Deep neural networks (DNNs) have shown impressive results for a variety of machine learning (ML) tasks in the Internet of Things (IoT).  Its success depends on the availability of large amount of training data, leading to a dramatic increase in the size, complexity and computational power of the IoT systems. In response to these challenges, the need for efficient parallel and distributed algorithms (i.e., data-parallel mini-batch stochastic gradient descent (SGD)) becomes urgent for solving large-scale optimization and ML problems \cite{bekkerman2011scaling,boyd2011distributed}.
These ML algorithms in general use the parameter server (PS) \cite{smola2010architecture,dean2012large,ho2013more,li2014communication} or {ring All-Reduce} \cite{gibiansky17allreduce,goyal2017accurate} communication primitive to perform exact averaging on local mini-batch gradients computed on different subsets of the data by each worker, for the later synchronized model update.  However, the aggregation with PS or All-Reduce often leads to extremely high communication overhead, causing the bottleneck for efficient ML model training in IoT.  Additionally, since all workers need to communicate with the server concurrently, the PS based framework suffers from the single point failure and scalability issues, especially in IoT.

Recently, an alternative approach has been developed in ML community \cite{lian2017can,lian2018asynchronous}, where each worker keeps updating a local version of the parameter vector and broadcasts its updates \textit{only} to its neighbors. This family of algorithms became originally popular in the control community, starting from the seminal work of \cite{tsitsiklis1986distributed} on distributed gradient methods, in which, they are referred to as \textit{consensus-based distributed optimization methods} or \textit{fully decentralized learning} (DSGD).
The key idea of DSGD is to replace communication with the PS by peer-to-peer communication between individual workers.

Most existing consensus-based algorithms assume synchronous updates, where all workers perform SGD at the same rate in each iteration.  Such a \textit{synchronous update} is known to be sensitive to \textit{stragglers}, i.e., slow tasks, because of the synchronization barrier at each iteration among \textit{all workers}. The straggler issues are especially pronounced in heterogeneous computing environments such as IoT, where the computing speed of workers often varies significantly.  In practice, only a small fraction of workers participate in each training round, thus rendering the active worker (neighbor) set stochastic and time-varying across training rounds. 
An efficient way to mitigate the effect of stragglers is to perform \textit{fully asynchronous SGD}, where each worker updates its local parameter \textit{immediately} once it completes the local SGD and moves to the next iteration \textit{without waiting for any neighboring workers} for synchronization
\cite{lian2018asynchronous,luo2020prague,assran2020asynchronous, wang2022asynchronous}.  Unfortunately, such fully asynchronous SGD often suffers from staleness of the stragglers' parameters and heavy communication overhead, and requires deadlock-avoidance (See Section~\ref{prelim-aau}).  This raises the question: \textit{Is it possible to get the best of both worlds of synchronous SGD and fully asynchronous SGD in fully decentralized (consensus-based) learning?}

In this paper, we advocate \textit{adaptive asynchronous updates} (AAU) in fully decentralized learning that is theoretically justified to keep the advantages of both synchronous SGD and fully asynchronous SGD. Specifically, in each iteration, only a subset of workers participate, 
and each worker only waits for a subset of neighbors to perform local model updates. 
On one hand, these participated  workers are the fastest computational workers, which addresses the straggler issues in synchronous SGD.  On the other hand, each worker does not need to perform local update immediately using stale information or communicate with all neighbors, which addresses the stateless and heavy-communication issues in fully asynchronous SGD.  
We make the following contributions:

$\bullet$ We formulate the fully decentralized 
optimization problem with AAU, and propose a decentralized SGD with AAU (\DyBW) algorithm that adaptively determines the number of neighbor workers 
to perform local model update during the training time.

$\bullet$  To our best knowledge, we present the first convergence analysis of fully decentralized learning  with AAU that is cognizant of training process at each worker.  We show that the convergence rate of \DyBW on both independent-identically-distributed (i.i.d.) and non-i.i.d. datasets across workers is $\mathcal{O}\left(\frac{1}{\sqrt{NK}}\right)$, where $N$ is the total number of workers and $K$ is the total number of communication rounds.  This indicates that \DyBW achieves a linear speedup of communication w.r.t. $N$ for a sufficiently large $K.$  This is the first linear speedup result for \DyBW, and is highly desirable since it implies that one can efficiently leverage the massive parallelism in large-scale decentralized systems. The state-of-the-art PSGD \cite{lian2017can} and AD-PSGD \cite{lian2018asynchronous} also achieve the same rate with synchronous updates or asynchronous updates with two randomly selected workers in each round, which lead to high communication costs and implementation complexity.  In contrast we have the flexibility to \textit{adaptively} determine the number of neighbor workers for each worker in the system. 

$\bullet$ We conduct experiments on image classification 
and next-character prediction 
 tasks with several representative DNN models, demonstrating that \DyBW  substantially accelerates training of DNNs by mitigating the effect of stragglers and consistently outperforms  state-of-the-art methods.

\section{Background}\label{prelim}

\noindent\textbf{Notation.}
Let $N$ be the number of workers and $K$ be the number of communication rounds.  We denote the cardinality of a finite set $\mathcal{A}$ as $|\mathcal{A}|.$  We denote by $\bI_M$ and $\mathbf{1}$ ($\mathbf{0}$) the identity matrix and all-one (zero) matrices of proper dimensions, respectively.  We also use $[N]$ to denote the set of integers $\{1,\cdots, N\}$.  We use boldface to denote matrices and vectors, and $\|\cdot\|$ to denote the $l_2$-norm.

In this section, we provide a brief overview of consensus-based decentralized optimization problem.
We formulate the general decentralized ML problem with $N$ independent workers into the following optimization task
\begin{align}
\min_\bw~ F(\bw):=  \frac{1}{N}\sum\limits_{j=1}^N F_j(\bw),
\end{align}
which aims to optimize a set of parameters $\bw\in\R^{d\times 1}$  using $L$ examples from local training datasets $\cD_j=\{\bx_\ell, \ell\in[L]\}, \forall j\in[N]$.
In particular, $F_j(\bw)=\frac{1}{L}\sum_{\bx_\ell\in\cD_j}f(\bw, \bx_\ell)$
with $f(\bw, \bx_\ell)$ is the model error on the $\ell$-th element of $\cD_j$ when parameter $\bw$ is used. In conventional decentralized learning, the distribution for each worker's local dataset is assumed to be i.i.d.. Unfortunately, this assumption may not hold true in practice since data can be generated locally by workers based on their circumstances.  Instead, we make no such assumption in our model and our analysis holds for both i.i.d. and non-i.i.d. local datasets across workers. See Section~\ref{sec:convergence} for details.

The decentralized system can be modeled as \textit{a communication graph} $\cG=(\cN,\cE)$ with $\cN=[N]$ being the set of workers and an edge $(i, j)\in\cE$ indicates that $i$ and $j$ can communicate with each other.  We assume that the graph is strongly connected \cite{nedic2009distributed,nedic2018network}, i.e., there exists at least one path between any two  workers. Denote neighbors of $j$ as $\cN_j=\{i|(i,j)\in\cE\}\cup\{j\}$. All workers perform local model updates synchronously.  Specifically, $j$ maintains a local estimate of the parameter vector $\bw_j(k)$ at iteration $k$ and broadcasts it to its neighbors.
The local estimate is updated as follows:
\begin{equation}\label{eq:local-estimation-all}
\bw_j({k+1})=\sum_{i\in \cN_j} \big[\bw_i(k)-\eta g_j(\bw_j(k))\big]P_{i,j},
\end{equation}
where $\bP=(P_{i,j})$ is a $N\times N$ non-negative matrix and we call it the \textit{consensus matrix}, and $\eta>0$ is the learning rate.  In other words, in each iteration, $j\in[N]$ computes a weighted average (i.e., consensus component) of the estimates of its neighbors and itself, and then corrects it by taking into account a stochastic subgradient $g_j(\bw_j(k))$ of its local function, i.e.,
\begin{equation}\label{eq:local-subgradient}
g_j(\bw_j(k)):=\frac{1}{|\cC_j(k)|} \sum\limits_{\bx_\ell, \in \cC_j(k)} \!\!\nabla f(\bw_j(k), \bx_\ell),
\end{equation}
where $\cC_j(k)$ is a random subset of $\cD_j$ at iteration $k$.

\section{\DyBW}\label{prelim-aau}

The performance of synchronous updates in \eqref{eq:local-estimation-all} may be significantly dragged down by stragglers 
since every worker $j$ needs to wait for all neighbors $i\in\cN_j$ to complete the current local gradient computation $g_i(\bw_i(k))$ at each iteration $k$ \cite{lian2017can}. In other words, \textit{in each synchronous iteration, all workers participate, and each worker needs to wait for all its neighbors}, as illustrated in Figure~\ref{fig:SyncUpdate}.

To mitigate the effect of stragglers in synchronous updates in \eqref{eq:local-estimation-all},
a possible way is to perform fully asynchronous updates, where each worker $j$ updates the local parameter $\bw_j$ immediately once it completes the local gradient computation and moves to the next iteration \cite{lian2018asynchronous,luo2020prague,assran2020asynchronous, wang2022asynchronous}.  In other words, \textit{in each asynchronous iteration, only one (or more) worker participates, and each worker does not wait for any neighbors}, as illustrated in Figure~\ref{fig:Asyncupdate}.

\begin{figure}[t]
\centering
\begin{minipage}{.45\textwidth}
\centering
\includegraphics[width=0.8\columnwidth]{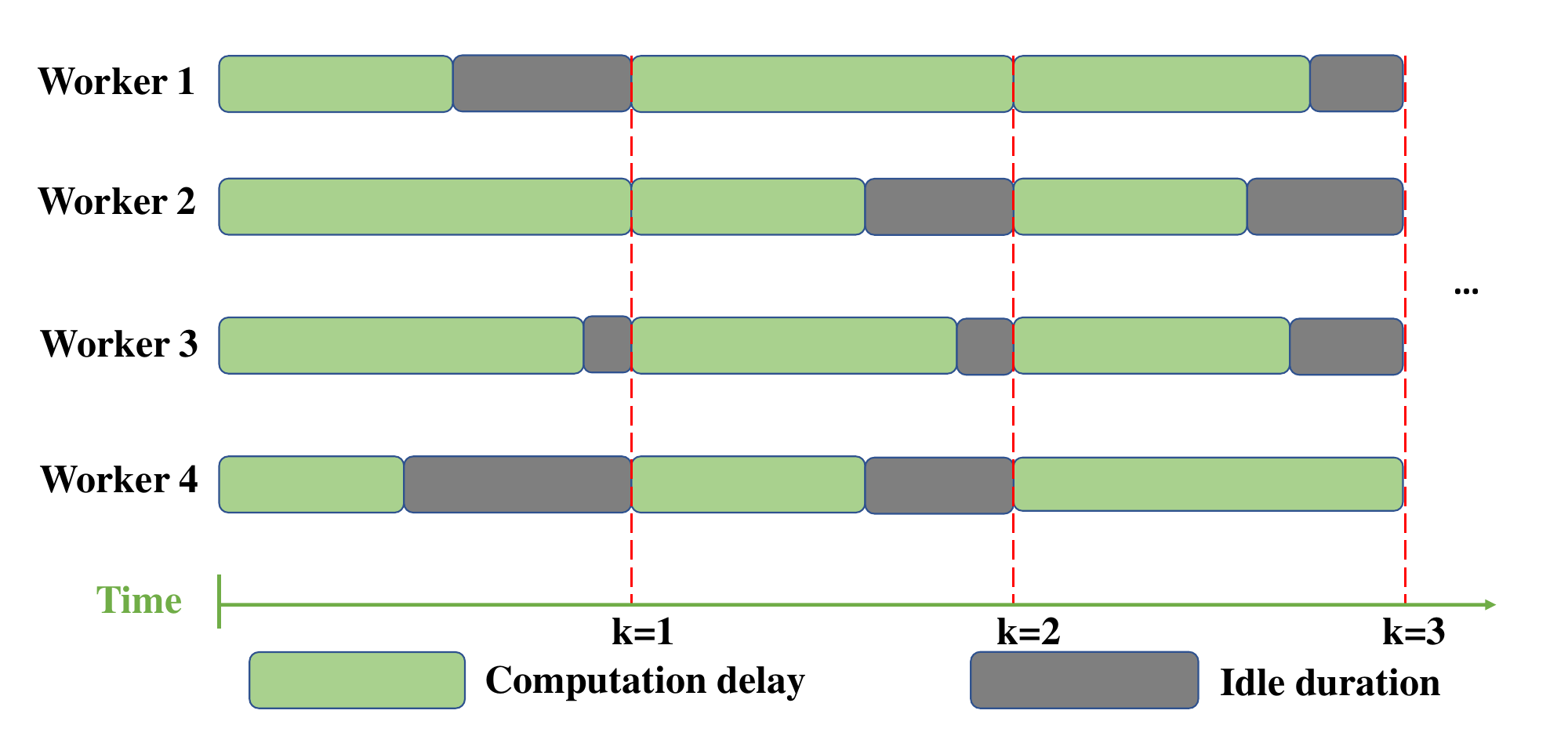}
\subcaption{Decentralized SGD with synchronous updates.}
 \label{fig:SyncUpdate}
\end{minipage}\hfill
\begin{minipage}{.45\textwidth}
\centering
\includegraphics[width=0.8\columnwidth]{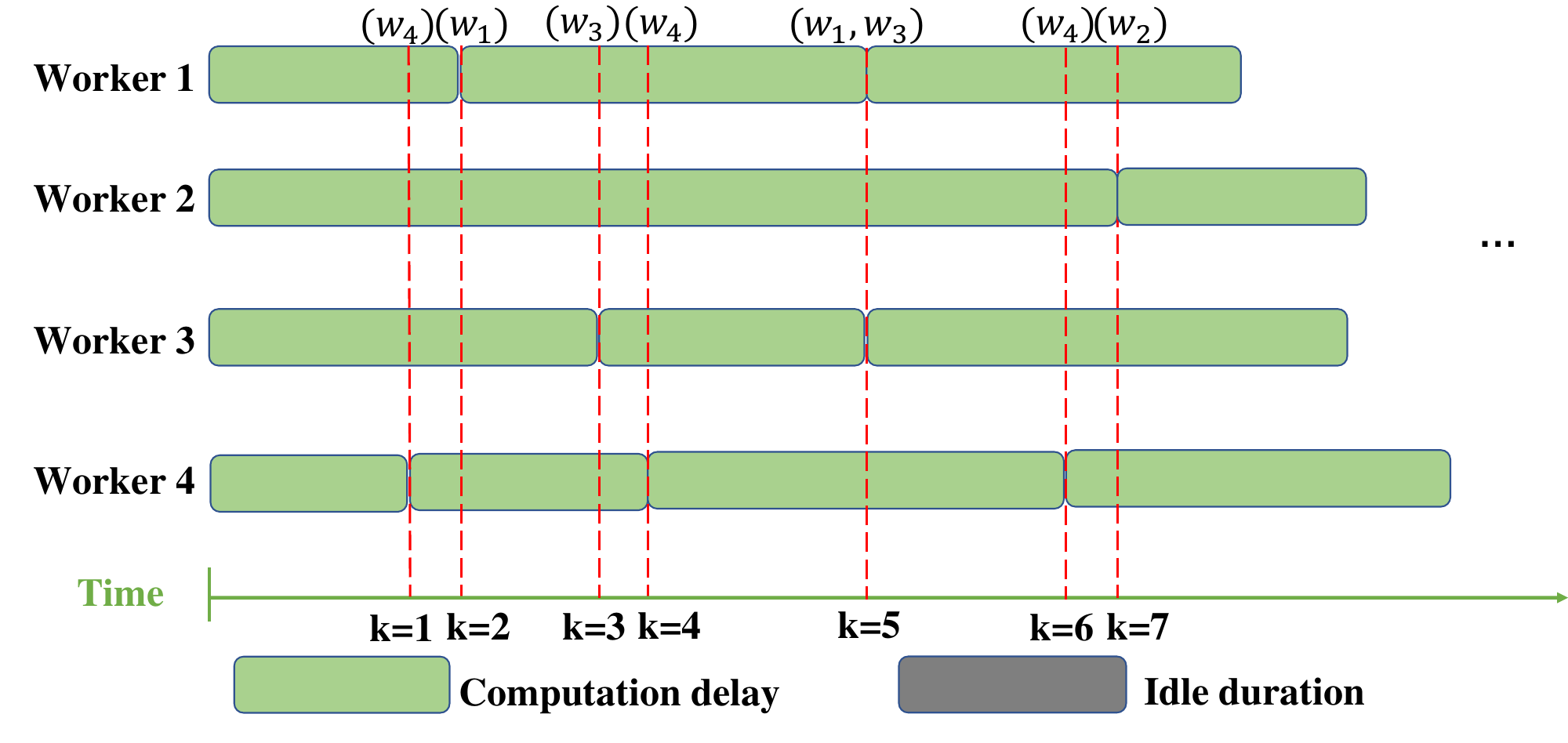}
\subcaption{Decentralized SGD with asynchronous updates.}
 \label{fig:Asyncupdate}
\end{minipage}
\vspace{-0.1in}
\caption{\textit{An illustrative example for decentralized SGD with (a) synchronous and (b) asynchronous updates in an arbitrary network topology with 4 workers. DSGD suffers from stragglers in (a), while workers in (b) suffer from staleness of stragglers, e.g., worker $4$ has updated its local parameter for $3$ times at iteration $k=7$, while worker $2$ has only updated once. Since worker $2$ may randomly pick worker $4$ for averaging as in AD-PSGD \cite{lian2018asynchronous}, worker $4$'s parameter update could be significantly dragged down by worker $2$'s stale information.}}
        \label{fig:DSGD}
        \vspace{-0.3in}
\end{figure}

For instance, the worker randomly selects a neighbor to exchange local parameters once it completes the local gradient computation \cite{lian2018asynchronous}, or the worker maintains a buffer to store information from neighbors whenever they complete a local update, and conducts a consensus update with the stale information in the buffer \cite{assran2020asynchronous}.  As a result, both approaches suffer from staleness of the stragglers' parameters, as illustrated in Figure~\ref{fig:Asyncupdate}. Exacerbating the problem is the fact that the method in \cite{lian2018asynchronous} only works for bipartite communication graph due to the deadlock issue, and the method in \cite{assran2020asynchronous} suffers from heavy communications and memory space.

To address the aforementioned limitations, we advocate \textit{adaptive asynchronous updates} (AAU) to get \textit{the best of both worlds} of synchronous updates and fully asynchronous updates.  Specifically, in each asynchronous iteration $k$, only a subset of $\cN(k)\subset \cN$ workers participate, and each worker $j$ only waits for a subset of neighbors, i.e., $\cN_j(k)\subset\cN_j$ to perform local model updates, where we have $\cN(k)=\bigcup\cN_j(k).$ We denote the cardinality of $\cN(k)$ and $\cN_j(k)$ as $a(k):=|\cN(k)|<N$ and $a_j(k):=|\cN_j(k)|<|\cN_j|$, respectively.  On one hand, these $a(k)$ or $a_j(k)$ workers are the fastest computational workers, which addresses the straggler issues in synchronous updates.  On the other hand, each worker does not need to perform local update immediately using stale information or communicate with all neighbors, which addresses the stateless and heavy-communication issues in fully asynchronous updates.

As a result, $j\in\cN(k)$ locally updates its parameter at iteration $k$ and then follows the consensus update only with workers in $\cN_j(k)$, 
\vspace{-0.1cm}
\beq\label{eq:local_update}\nonumber
\tilde{\bw}_j(k) &=&\bw_j({k-1})-\eta g_j(\bw_j(k-1)), \allowdisplaybreaks\\
\bw_j(k) &=&\sum\nolimits_{i\in \cN_j(k)}\tilde{\bw}_i(k)P_{i,j}(k),
\eeq
where $\bP(k)=(P_{i,j}(k))$ is the time-varying stochastic consensus matrix at iteration $k$, and $P_{i,j}(k)=0$ if $i\in\cN\setminus\cN_j(k)$.
We denote the stochastic gradient matrix as $\bG(k)=[g_1(\bw_1(k)),\ldots,g_N(\bw_N(k))]\in\R^{d\times N}$ and the learning rate as $\boldsymbol{\eta}=\text{diag}[\eta,\ldots,\eta]\in\R^{N\times N}$. From the above definition, we have $g_j(\bw_j(k))=\pmb{0}^{d\times 1}$ for $j\notin \cN(k)$. Therefore, the compact form of \eqref{eq:local_update} is
\vspace{-0.cm}
\begin{align}\label{eq:recursion}
\bW(k)&=[\bW({k-1})-\boldsymbol{\eta}\bG(k-1)]\bP(k).
\end{align}
The value of $a(k)$ and the consensus matrix $\bP(k)$ in our AAU model is not static but dynamically changes over iterations.  We summarize the above procedure in Algorithm~\ref{alg:DSGD-AAU}, and call the \textit{decentralized SGD with adaptive asynchronous updates} algorithm as \DyBW.

\begin{algorithm}[t]
		 \textbf{Input:} Topology $\cG=(\cN,\cE)$, iteration numbers $K$,  and initialized $\{\bw_j(0)\}_{j=1}^N$.
	\begin{algorithmic}[1]
	\FOR{$k=0,1,...,K-1$}
            \STATE Determine the subset of workers $\cN(k)$;
            \FOR{$j=1, 2, \ldots, N$ and $j\in\cN(k)$}
		\STATE Update $\tilde{\bw}_{j}(k)=\bw_{j}(k)-\eta g_{j}(\bw_{j}(k),\cC_{j}(k))$;
          \STATE Update $\bw_{j}(k+1)=\sum_{i\in\cN_{j}(k)}\tilde{\bw}_i(k)P_{i,j}(k)$;
		
           \ENDFOR
            \STATE Update $\bw_{j}(k+1)=\bw_j(k)$ if $j\notin\cN(k)$.
            \ENDFOR
		\end{algorithmic}
		\caption{Decentralized SGD with Adaptive Asynchronous Updates (\DyBW)}
		\label{alg:DSGD-AAU}
\end{algorithm}

There are two open questions for \DyBW.  First, can \DyBW still achieve a near-optimal convergence rate as state-of-the-art decentralized method with synchronous updates \cite{lian2017can} given that the analysis of asynchronous algorithm is already quite challenging? Second, how to determine $\mathcal{N}(k)$ (line 2 in Algorithm~\ref{alg:DSGD-AAU}), and hence the number of fastest $a(k)$ workers at iteration $k$ to not only maintain the aforementioned best-of-both-world properties, but minimize the global training (convergence) time over the network? Note that at each iteration $k$ in Algorithm~\ref{alg:DSGD-AAU}, the key is to determine the set of fastest workers, i.e., $\mathcal{N}(k)\in\mathcal{N}$,  which have completed the local gradient computations, and only workers in $
\mathcal{N}(k)$ actively conduct gossip-average with neighbors while workers in $\mathcal{N}\setminus\mathcal{N}(k)$ are still computing their local gradients. The iteration $k$ begins at finding $\mathcal{N}(k)$ and ends when $\mathcal{N}(k)$ is determined with workers inside it having completed the local gradient computations and the gossip-average operations.  In the following, we provide an affirmative answer to the first question in Section~\ref{sec:convergence}, and then present a practical algorithm to determine $\mathcal{N}(k)$ (See Algorithm \ref{alg:Proposed_Algo1} for a fully decentralized implementation in Section~\ref{sec:selection}).

\section{Convergence Analysis}\label{sec:convergence}

In this section, we analyze the convergence of \DyBW.  

\subsection{Assumptions}\label{sec:assumptions}

\begin{assumption}[Non-Negative Metropolis Weight Rule]\label{assumption-weight}
The Metropolis weights on a time-varying graph $(\cN, \cE_k)$ with consensus matrix $\bP(k)=(P_{i,j}(k))$ at iteration $k$ satisfy
\beq
\begin{cases}
P_{i,i}(k)= 1-\sum_{j\in\cN_i(k)} P_{i,j}(k),\\
P_{i,j}(k)=\frac{1}{1+\max\{p_i(k), p_j(k)\}}, \quad\text{if $j\in\cN_i(k)$},\\
P_{i,j}(k)=0,\quad \text{otherwise},
\end{cases}
\eeq
where $p_i(k)$ is the number of active neighbors that worker $i$ needs to wait at iteration $k.$  Given the Metropolis weights,  $\bP(k)$ are doubly stochastic, i.e., $\sum_{j=1}^N P_{i,j}(k)=\sum_{i=1}^N P_{i,j}(k)=1, \forall i, j, k.$
\end{assumption}

\begin{assumption}[Bounded Connectivity Time]\label{assumption-connectivity}
There exists an integer $B\geq 1$ such that the graph $\cG=(\cN, \cE_{kB} \cup \cE_{kB+1}\cup\ldots\cup \cE_{(k+1)B-1}), \forall k$ is strongly-connected.
\end{assumption}

\begin{assumption}[L-Lipschitz Continuous Gradient]\label{assumption-lipschitz}
There exists a constant $L>0$, such that
$\|\nabla F_j(\bw)-\nabla F_j(\bw^\prime)\|\leq L\|\bw-\bw^\prime\|,~\forall j\in[N], \forall \bw, \bw^\prime\in\R^d.$
\end{assumption}

\begin{assumption}[Unbiased Local Gradient Estimator]\label{assumption-gradient}
The local gradient estimator is unbiased, i.e., $\mathbb{E}[g_j(\bw)]=\nabla F_j(\bw),\forall j\in[N], \bw\in\mathbb{R}^d$ with the expectation taking over the local dataset samples.
\end{assumption}

\begin{assumption}[Bounded Variance]\label{assumption-variance}
There exist two constants $\sigma_{L}>0$ and $\varsigma>0$ such that the variance of each local gradient estimator is bounded by $\mathbb{E}[\|g_j(\bw)-\nabla F_j(\bw)\|^2]\leq \sigma_{L}^2, \forall j\in[N], \bw\in\mathbb{R}^d,$
{and the global variability of the local gradient of the loss function is bounded by $\|\nabla F_j(\bw)-\nabla F(\bw)\|^2\leq \varsigma^2, \forall j\in[N], \bw\in\mathbb{R}^d.$}
\end{assumption}

Assumptions~\ref{assumption-weight}-\ref{assumption-variance} are standard in the literature.
Assumptions~\ref{assumption-weight} and~\ref{assumption-connectivity} are common in dynamic networks with finite number of nodes \cite{Boyd05,tang2018communication}. Specifically, Assumption \ref{assumption-weight} guarantees the product of consensus matrices $\bP(1)\cdots\bP(k)$ being a doubly stochastic matrix, while Assumption \ref{assumption-connectivity} guarantees the product matrix is a strictly positive matrix for large $k$.  We define $\pmb{\Phi}_{k:s}$ as the product of consensus matrices from $\bP(s)$ to $\bP(k)$, i.e., $
\pmb{\Phi}_{k:s}\triangleq \bP(s)\bP(s+1)\cdots \bP(k)$. Under Assumption 2, \cite{nedic2018network} proved that for any integer $\ell, n$, $\pmb{\Phi}_{(\ell+n)B-1:\ell B}$ is a strictly positive matrix.  In fact, every entry of it is at least $\beta^{nB}$, where $\beta$ is the smallest positive value of all consensus matrices, i.e., $\beta=\arg\min_{i, j, k} P_{i,j}(k)$ with $P_{i,j}(k)>0, \forall i, j, k$.  Due to the strongly-connected graph search procedure, Assumption \ref{assumption-connectivity} naturally holds for \DyBW and $B\leq N-1.$
Assumption~\ref{assumption-lipschitz} is widely used in convergence results of gradient methods \cite{nedic2009distributed,bubeck2015convex,bottou2018optimization,nedic2018network}.
Assumptions~\ref{assumption-gradient}-\ref{assumption-variance} are also standard \cite{kairouz2019advances,tang2020communication,yang2021achieving}. We use a universal bound $\varsigma$ to quantify the heterogeneity of non-i.i.d. datasets among workers. If $\varsigma=0$, the datasets of all workers are i.i.d.  It is worth noting that we do \textit{not} require a bounded gradient assumption, which is often used in distributed optimization analysis \cite{nedic2009distributed, nedic2016stochastic}.

\subsection{Convergence Analysis for \DyBW}\label{sec:convergence-iteration}

We now present the main result for \DyBW.

\begin{theorem}\label{thm:gradient}
Let $\eta\leq\min\left(\sqrt{\frac{(1-q)^2}{30C^2L^2N}+\frac{9N^4}{16}}-\frac{3N^2}{4}, 1/L\right)$ be constant learning rates, where $C:=\frac{1+\beta^{-NB}}{1-\beta^{NB}}$ and $q:=(1-\beta^{NB})^{1/NB}$. Under Assumptions~\ref{assumption-weight}-\ref{assumption-variance}, the sequence of parameter $\{\bw_j(k),\forall j\in[N]\}$  generated by \DyBW satisfies
\begin{align}
&\frac{1}{K}\sum\limits_{k=0}^{K-1} \mathbb{E}\left\|\nabla F\left(\frac{\sum_{j=1}^N\bw_j(k)}{N}\right)\right\|^2\nonumber\\
&\qquad\qquad\leq\frac{6\left(F(\bar{\bw}_0)-F(\bw^*)\right)}{\eta K}+\frac{(9L +2)\eta}{3N}\sigma^2_L+\frac{2\eta}{N}\varsigma^2,\label{ineq:grad_bound}
\end{align}
where $\bar{\bw}_0:=\frac{\sum_{j=1}^N\bw_j(0)}{N}$,  $\bw^*$ is the optimal parameter, and the expectation is taken over the local dataset samples among workers.
\end{theorem}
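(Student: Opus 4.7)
\textbf{Proof proposal for Theorem~\ref{thm:gradient}.}

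The plan is to follow the now-standard two-track analysis for decentralized SGD: a descent inequality for the \emph{averaged} iterate $\bar{\bw}(k) := \tfrac{1}{N}\sum_{j=1}^N \bw_j(k)$, combined with a separate bound on the consensus (network) error $\sum_{j=1}^N \mathbb{E}\|\bw_j(k)-\bar{\bw}(k)\|^2$. The adaptive / asynchronous feature enters only through the time-varying consensus matrices $\bP(k)$, which is already accommodated by the spectral estimate implicit in the constants $C$ and $q$.

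First I would exploit Assumption~\ref{assumption-weight}: since each $\bP(k)$ is doubly stochastic, averaging the compact recursion \eqref{eq:recursion} yields
\[
\bar{\bw}(k)=\bar{\bw}(k-1)-\frac{\eta}{N}\sum_{j=1}^N g_j(\bw_j(k-1)),
\]
where the sum is effectively over $j\in\cN(k-1)$ because $g_j\equiv 0$ otherwise. Applying $L$-smoothness (Assumption~\ref{assumption-lipschitz}) to $F$ at $\bar{\bw}(k)$ and $\bar{\bw}(k+1)$, then taking conditional expectation and using unbiasedness (Assumption~\ref{assumption-gradient}) and bounded variance (Assumption~\ref{assumption-variance}), gives a descent-type inequality of the schematic form
\[
\mathbb{E} F(\bar{\bw}(k{+}1))\leq \mathbb{E} F(\bar{\bw}(k))-\tfrac{\eta}{2}\mathbb{E}\|\nabla F(\bar{\bw}(k))\|^2+\tfrac{\eta L^2}{N}\sum_{j}\mathbb{E}\|\bw_j(k)-\bar{\bw}(k)\|^2+\tfrac{\eta^2 L}{2N}\sigma_L^2,
\]
together with a residual term involving $\|\nabla F_j(\bw_j(k))-\nabla F(\bar{\bw}(k))\|^2$ that I will split via $L$-smoothness and Assumption~\ref{assumption-variance} into a consensus piece plus a $\varsigma^2$ piece.

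Next I would control the consensus error. Unrolling \eqref{eq:recursion} backwards from step $k$ yields
\[
\bW(k)=\bW(0)\,\pmb{\Phi}_{k:1}-\eta\sum_{s=0}^{k-1}\bG(s)\,\pmb{\Phi}_{k:s+1}.
\]
Under Assumptions~\ref{assumption-weight}–\ref{assumption-connectivity}, the argument of \cite{nedic2018network} gives $\|\pmb{\Phi}_{k:s}-\tfrac{1}{N}\mathbf{1}\mathbf{1}^\top\|\leq C q^{k-s}$ with the $C,q$ in the theorem statement. Subtracting the averaged recursion, squaring, and using Cauchy--Schwarz on the geometric weights $q^{k-s}$ gives
\[
\sum_{j=1}^N\mathbb{E}\|\bw_j(k)-\bar{\bw}(k)\|^2\leq \frac{\eta^2 C^2}{(1-q)^2}\sum_{s=0}^{k-1}q^{k-s}\,\mathbb{E}\|\bG(s)\|_F^2,
\]
and another application of Assumptions~\ref{assumption-gradient}--\ref{assumption-variance} plus $L$-smoothness bounds $\mathbb{E}\|\bG(s)\|_F^2$ by $N\sigma_L^2 + N\varsigma^2 + 2L^2\sum_j\mathbb{E}\|\bw_j(s)-\bar{\bw}(s)\|^2+2N\mathbb{E}\|\nabla F(\bar{\bw}(s))\|^2$.

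Finally I would sum the descent inequality over $k=0,\dots,K-1$ and plug in the consensus bound. After swapping the double sum $\sum_k\sum_s q^{k-s}(\cdot)$ and bounding the geometric tail by $\tfrac{1}{1-q}$, the consensus-error term on the right splits into (i) a piece proportional to $\eta^3 L^2 C^2/(1-q)^2\cdot\sum_k\mathbb{E}\|\nabla F(\bar{\bw}(k))\|^2$, which is absorbed into the $\tfrac{\eta}{2}\sum_k\mathbb{E}\|\nabla F(\bar{\bw}(k))\|^2$ term on the left exactly when $\eta$ satisfies the stated bound (this is where the awkward quadratic-in-$\eta$ constraint $\eta\leq\sqrt{(1-q)^2/(30C^2L^2N)+9N^4/16}-3N^2/4$ comes from), and (ii) pieces proportional to $\sigma_L^2$ and $\varsigma^2$ that become the second and third terms of \eqref{ineq:grad_bound}. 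Telescoping $F(\bar{\bw}(k))$ and using $F(\bar{\bw}(K))\geq F(\bw^*)$ produces the $6(F(\bar{\bw}_0)-F(\bw^*))/(\eta K)$ term, completing the proof.

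The main obstacle I anticipate is the consensus bound: because both the active set $\cN(k)$ and the waiting set $\cN_j(k)$ are data-driven and stochastic, the mixing rate $q$ has to be uniform over all realizations, which forces the use of the conservative $NB$-step connectivity argument rather than a sharper spectral estimate. The second subtlety is bookkeeping in the cross terms between the consensus error and $\|\nabla F(\bar{\bw}(k))\|^2$, since absorbing them without a bounded-gradient assumption requires careful use of $L$-smoothness to trade gradients for function-value differences; this is precisely what rules out a simpler-looking upper bound on $\eta$.
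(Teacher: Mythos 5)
Your proposal follows essentially the same route as the paper's proof: a descent inequality for the averaged iterate $\bar{\bw}(k)$ obtained from double stochasticity of $\bP(k)$ and $L$-smoothness, a consensus-error bound obtained by unrolling the recursion and invoking the Nedi\'c-style geometric mixing estimate $\bigl|\tfrac{1}{N}-\Phi_{k:s}(i,j)\bigr|\leq C q^{k-s}$ under Assumptions~\ref{assumption-weight}--\ref{assumption-connectivity}, a further bound on $\mathbb{E}\|\bG(s)\|_F^2$ in terms of $\sigma_L^2$, $\varsigma^2$, the consensus error, and $\|\nabla F(\bar{\bw}(s))\|^2$, and finally summation over $k$ with the gradient-norm piece absorbed into the left-hand side via the stated learning-rate condition. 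The plan is sound and matches the paper's argument step for step (up to minor bookkeeping of where the $\varsigma^2$ term first enters), so no further comparison is needed.
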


\begin{remark}
The right hand side of~\eqref{ineq:grad_bound} consists of two parts: (i) a vanishing term  ${6(F\left(\bar{\bw}(0)\right)-F(\bw^*))}/{\eta K}$ that goes to zero as $K$ increases; and (ii) a constant term $\frac{(9L+2)\eta}{3N}\sigma^2_L+\frac{2\eta}{N}\varsigma^2$ depending on problem instance parameters and independent of $K$. The vanishing term's decay rate matches that of typical DSGD methods, and the constant term converges to zero independent of $K$ as $N$ increases.
\end{remark}

The key to achieve the bound in~(\ref{ineq:grad_bound}) boils down to bound the gap between individual parameter $\bw_j(k)$ and the average $\bar{\bw}(k):={\sum_{j}\bw_j(k)}/{N}$, which is significantly affected by asynchronous updates in \eqref{eq:local_update}. In addition, existing state-of-the-art asynchronous method AD-PSGD \cite{lian2018asynchronous} assumes a predefined distribution of update frequency for all workers and the global dataset should be uniformly distributed to each worker, while neither is required by our \DyBW.  These make their approaches not directly applied to ours and hence necessitate different proof techniques.  In particular, to bound $\mathbb{E}\| \bw_j(k)-\bar{\bw}(k)\|^2$, AD-PSGD assumes the spectral gap and bounded staleness of each worker's local update,
 while we only need to leverage the minimum non-negative element of each consensus matrices (Assumption \ref{assumption-connectivity}). Further differentiating our proof is that we upper bound staleness periods in \DyBW by $N-1$, while existing works often assume a fixed bounded stateless period, which may not hold true in practice.

With Theorem~\ref{thm:gradient}, we immediately have the following convergence rate for \DyBW with a proper choice of learning rate:

\begin{corollary}\label{cor:linear-speedup}
{Let $\eta=\sqrt{{N}/{K}}$.  The convergence rate of \DyBW is $\mathcal{O}\left(\frac{1}{\sqrt{NK}}\right)$ when $K\geq 30C^2L^2N^2$.}
\end{corollary}
\begin{remark}
Our consensus-based decentralized optimization method with adaptive asynchronous updates still achieves a linear speedup $\mathcal{O}\left(\frac{1}{\sqrt{NK}}\right)$ with proper learning rates  as shown in Corollary~\ref{cor:linear-speedup}.  Although many works have achieved this convergence rate asymptotically, e.g., \cite{lian2017can} is the first to provide a theoretical analysis of decentralized SGD with a convergence rate of $\mathcal{O}(\frac{1}{\sqrt{NK}}+{\frac{1}{K}})$, these results are only for consensus-based method with synchronous updates. To our best knowledge, we are the first to reveal this result for fully decentralized learning with adaptive asynchronous updates, which is non-trivial  due to the large-scale distributed and heterogeneous nature of training data across workers.  An interesting observation is that the value of $C>1$ is monotonically increasing as $B$
increases. Therefore, the required iteration number $K$ will become larger if the bounded-connectivity time $B$ increases. Hence, reducing the value of $
B$ can lower the required number of iterations $K$. This key observation motivates our design of \DyBW in Section \ref{sec:selection}, which aims to jointly dynamically select the active workers per round and minimize the connectivity time $B$. This  resolves the staleness issue in conventional asynchronous algorithms \cite{luo2020prague, assran2020asynchronous, wang2022asynchronous}. 
\end{remark}

\section{Realization of \DyBW}\label{sec:selection}

In this section, we propose a practical algorithm to dynamically determine $\mathcal{N}(k)$ in Algorithm~\ref{alg:DSGD-AAU} (line 2) in a fully decentralized manner so that we can implement \DyBW in real-world systems. For abuse of name, we still call the resulting algorithm \DyBW, which is implemented and numerically evaluated in Section~\ref{sec:sim}.

{\textbf{Intuition.}} Observing that for the  parameters of different workers in the decentralized optimization framework to converge, each worker needs to frequently participate in the  parameter update, such that every worker's information diffuses to all other workers in the system \cite{nedic2009distributed,nedic2016stochastic}. This is equivalent to the fact that any two arbitrary workers exchange parameters directly or indirectly through a path as the graph dynamically changes over time. The frequency for establishing such paths is mainly affected by the computation speed of stragglers.  In synchronous updates, such paths are established once per ``synchronous iteration'', determined by the slowest workers (see Figure~\ref{fig:SyncUpdate}); while in fully asynchronous updates, such a path for any worker is established immediately in each ``asynchronous iteration'' when it completes its local updates (see Figure~\ref{fig:Asyncupdate}), suffering from stale information as aforementioned.  Instead,  we propose \DyBW to dynamically determine the subset of neighboring workers $\cN_j(k)$ for each worker $j\in[N]$ in each iteration $k$ to establish such paths for parameter updates.

\begin{algorithm}[t]
		 \textbf{Input:} $\cG=(\cN,\cE)$, iteration numbers $K$, initialized  $\{\bw_j(0)\}_{j=1}^N$, and empty set $\cP, \cV$.
		\begin{algorithmic}[1]
		\FOR{$k=0,1,...,K-1$}
		\STATE Randomly select a worker $j_k$ from the set $\cN$;
		\STATE Randomly sample a batch $\cC_{j_k}(k)$ from the local data set  $\cD_{j_k}$ and compute  gradient $g_{j_k}(\bw_{j_k}(k),\cC_{j_k}(k))$;
		\STATE Update $\tilde{\bw}_{j_k}(k)=\bw_{j_k}(k)-\eta g_{j_k}(\bw_{j_k}(k),\cC_{j_k}(k))$;
		\STATE Run $(\cN_{j_k}(k), \cP, \cV)\!\!\leftarrow\! \texttt{Pathsearch}(\cP,\cV,\cG,j_k)$;
		\STATE Update $\bw_{j_k}(k+1)=\sum_{i\in\cN_{j_k}(k)}\tilde{\bw}_i(k)P_{i,j_k}(k)$;
       \FOR{Any other worker $i_k$ finished the local gradient update $\tilde{\bw}_{i_k}(k)$ during \texttt{Pathsearch}}
       \STATE Update $\bw_{i_k}(k+1)=\sum_{i\in\cN_{i_k}(k)}\tilde{\bw}_i(k)P_{i,i_k}(k)$;
       \ENDFOR
		\STATE Reset $\cP=\phi$ and $\cV=\phi$ if  graph $\cG^\prime=(\cV,\cP)$ is strongly-connected with $\cV=\cN$.
		\ENDFOR
		\end{algorithmic}
  \caption{Implementation of \DyBW: logical view}
		\label{alg:Proposed_Algo1}
\end{algorithm}

{\textbf{Realization.}} In particular, for a given communication graph $\cG=\{\cN, \cE\}$, \DyBW establishes the  same strongly-connected graph \textit{at each worker's side in a fully decentralized manner} such that there exists at least one path for arbitrary two nodes/workers $i, j\in[N]$. We denote the set that contains all edges of such a strongly-connected graph  as $\cP$ and the set for  all corresponding vertices as $\cV$, which is essentially equal to $\cN$. Our key insight is that all edges in $\cP$ have been visited by \DyBW at least once, i.e., all nodes in $\cV$ share information with each other. Specifically, \DyBW establishes sub-graphs $\cG^\prime:=\{\cP,\cV\}$ to dynamically determine the number of workers involved in parameter updates at \textit{each ``asynchronous iteration''} in a fully decentralized manner,
 which requires each worker to store a local copy of $\cP$ and $\cV$, i.e., $\cP_j, \cV_j, \forall j\in[N]$, and then to reach a consensus on $\mathcal{P}$ and $\mathcal{V}$, i.e., $\cP=\cP_j, \cV=\cV_j, \forall j$ whenever any change occurs for arbitrary $\cP_j$ and $\cV_j$.
When a new edge $(i,j)$ is established satisfying $(i,j)\notin \cP_j, \cP_i$, and either $j\notin\cV_i$ or $i\notin\cV_j$, workers $i$ and $j$ broadcast the information $(i,j)$ to the entire network such that \textit{each worker will update her own $\mathcal{P}_{j^\prime}, \forall j^\prime\in[N]$ locally. } Specifically, for any worker $j^\prime$, whenever she receives any new edges and vertices from her neighbors in $\cN_{j^\prime}$, worker $j^\prime$ broadcasts this new information to her neighbors. This information sharing process continues until no new information comes.  Eventually, each worker's local set $\cP_j, \cV_j$ reaches a consensus, i.e., $\cP=\cP_j, \cV=\cV_j, \forall j$.
This procedure also avoids deadlock issues.
Once the last path $(i,j)$ is established, $\cG^\prime:=\{\cP, \cV\}$ is strongly connected with $\cV=\cN$. A detailed description on how to find new edges and vertices in $\cP$ and $\cV$ (i.e., the \texttt{Pathsearch}) is provided in the appendices.

In the following, we will directly use $\cG^\prime=(\cP, \cV)$
to describe \DyBW for simplicity, under the assumption that each worker has reached a consensus on $\cP_j$ and $\cV_j, \forall j\in [N]$.
Our \DyBW algorithm (summarized in Algorithm \ref{alg:Proposed_Algo1}) can be described in the following. We use a virtual counter $k$ to denote the iteration counter -- \textbf{\textit{every new edge}} is established no matter on which workers will increase $k$ by $1$. Let $j_k$ be the first worker which completes the local gradient computation in the $k$-th iteration.  At each iteration $k$, the worker $j_k$ performs a local SGD step (lines 2-4) followed by one step of the graph searching procedure to establish a new edge in set $\cP$ (line 5). The next operation in current iteration is the gossip-average of the local parameters of workers $j_k$ in the set $\cN_{j_k}(k)$
(line 6).
Note that if any other worker $i_k$ finishes local gradient update during the \texttt{Pathsearch} of the fast worker $j_k$, it will also run \texttt{Pathsearch} independently until $\cP$ and $\cV$ get updated. Hence, worker $i_k$ also conduct gossip-average of the local parameters in the set $\cN_{i_k}(k)$ (lines 7-9).
When a strongly-connected graph $\cG^\prime=(\cP,\cV)$ is established, reset $\cP$ and $\cV$ as empty sets (line 10).

{\textbf{\texttt{Pathsearch}.}
At each iteration $k$, we denote the worker which first completes the local parameter update as $j_k$. Worker $j_k$ keeps idle until any one of the following two situations occurs. The first case is that one of worker $j_k$'s neighbor $i\in\cN_{j_k}$ completes the local parameter update. If edge $(j_k,i)\in\cE\cap(j_k,i)\notin \cP$ and worker $j_k\notin\cV \bigcup i\notin\cV$, then store edge $(j_k,i)$ into $\cP$ and nodes $\{i, j_k\}$ into $\cV$. Once this is achieved, all workers move into the $(k+1)$-th iteration. The other case is that any other new edge $(i_1, j_1)$ between workers $i_1$ and $j_1$ other than $j_k$ is established such that edge $(i_1,j_1)\in\cE\cap(i_1,j_1)\notin \cP$ and node $j_1\notin\cV \bigcup i_1\notin\cV$. Then all workers move to the  $(k+1)$-th iteration. Intuitively, for both cases we guarantee that there is at least one new worker (with new updated parameters) involved in each iteration, no matter the new worker is a neighbor or for of worker $j_k$.
This process is repeated for each iteration, which dynamically determines the subset of the fastest workers $\cN(k)$. A detailed description of
\texttt{Pathsearch} is provided in Algorithm \ref{alg:Proposed_Algo2} in Appendix \ref{sec:pathsearch}.

\textbf{Example.} An illustrative example of \DyBW for 4 heterogeneous workers with a fully-connected network topology is presented in Figure \ref{fig:DyBW}. At the 1st iteration ($k=1$), worker $4$ first completes the local computation, it waits until the neighbor worker $1$ finishes the local computation. Then, workers $4$ and $1$ exchange the updates and  \DyBW stores nodes $1, 4$ into $\cV$, and edge $(1,4)$ into $\cP$. This terminates the 1st iteration. At the 3rd iteration ($k=3$), worker $4$ first completes the local computation, it waits until worker $1$ completes the local computation.  However, since nodes $1,4$ have been stored in $\cV$ and edge $(1,4)\in\cP$, worker $4$ keeps waiting until worker $2$ completes the local computation.  Workers $1,2,4$ exchange updated information, and \DyBW stores new worker $2$ in $\cV$ and $\cP=\{(1,4), (2,3), (1,2), (2,4)\}$, which ends the 3rd iteration.  Since there exists a path for any arbitrary two workers in $\cG^\prime:=\{\cP,\cV\}$ with $\cV=\cN$, \DyBW resets $\cV$ and $\cP$ as empty sets and move to the next iteration.}

\begin{figure}[t]
    \centering
       \includegraphics[width=0.45\textwidth]{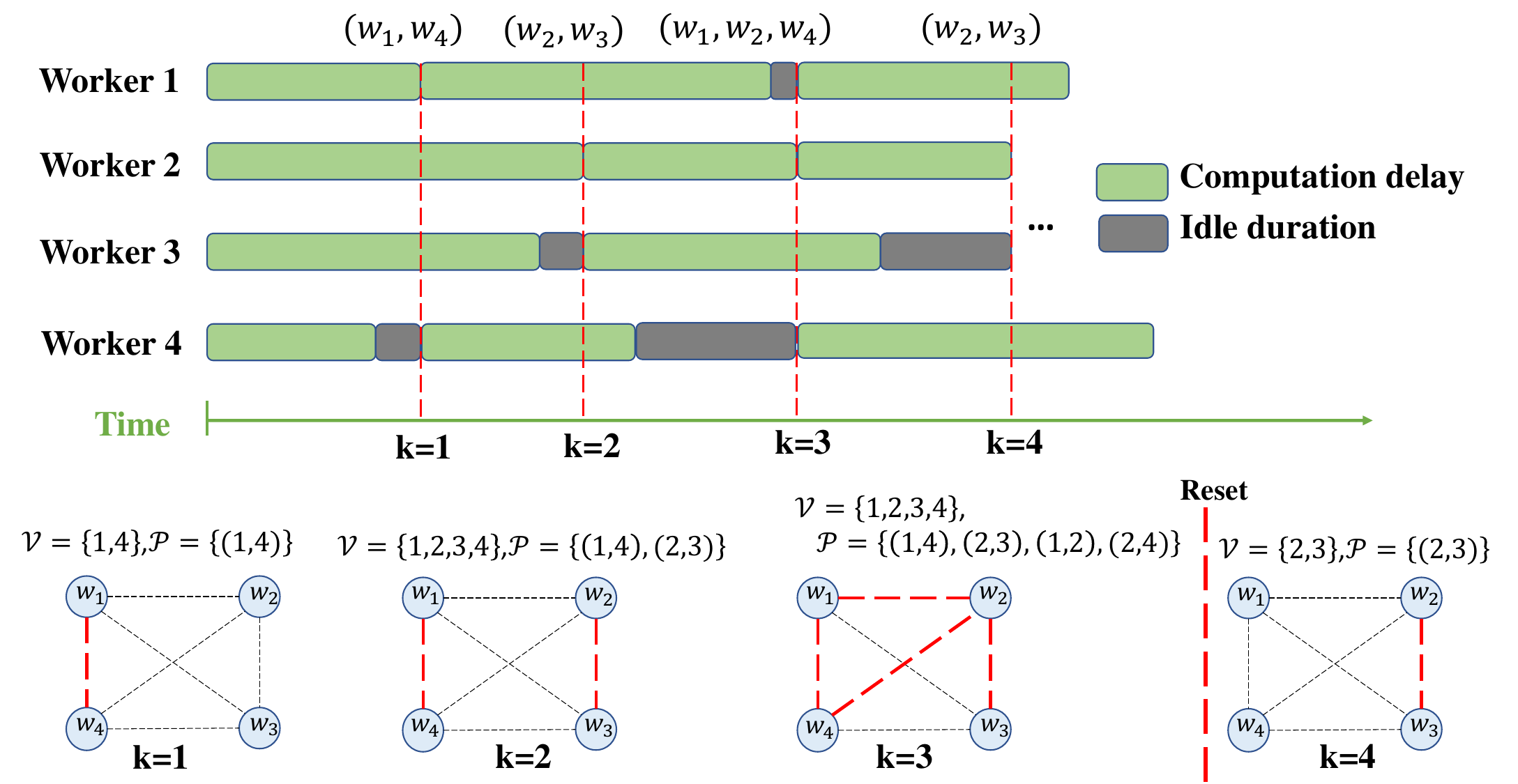}

\caption{\textit{An illustrative example of \DyBW for 4 heterogeneous workers with a fully-connected network topology. }}
	\vspace{-0.1in}
	\label{fig:DyBW}
    \end{figure}

\begin{remark}\label{remark: graph_consensus}
The procedures in Algorithm \ref{alg:Proposed_Algo1} can be executed by each worker independently, and thus enables a fully decentralized implementation. Need to mention that  ID sharing requires a much lower communication overhead  compared with parameters sharing among neighbors. Hence no extra overhead is introduced. The overall communication overhead for each worker is upper-bounded by $\cO(2NB)$ with $B$ being the maximum number of iterations for searching a strongly-connected graph $\cG^\prime:=\{\cP,\cV\}$, which is smaller than $N-1$.  In addition, \DyBW needs a memory size no more than $\cO(N)$ to store a local copy of $\cP$ and $\cV$ as aforementioned.
{ One of our key contributions is the Pathsearch procedure, which is essentially used to determine when a logical iteration ends. The Pathsearch procedure ends at each iteration only when a new link is established in $\mathcal{P}$.
This definition of an iteration in \DyBW is different from the one in AD-PSGD where a single gradient update on whichever worker is counted as one iteration. In \DyBW, an iteration involves a more complex process that includes a local SGD step, a graph searching procedure, and a gossip-average operation. Moreover, since Pathsearch guarantees at each iteration, a new worker must be involved in the parameter exchange, and all workers will participate at least once every $N-1$ iterations. This is also a key property of our proposed \DyBW.  }
\end{remark}

\begin{figure*}[t]
	\centering
	\includegraphics[width=0.8\textwidth]{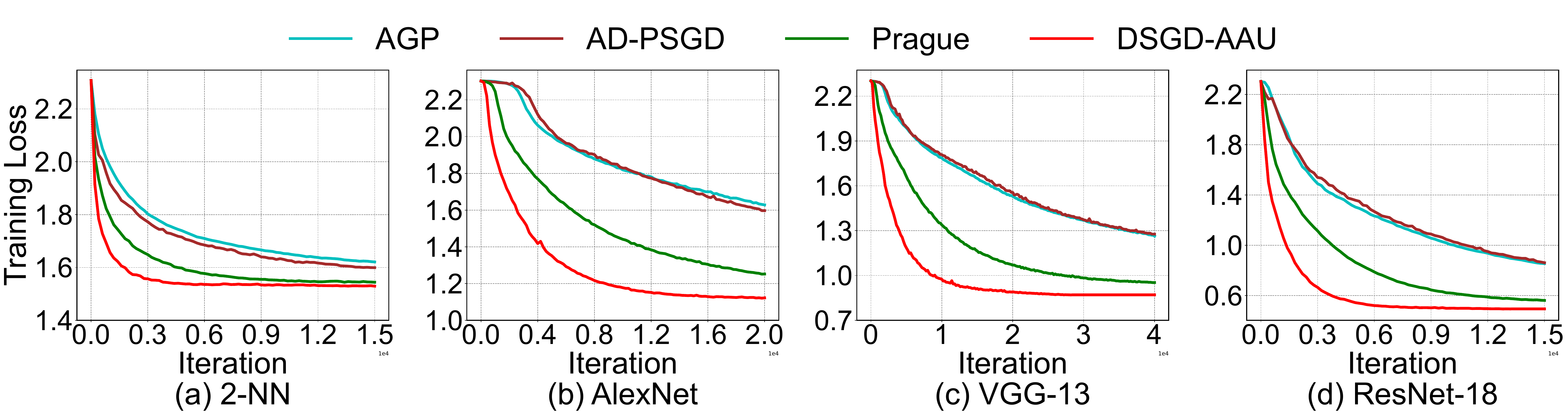}
	\vspace{-0.15in}
	\caption{Training loss w.r.t. \textit{iteration} for different models on non-i.i.d. CIFAR-10 with 128 workers.}
	\label{fig:loss-iteration-noniid-CIFAR10-128}
	\vspace{-0.15in}
\end{figure*}

\begin{figure*}[t]
	\centering
	\includegraphics[width=0.8\textwidth]{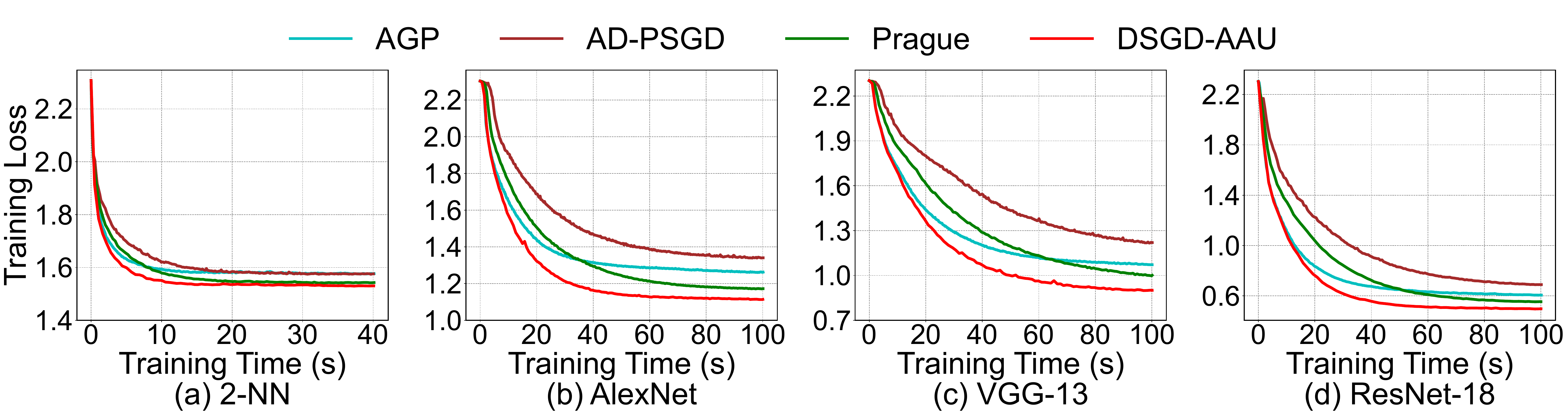}
	\vspace{-0.15in}
	\caption{Training loss w.r.t. \textit{time} for different models on non-i.i.d. CIFAR-10 with 128 workers.}
	\label{fig:loss-time-noniid-CIFAR10-128}
	\vspace{-0.1in}
\end{figure*}

\section{Numerical Results}\label{sec:sim}

We conduct extensive experiments to validate our model and theoretical results.  We implement \DyBW in PyTorch \cite{paszke2017automatic} on Python 3 with three NVIDIA RTX A6000 GPUs using the Network File System (NFS) and MPI backends, where files are shared among workers through NFS and communications between workers are via MPI. We relegate some experimental details, parameter settings and a comprehensive set of results to Appendix \ref{sec:sim-app}.

\textbf{Baseline.} We compare \DyBW with  three state-of-the-art methods: AD-PSGD \cite{lian2018asynchronous}, Prague \cite{luo2020prague} and AGP \cite{assran2020asynchronous}, where workers are randomly selected for parameter averaging in AD-PSGD and Prague.  Hence both suffer from stragglers in the optimization problem since they ignore slower workers.

\textbf{Dataset and Model.} We consider four representative models: a fully-connected neural network with 2 hidden layers (2-NN), AlexNet \cite{krizhevsky2012imagenet}, VGG-13 \cite{simonyan2015very} and ResNet-18 \cite{he2016deep} for classification tasks using CIFAR-10 \cite{krizhevsky2009learning}, MNIST \cite{lecun1998gradient}, and Tiny-ImageNet \cite{russakovsky2015imagenet} datasets, focusing on non-i.i.d. versions.  For example, we use the idea in \cite{mcmahan2017communication} to obtain non-i.i.d. CIFAR-10. 
We also investigate the task of next-character prediction on the dataset of \textit{The Complete Works of William Shakespeare} \cite{mcmahan2017communication} (Shakespeare) with an LSTM language model \cite{kim2016character}.
We consider a network with $32$, $64$, $128$, $256$ workers and randomly generate a connected graph for evaluation.  

{ In real systems, there are several reasons leading to the existing of stragglers, e.g., heterogeneous hardware, hardware failure, imbalanced data distributions among tasks and different OS effects, resource contention, etc. Since we run our experiments of different number of workers in one sever as in many other works, e.g., Li et al. \citep{li2020taming,li2020breaking} and motivated by the idea of AD-PSGD \citep{lian2018asynchronous} and Cipar et al. \citep{cipar2013solving}, we randomly select workers as stragglers in each iteration.  If a worker is selected to be a straggler, then it will sleep for some time in the iteration (e.g., the sleep time could be 6x of the average one local computation time).  We also investigate the impact of the sleep time and the percentage of stragglers in our ablation study, see  Appendix \ref{sec:sim-app}.  
Such techniques have been widely used in the literature such as AD-PSGD \citep{lian2018asynchronous}, Prague \citep{luo2020prague} and Li et al. \citep{li2020taming,li2020breaking}.
}

\begin{table}[t]
\centering
\scalebox{0.9}{\begin{tabular}{|c|c|c|c|c|}
\hline
& AGP & AD-PSGD & Prague &\DyBW\\
\hline
2-NN      & $43.87\pm 0.15$ & $43.55\pm 0.13$ & $44.51\pm 0.19$ & $\textbf{45.43}\pm 0.16$ \\ \hline
AlexNet   & $52.85\pm 0.11$ & $49.54\pm 0.16$ & $56.32\pm 0.17$ & $\textbf{58.61}\pm 0.12$ \\ \hline
VGG-13    & $59.41\pm 0.15$ & $56.30\pm 0.19$ & $63.52\pm 0.17$ & $\textbf{67.14}\pm 0.14$ \\ \hline
ResNet-18 & $76.25\pm 0.11$ & $73.72\pm 0.12$ & $77.36\pm 0.13$ & $\textbf{79.80}\pm 0.18$ \\ \hline
\end{tabular}}
\caption{Test accuracy for different models on non-i.i.d. CIFAR-10 with 128 workers.}
\label{tbl:test-accuracy-comparison-all32-CIFAR10}
\vspace{-0.3in}
\end{table}

The loss function is the cross-entropy one.
The learning rate is a critical hyperparameter in distributed ML optimization problems.  A proper value is important; however, it is in general hard to find the optimal value.  With the consideration of adaptive asynchronous updates, it is reasonable to adaptively set the learning rate to speed up the convergence process in experiments \cite{chen2016revisiting,lian2018asynchronous,xu2020dynamic,assran2020asynchronous}.  Hence, we choose $\eta(k)=\eta_0\cdot\delta^k$, where $\eta_0=0.1$ and $\delta=0.95.$  Finally, batch size is often limited by the memory and computational resources available at each worker, or determined by generalization performance of final model \cite{hoffer2017train}.  We test the impact of batch size using these datasets and find that $128$ is a proper value.  
For ease of readability, in this section, we only present the numerical results on non-i.i.d. partitoned  CIFAR-10, and relegate the corresponding results on other datasets to Appendix \ref{sec:sim-app}.  

\textbf{Convergence.} Figure~\ref{fig:loss-iteration-noniid-CIFAR10-128} shows the training loss for all algorithms w.r.t. iterations, which are evaluated for different models on non-i.i.d. CIFAR-10 dataset with $128$ workers.  The corresponding test accuracy is presented in Table~\ref{tbl:test-accuracy-comparison-all32-CIFAR10}.  We observe that \DyBW significantly outperforms AD-PSGD and AGP, and performs better than Prague when the total number of iterations is large enough.   Its advantage is especially pronounced with a limited number of iterations or a limited training time (see Figure~\ref{fig:loss-time-noniid-CIFAR10-128}), e.g., within $6,000$ iterations, \DyBW achieves much smaller loss (resp. higher test accuracy) than Prague for ResNet-18. Finally, note that the test accuracy for non-i.i.d. CIFAR-10 in Table~\ref{tbl:test-accuracy-comparison-all32-CIFAR10} is relatively lower compared to that for i.i.d. dataset, a phenomenon widely observed in the literature \cite{yang2021achieving,zhao2018federated,yeganeh2020inverse,wang2020optimizing,hsieh2020non}.
Similar observations can be made with $32$, $64$, $256$ workers across other datasets and models.  In particular, we summarize the test accuracy for all algorithms for ResNet-18 on non-i.i.d. CIFAR-10 trained for $50$ seconds with different number of workers in Table~\ref{tbl:test-accuracy-comparison-cifar10}. \DyBW also achieves a higher test accuracy compared to AD-PSGD, Prague and AGP.

\begin{table}[t]
\centering
\scalebox{0.8}{\begin{tabular}{|c|c|c|c|c|}
\hline
\#&AGP & AD-PSGD & Prague &\DyBW\\
\hline
32  & $67.19\%\pm 0.18\%$ & $55.67\%\pm 0.35\%$ & $60.22\%\pm 0.24\%$ & $\textbf{71.56\%}\pm 0.32\%$ \\ \hline
64  & $73.01\%\pm 0.11\%$ & $63.09\%\pm 0.18\%$ & $69.65\%\pm 0.24\%$ & $\textbf{77.34\%}\pm 0.23\%$ \\ \hline
128 & $75.18\%\pm 0.13\%$ & $68.64\%\pm 0.15\%$ & $74.72\%\pm 0.12\%$ & $\textbf{78.58\%}\pm 0.17\%$ \\ \hline
256 & $75.32\%\pm 0.14\%$ & $73.50\%\pm 0.12\%$ & $77.14\%\pm 0.18\%$ & $\textbf{78.76\%}\pm 0.15\%$ \\ \hline
\end{tabular}}
\caption{Test accuracy for ResNet-18 on non-i.i.d. CIFAR-10 trained for $50$ seconds with different number of workers.}
\label{tbl:test-accuracy-comparison-cifar10}
\vspace{-0.3in}
\end{table}

\textbf{Speedup and Communication.}  The convergence in wall-clock time of ResNet-18 on non-i.i.d. CIFAR-10 dataset with $128$ workers is presented in Figure~\ref{fig:loss-time-noniid-CIFAR10-128}. Again, we observe that \DyBW converges faster, and importantly, \DyBW achieves a higher accuracy for a given training time.  The speedup of different algorithms for ResNet-18 on non-i.i.d. CIFAR-10 with respect to different number of workers is presented in Figure~\ref{fig:comparison-speedup-communication-CIFAR10-different-workers}(a). We compute the speedup of different algorithms with respect to the DSGD with full worker updates when $65\%$ accuracy is achieved for ResNet-18 model\footnote{{ In Figure 5, we report the results using ResNet-18 on non-i.i.d. CIFAR-10. The 
highest test accuracy that “DSGD with full worker updates” can achieve is a little above 65\% (note that we consider the 
non-i.i.d. settings). Since “DSGD with full worker updates” is the baseline to compute the speedup for all algorihtms 
we compare, including AGP, AD-PSGD, Prague and our \DyBW, we choose the 65$\%$ as the target test accuracy. }}.  \DyBW consistently outperforms baseline methods and achieves the best speedup at no cost of additional communication (i.e., network transmission for training) in the system as shown in Figure~\ref{fig:comparison-speedup-communication-CIFAR10-different-workers}(b).  Similar observations can be made for other models with different datasets, i.e., \DyBW is robust across different models and datasets to achieve the fastest convergence time and the best speedup while maintaining comparable or even better communication efficiency.  { Need to mention that  our convergence time in the numerical evaluation considers the total time of the whole process,
which includes both computation and communication time. since we mainly focus on computational stragglers in this paper, the communication time are dominated by computational time in our considered setting. A more detailed discussion can be found in Appendix \ref{sec:assumption-selection-app}.}

   \begin{figure}
    \centering
       \includegraphics[width=0.4\textwidth]{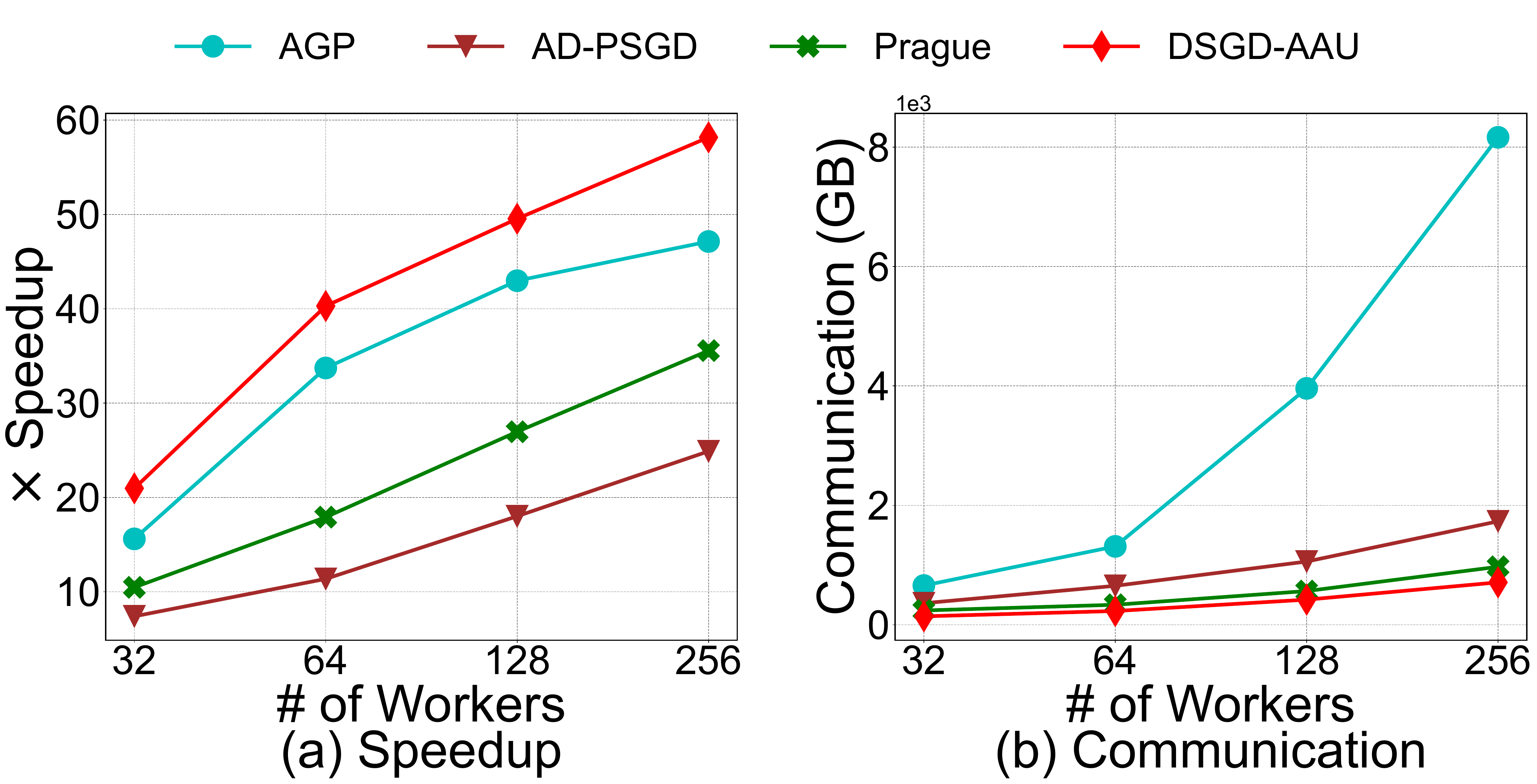}
       \vspace{-0.1in}
\caption{Speedup and communication for ResNet-18 on non-i.i.d. CIFAR-10 with different number of workers.}
	\vspace{-0.1in}
	\label{fig:comparison-speedup-communication-CIFAR10-different-workers}
    \end{figure}

\section{Related Work}\label{sec:related}
Our algorithm belongs to the class of consensus/gossip algorithms, or fully decentralized/peer-to-peer learning algorithms \cite{kairouz2021advances,imteaj2021survey,imteaj2022federated}.  There is a rich literature of consensus algorithms based on the theory of \cite{tsitsiklis1986distributed} in various fields including \cite{boyd2006randomized,nedic2009distributed,johansson2010randomized,ram2010distributed,boyd2011distributed,colin2016gossip,scaman2017optimal,scaman2018optimal,tang2018communication,tang2018d,li2019consensus,neglia2020decentralized,xiong2024deprl}.  However, most results are for full worker participation with synchronous updates in each iteration, which can be communication expensive.
Efficient control of adaptive asynchronous updates in consensus-based distributed optimization has received little attention, which arise in problems of practical interests. The few existing works address this issue through a stale-synchronous model \cite{chen2016revisiting,ho2013more} with no convergence guarantee, and the number of fastest neighbor workers is configured manually through preliminary experiments before the start of training process.

Asynchronous SGD \cite{agarwal2011distributed} breaks the synchronization in synchronous SGD by allowing workers to use stale weights to compute gradients, which significantly reduces the idel time for synchronization. \cite{lian2018asynchronous} proposed the first asynchronous decentralized parallel SGD algorithm AD-PSGD which achieves linear speedup with the number of workers. The limitation of AD-PSGD is that the network topology needs be bipartite to avoid deadlock issue. \cite{luo2020prague} proposed Prague based on a slight modification of AD-PSGD, which resolves the deadlock issue and further reduces the convergence times. \cite{assran2020asynchronous} proposed asynchronous gradient push to deal with directed graph with non-doubly stochastic consensus matrix under the assumption that the loss function is convex. A similar gradient-push technique was used for  decentralized online learning in \cite{jiang2021asynchronous}. For non-convex stochastic optimization on directed graphs, \cite{kungurtsev2021decentralized} proposed an asynchronous algorithm  combining stochastic
gradients with tracking in an asynchronous push-sum framework
and obtains a sublinear convergence rate. \cite{nadiradze2021asynchronous} proposed an asynchronous decentralized SGD with quantized local updates to reduce communication overhead. \cite{wang2022asynchronous} proposed a predicting Clipping asynchronous
SGD where the  predicting step leverages the gradient
prediction using Taylor expansion to reduce the staleness of the
outdated weights while the clipping step selectively drops the
outdated weights to alleviate their negative effects. More related work can be found in \cite{assran2020advances}.  

\textbf{\textit{Further discussions on differences with most related works.}} In particular, we compare our \DyBW with two state-of-the-arts AD-PSGD \citep{lian2018asynchronous} and Prague \citep{luo2020prague}.  Similar to traditional training such as PS and All-Reduce, in one iteration, AD-PSGD computes gradients first, and then performs synchronization; the difference is that it only synchronizes with a randomly selected neighbor, instead of all other workers. However, a conflict will occur in AD-PSGD when two workers simultaneously select one worker for synchronization. In order to keep the atomic property of weight updating, the two operations need to be serialized, which slows down the whole process. To resolve the conflict issue and fasten the synchronization, Prague proposed a partial All-Reduce scheme which randomly generates a group for each node, and the node only communicates with other nodes in the same group. However, the performance of Prague heavily depends on a so-called ``Group Generator'', i.e., when one worker completes its local update, it would inquire the Group Generator to get its group number, and then it has to wait all members in the corresponding group to complete their local updates before implementing the communication process (sharing local updates with the group members).  In particular, three methods were proposed in \citep{luo2020prague} including an offline static method, a randomized method and a ``smart'' method using a heuristic threshold-based rule.  However, the method of how to adjust the threshold was not discussed.

Need to mention that there is no specific rule to differentiate stragglers or non-stragglers in AD-PSGD and Prague.  As a result, both are still affected by stragglers because the randomly selection (even through a heuristic threshold-based rule) of communicating workers has the chance of taking the stragglers into account, which eventually slow down the training process.  In contrast, our \DyBW (Algorithm 1 in the main paper) dynamically determines the number of fastest neighbor workers for each worker and differentiate stragglers iteration-by-iteration based on the running time (\texttt{Pathsearch} in Algorithm \ref{alg:Proposed_Algo2}),  
such that it is capable of mitigating the effect from stragglers at each iteration. From \DyBW (Algorithm 1 in the main paper), 
we can anticipate that our algorithm requires fewer iterations for convergence compared with Prague and AD-PSGD because the average number of participating workers in our algorithm is larger than that of the benchmarks. This is verified by the numerical results.

\textbf{\textit{All-Reduce and the parameter-server paradigm (Federated learning).}} 
{In addition, a similar idea of adaptive asynchronous update has been widely used in DML systems}, there exist significant challenges.  Previous works mostly focused on the PS or the All-Reduce models, and most of existing algorithms are network topology-constrained and are hard to be generalized and implemented in arbitrary network topologies as our proposed \DyBW does.  Moreover, most algorithms including in Li et al. \citep{li2020taming,li2020breaking} only select the fastest few workers for the parameter averaging process, where a lot of ``normal'' workers are ignored to drag down the whole update process.  Furthermore, another line of works using coding techniques such as \citep{karakus2017straggler,charalambides2020weighted,glasgow2021approximate} aim to recover the stragglers’ gradient information by leveraging gradient coding techniques from information theory perspective.  However, same data needs to be sent to multiple nodes to enable gradient encoding in these works, while the dataset for each node is disjoint in our model.  Hence each node in our model has non-i.i.d. distinct dataset, while cannot be applied in \citep{karakus2017straggler,charalambides2020weighted,glasgow2021approximate}.

A more relevant line of research on distributed learning is the All-Reduce \citep{li2020taming,li2020breaking}, or the PS \citep{karakus2017straggler,charalambides2020weighted,glasgow2021approximate,yan2023criticalfl}.  In particular, the popular federated learning \citep{mcmahan2017communication} belongs to the PS paradigm.  We refer interested readers to \citep{kairouz2021advances} for a complete review.   Most existing work is based on the centralized framework through synchronous communication scheme which often leads to communication congestion in the server.  Alternatively, the PS can operate asynchronously, updating parameter vector immediately after it receives updates from a single worker.  Although such an asynchronous update manner can increase system throughput, some workers may still operate on stale versions of the parameter vector and in some cases, even preventing convergence to the optimal model \citep{dai2018toward}. More recently, several PS-based algorithms considering stale-synchronous model via adaptive worker participation have gained increased attention.  See \citep{chen2016revisiting,teng2018bayesian,luo2019hop,xu2020dynamic,yang2021achieving} for details, among which \citep{xu2020dynamic} is perhaps the only work proposing to adaptively determine the number of fastest neighbor workers.  However, most of them focus on showing empirical results without convergence guarantees and without a rigorous analysis of how selection skew affects convergence speed.  Moreover, none of them exploit the communication among workers, since they assume either a central server exists to coordinate workers (e.g., PS model) or all workers have an identical role (e.g., All-Reduce).  Generalizing the PS schemes to our fully decentralized framework is a non-trivial and open problem because of the large scale distributed and heterogeneous nature of the training data.

\textbf{\textit{Communication stragglers.}}
Although we focus on stragglers caused by heterogeneous computation capabilities among workers as in previous work in \citep{ananthanarayanan2013effective,ho2013more,lian2017can,lian2018asynchronous,luo2020prague},there is another line of works focusing on another key type of stragglers, i.e., the communication stragglers 
 \cite{wang2019matcha, wang2021cooperative, wang2020overlap}.  In particular, the celebrated  ``MATCHA'' algorithm was proposed  in \cite{wang2019matcha} 
to mitigate the communication stragglers. Note that although MATCHA also has a link-control procedure, it differentiates from 
our proposed \DyBW algorithm fundamentally. Specifically, \DyBW adaptively selects the fastest computational workers for each iteration to mitigate the effect of slower computational workers, while MATCHA optimizes runtime by tuning the frequency of inter-node communication, potentially reducing the impact of slower communication workers, which leads \DyBW to be an asynchronous algorithm and MACTHA to be a synchronous one.  Furthermore, in contrast to MATCHA, our \DyBW leverages the bounded-connectivity time to theoretically show the convergence rather than using the conventional spectral radius as in \cite{lian2017can, lian2018asynchronous}.    These differences highlight that \DyBW and MATCHA are designed to address distinct challenges in the field.

\section{Conclusion}\label{sec:con}

In this paper, we considered to mitigate the effect of stragglers in fully decentralized learning via adaptive asynchronous updates.  We proposed \DyBW and proved that \DyBW achieves a linear speedup for convergence, and dramatically reduces the convergence time compared to conventional methods. Finally, we provided extensive empirical experiments to validate our theoretical results.

\bibliographystyle{ACM-Reference-Format}
\bibliography{refs,refs_app}


\begin{thebibliography}{74}


\ifx \showCODEN    \undefined \def \showCODEN     #1{\unskip}     \fi
\ifx \showDOI      \undefined \def \showDOI       #1{#1}\fi
\ifx \showISBNx    \undefined \def \showISBNx     #1{\unskip}     \fi
\ifx \showISBNxiii \undefined \def \showISBNxiii  #1{\unskip}     \fi
\ifx \showISSN     \undefined \def \showISSN      #1{\unskip}     \fi
\ifx \showLCCN     \undefined \def \showLCCN      #1{\unskip}     \fi
\ifx \shownote     \undefined \def \shownote      #1{#1}          \fi
\ifx \showarticletitle \undefined \def \showarticletitle #1{#1}   \fi
\ifx \showURL      \undefined \def \showURL       {\relax}        \fi
\providecommand\bibfield[2]{#2}
\providecommand\bibinfo[2]{#2}
\providecommand\natexlab[1]{#1}
\providecommand\showeprint[2][]{arXiv:#2}

\bibitem[Agarwal and Duchi(2011)]%
        {agarwal2011distributed}
\bibfield{author}{\bibinfo{person}{Alekh Agarwal} {and} \bibinfo{person}{John~C
  Duchi}.} \bibinfo{year}{2011}\natexlab{}.
\newblock \showarticletitle{Distributed delayed stochastic optimization}.
\newblock \bibinfo{journal}{\emph{Advances in neural information processing
  systems}}  \bibinfo{volume}{24} (\bibinfo{year}{2011}).
\newblock


\bibitem[Ananthanarayanan et~al\mbox{.}(2013)]%
        {ananthanarayanan2013effective}
\bibfield{author}{\bibinfo{person}{Ganesh Ananthanarayanan},
  \bibinfo{person}{Ali Ghodsi}, \bibinfo{person}{Scott Shenker}, {and}
  \bibinfo{person}{Ion Stoica}.} \bibinfo{year}{2013}\natexlab{}.
\newblock \showarticletitle{{Effective Straggler Mitigation: Attack of the
  Clones}}. In \bibinfo{booktitle}{\emph{Proc. of USENIX NSDI}}.
\newblock


\bibitem[Assran et~al\mbox{.}(2020)]%
        {assran2020advances}
\bibfield{author}{\bibinfo{person}{Mahmoud Assran}, \bibinfo{person}{Arda
  Aytekin}, \bibinfo{person}{Hamid~Reza Feyzmahdavian}, \bibinfo{person}{Mikael
  Johansson}, {and} \bibinfo{person}{Michael~G Rabbat}.}
  \bibinfo{year}{2020}\natexlab{}.
\newblock \showarticletitle{Advances in asynchronous parallel and distributed
  optimization}.
\newblock \bibinfo{journal}{\emph{Proc. IEEE}} \bibinfo{volume}{108},
  \bibinfo{number}{11} (\bibinfo{year}{2020}), \bibinfo{pages}{2013--2031}.
\newblock


\bibitem[Assran et~al\mbox{.}(2019)]%
        {assran2019stochastic}
\bibfield{author}{\bibinfo{person}{Mahmoud Assran}, \bibinfo{person}{Nicolas
  Loizou}, \bibinfo{person}{Nicolas Ballas}, {and} \bibinfo{person}{Mike
  Rabbat}.} \bibinfo{year}{2019}\natexlab{}.
\newblock \showarticletitle{Stochastic gradient push for distributed deep
  learning}. In \bibinfo{booktitle}{\emph{International Conference on Machine
  Learning}}. PMLR, \bibinfo{pages}{344--353}.
\newblock


\bibitem[Assran and Rabbat(2020)]%
        {assran2020asynchronous}
\bibfield{author}{\bibinfo{person}{Mahmoud~S Assran} {and}
  \bibinfo{person}{Michael~G Rabbat}.} \bibinfo{year}{2020}\natexlab{}.
\newblock \showarticletitle{Asynchronous gradient push}.
\newblock \bibinfo{journal}{\emph{IEEE Trans. Automat. Control}}
  \bibinfo{volume}{66}, \bibinfo{number}{1} (\bibinfo{year}{2020}),
  \bibinfo{pages}{168--183}.
\newblock


\bibitem[Bekkerman et~al\mbox{.}(2011)]%
        {bekkerman2011scaling}
\bibfield{author}{\bibinfo{person}{Ron Bekkerman}, \bibinfo{person}{Mikhail
  Bilenko}, {and} \bibinfo{person}{John Langford}.}
  \bibinfo{year}{2011}\natexlab{}.
\newblock \bibinfo{booktitle}{\emph{{Scaling Up Machine Learning: Parallel and
  Distributed Approaches}}}.
\newblock \bibinfo{publisher}{Cambridge University Press}.
\newblock


\bibitem[Bottou et~al\mbox{.}(2018)]%
        {bottou2018optimization}
\bibfield{author}{\bibinfo{person}{L{\'e}on Bottou}, \bibinfo{person}{Frank~E
  Curtis}, {and} \bibinfo{person}{Jorge Nocedal}.}
  \bibinfo{year}{2018}\natexlab{}.
\newblock \showarticletitle{{Optimization Methods for Large-Scale Machine
  Learning}}.
\newblock \bibinfo{journal}{\emph{SIAM Rev.}} \bibinfo{volume}{60},
  \bibinfo{number}{2} (\bibinfo{year}{2018}), \bibinfo{pages}{223--311}.
\newblock


\bibitem[Boyd et~al\mbox{.}(2006)]%
        {boyd2006randomized}
\bibfield{author}{\bibinfo{person}{Stephen Boyd}, \bibinfo{person}{Arpita
  Ghosh}, \bibinfo{person}{Balaji Prabhakar}, {and} \bibinfo{person}{Devavrat
  Shah}.} \bibinfo{year}{2006}\natexlab{}.
\newblock \showarticletitle{{Randomized Gossip Algorithms}}.
\newblock \bibinfo{journal}{\emph{IEEE Transactions on Information Theory}}
  \bibinfo{volume}{52}, \bibinfo{number}{6} (\bibinfo{year}{2006}),
  \bibinfo{pages}{2508--2530}.
\newblock


\bibitem[Boyd et~al\mbox{.}(2011)]%
        {boyd2011distributed}
\bibfield{author}{\bibinfo{person}{Stephen Boyd}, \bibinfo{person}{Neal
  Parikh}, {and} \bibinfo{person}{Eric Chu}.} \bibinfo{year}{2011}\natexlab{}.
\newblock \bibinfo{booktitle}{\emph{{Distributed Optimization and Statistical
  Learning via the Alternating Direction Method of Multipliers}}}.
\newblock \bibinfo{publisher}{Now Publishers Inc}.
\newblock


\bibitem[Bubeck et~al\mbox{.}(2015)]%
        {bubeck2015convex}
\bibfield{author}{\bibinfo{person}{S{\'e}bastien Bubeck} {et~al\mbox{.}}}
  \bibinfo{year}{2015}\natexlab{}.
\newblock \showarticletitle{{Convex Optimization: Algorithms and Complexity}}.
\newblock \bibinfo{journal}{\emph{Foundations and Trends{\textregistered} in
  Machine Learning}} \bibinfo{volume}{8}, \bibinfo{number}{3-4}
  (\bibinfo{year}{2015}), \bibinfo{pages}{231--357}.
\newblock


\bibitem[Charalambides et~al\mbox{.}(2020)]%
        {charalambides2020weighted}
\bibfield{author}{\bibinfo{person}{Neophytos Charalambides},
  \bibinfo{person}{Mert Pilanci}, {and} \bibinfo{person}{Alfred~O Hero}.}
  \bibinfo{year}{2020}\natexlab{}.
\newblock \showarticletitle{{Weighted Gradient Coding with Leverage Score
  Sampling}}. In \bibinfo{booktitle}{\emph{Proc. of IEEE ICASSP}}.
\newblock


\bibitem[Chen et~al\mbox{.}(2016)]%
        {chen2016revisiting}
\bibfield{author}{\bibinfo{person}{Jianmin Chen}, \bibinfo{person}{Xinghao
  Pan}, \bibinfo{person}{Rajat Monga}, \bibinfo{person}{Samy Bengio}, {and}
  \bibinfo{person}{Rafal Jozefowicz}.} \bibinfo{year}{2016}\natexlab{}.
\newblock \showarticletitle{{Revisiting Distributed Synchronous SGD}}.
\newblock \bibinfo{journal}{\emph{arXiv preprint arXiv:1604.00981}}
  (\bibinfo{year}{2016}).
\newblock


\bibitem[Cipar et~al\mbox{.}(2013)]%
        {cipar2013solving}
\bibfield{author}{\bibinfo{person}{James Cipar}, \bibinfo{person}{Qirong Ho},
  \bibinfo{person}{Jin~Kyu Kim}, \bibinfo{person}{Seunghak Lee},
  \bibinfo{person}{Gregory~R Ganger}, \bibinfo{person}{Garth Gibson},
  \bibinfo{person}{Kimberly Keeton}, {and} \bibinfo{person}{Eric Xing}.}
  \bibinfo{year}{2013}\natexlab{}.
\newblock \showarticletitle{{Solving the Straggler Problem with Bounded
  Staleness}}. In \bibinfo{booktitle}{\emph{Proc. of HotOS}}.
\newblock


\bibitem[Colin et~al\mbox{.}(2016)]%
        {colin2016gossip}
\bibfield{author}{\bibinfo{person}{Igor Colin}, \bibinfo{person}{Aur{\'e}lien
  Bellet}, \bibinfo{person}{Joseph Salmon}, {and} \bibinfo{person}{St{\'e}phan
  Cl{\'e}men{\c{c}}on}.} \bibinfo{year}{2016}\natexlab{}.
\newblock \showarticletitle{{Gossip Dual Averaging for Decentralized
  Optimization of Pairwise Functions}}. In \bibinfo{booktitle}{\emph{Proc. of
  ICML}}.
\newblock


\bibitem[Dai et~al\mbox{.}(2018)]%
        {dai2018toward}
\bibfield{author}{\bibinfo{person}{Wei Dai}, \bibinfo{person}{Yi Zhou},
  \bibinfo{person}{Nanqing Dong}, \bibinfo{person}{Hao Zhang}, {and}
  \bibinfo{person}{Eric Xing}.} \bibinfo{year}{2018}\natexlab{}.
\newblock \showarticletitle{{Toward Understanding the Impact of Staleness in
  Distributed Machine Learning}}. In \bibinfo{booktitle}{\emph{Proc. of ICLR}}.
\newblock


\bibitem[Dean et~al\mbox{.}(2012)]%
        {dean2012large}
\bibfield{author}{\bibinfo{person}{Jeffrey Dean}, \bibinfo{person}{Greg~S
  Corrado}, \bibinfo{person}{Rajat Monga}, \bibinfo{person}{Kai Chen},
  \bibinfo{person}{Matthieu Devin}, \bibinfo{person}{Quoc~V Le},
  \bibinfo{person}{Mark~Z Mao}, \bibinfo{person}{Marc’Aurelio Ranzato},
  \bibinfo{person}{Andrew Senior}, \bibinfo{person}{Paul Tucker},
  {et~al\mbox{.}}} \bibinfo{year}{2012}\natexlab{}.
\newblock \showarticletitle{{Large Scale Distributed Deep Networks}}. In
  \bibinfo{booktitle}{\emph{Proc. of NIPS}}.
\newblock


\bibitem[Gibiansky(2017)]%
        {gibiansky17allreduce}
\bibfield{author}{\bibinfo{person}{Andrew Gibiansky}.}
  \bibinfo{year}{2017}\natexlab{}.
\newblock
  \bibinfo{journal}{\emph{\url{https://github.com/baidu-research/baidu-allreduce}}}
  (\bibinfo{year}{2017}).
\newblock


\bibitem[Glasgow and Wootters(2021)]%
        {glasgow2021approximate}
\bibfield{author}{\bibinfo{person}{Margalit Glasgow} {and}
  \bibinfo{person}{Mary Wootters}.} \bibinfo{year}{2021}\natexlab{}.
\newblock \showarticletitle{{Approximate Gradient Coding with Optimal
  Decoding}}.
\newblock \bibinfo{journal}{\emph{IEEE Journal on Selected Areas in Information
  Theory}} (\bibinfo{year}{2021}).
\newblock


\bibitem[Goyal et~al\mbox{.}(2017)]%
        {goyal2017accurate}
\bibfield{author}{\bibinfo{person}{Priya Goyal}, \bibinfo{person}{Piotr
  Doll{\'a}r}, \bibinfo{person}{Ross Girshick}, \bibinfo{person}{Pieter
  Noordhuis}, \bibinfo{person}{Lukasz Wesolowski}, \bibinfo{person}{Aapo
  Kyrola}, \bibinfo{person}{Andrew Tulloch}, \bibinfo{person}{Yangqing Jia},
  {and} \bibinfo{person}{Kaiming He}.} \bibinfo{year}{2017}\natexlab{}.
\newblock \showarticletitle{{Accurate, Large Minibatch SGD: Training Imagenet
  in 1 hour}}.
\newblock \bibinfo{journal}{\emph{arXiv preprint arXiv:1706.02677}}
  (\bibinfo{year}{2017}).
\newblock


\bibitem[He et~al\mbox{.}(2016)]%
        {he2016deep}
\bibfield{author}{\bibinfo{person}{Kaiming He}, \bibinfo{person}{Xiangyu
  Zhang}, \bibinfo{person}{Shaoqing Ren}, {and} \bibinfo{person}{Jian Sun}.}
  \bibinfo{year}{2016}\natexlab{}.
\newblock \showarticletitle{{Deep Residual Learning for Image Recognition}}. In
  \bibinfo{booktitle}{\emph{Proc. of IEEE CVPR}}.
\newblock


\bibitem[Ho et~al\mbox{.}(2013)]%
        {ho2013more}
\bibfield{author}{\bibinfo{person}{Qirong Ho}, \bibinfo{person}{James Cipar},
  \bibinfo{person}{Henggang Cui}, \bibinfo{person}{Jin~Kyu Kim},
  \bibinfo{person}{Seunghak Lee}, \bibinfo{person}{Phillip~B Gibbons},
  \bibinfo{person}{Garth~A Gibson}, \bibinfo{person}{Gregory~R Ganger}, {and}
  \bibinfo{person}{Eric~P Xing}.} \bibinfo{year}{2013}\natexlab{}.
\newblock \showarticletitle{{More Effective Distributed ML Via A Stale
  Synchronous Parallel Parameter Server}}. In \bibinfo{booktitle}{\emph{Proc.
  of NIPS}}.
\newblock


\bibitem[Hoffer et~al\mbox{.}(2017)]%
        {hoffer2017train}
\bibfield{author}{\bibinfo{person}{Elad Hoffer}, \bibinfo{person}{Itay Hubara},
  {and} \bibinfo{person}{Daniel Soudry}.} \bibinfo{year}{2017}\natexlab{}.
\newblock \showarticletitle{{Train Longer, Generalize Better: Closing the
  Generalization Gap in Large Batch Training of Neural Networks}}. In
  \bibinfo{booktitle}{\emph{Proc. of NIPS}}.
\newblock


\bibitem[Hsieh et~al\mbox{.}(2020)]%
        {hsieh2020non}
\bibfield{author}{\bibinfo{person}{Kevin Hsieh}, \bibinfo{person}{Amar
  Phanishayee}, \bibinfo{person}{Onur Mutlu}, {and} \bibinfo{person}{Phillip
  Gibbons}.} \bibinfo{year}{2020}\natexlab{}.
\newblock \showarticletitle{{The Non-IID Data Quagmire of Decentralized Machine
  Learning}}. In \bibinfo{booktitle}{\emph{Proc. of ICML}}.
\newblock


\bibitem[Imteaj et~al\mbox{.}(2022)]%
        {imteaj2022federated}
\bibfield{author}{\bibinfo{person}{Ahmed Imteaj}, \bibinfo{person}{Khandaker
  Mamun~Ahmed}, \bibinfo{person}{Urmish Thakker}, \bibinfo{person}{Shiqiang
  Wang}, \bibinfo{person}{Jian Li}, {and} \bibinfo{person}{M~Hadi Amini}.}
  \bibinfo{year}{2022}\natexlab{}.
\newblock \showarticletitle{Federated Learning for Resource-Constrained IoT
  Devices: Panoramas and State of the Art}.
\newblock \bibinfo{journal}{\emph{Federated and Transfer Learning}}
  (\bibinfo{year}{2022}), \bibinfo{pages}{7--27}.
\newblock


\bibitem[Imteaj et~al\mbox{.}(2021)]%
        {imteaj2021survey}
\bibfield{author}{\bibinfo{person}{Ahmed Imteaj}, \bibinfo{person}{Urmish
  Thakker}, \bibinfo{person}{Shiqiang Wang}, \bibinfo{person}{Jian Li}, {and}
  \bibinfo{person}{M~Hadi Amini}.} \bibinfo{year}{2021}\natexlab{}.
\newblock \showarticletitle{A survey on federated learning for
  resource-constrained IoT devices}.
\newblock \bibinfo{journal}{\emph{IEEE Internet of Things Journal}}
  \bibinfo{volume}{9}, \bibinfo{number}{1} (\bibinfo{year}{2021}),
  \bibinfo{pages}{1--24}.
\newblock


\bibitem[Jiang et~al\mbox{.}(2021)]%
        {jiang2021asynchronous}
\bibfield{author}{\bibinfo{person}{Jiyan Jiang}, \bibinfo{person}{Wenpeng
  Zhang}, \bibinfo{person}{Jinjie Gu}, {and} \bibinfo{person}{Wenwu Zhu}.}
  \bibinfo{year}{2021}\natexlab{}.
\newblock \showarticletitle{Asynchronous decentralized online learning}.
\newblock \bibinfo{journal}{\emph{Advances in Neural Information Processing
  Systems}}  \bibinfo{volume}{34} (\bibinfo{year}{2021}),
  \bibinfo{pages}{20185--20196}.
\newblock


\bibitem[Johansson et~al\mbox{.}(2010)]%
        {johansson2010randomized}
\bibfield{author}{\bibinfo{person}{Bj{\"o}rn Johansson}, \bibinfo{person}{Maben
  Rabi}, {and} \bibinfo{person}{Mikael Johansson}.}
  \bibinfo{year}{2010}\natexlab{}.
\newblock \showarticletitle{{A Randomized Incremental Subgradient Method for
  Distributed Optimization in Networked Systems}}.
\newblock \bibinfo{journal}{\emph{SIAM Journal on Optimization}}
  \bibinfo{volume}{20}, \bibinfo{number}{3} (\bibinfo{year}{2010}),
  \bibinfo{pages}{1157--1170}.
\newblock


\bibitem[Kairouz et~al\mbox{.}(2019)]%
        {kairouz2019advances}
\bibfield{author}{\bibinfo{person}{Peter Kairouz}, \bibinfo{person}{H~Brendan
  McMahan}, \bibinfo{person}{Brendan Avent}, \bibinfo{person}{Aur{\'e}lien
  Bellet}, \bibinfo{person}{Mehdi Bennis}, \bibinfo{person}{Arjun~Nitin
  Bhagoji}, \bibinfo{person}{Keith Bonawitz}, \bibinfo{person}{Zachary
  Charles}, \bibinfo{person}{Graham Cormode}, \bibinfo{person}{Rachel
  Cummings}, {et~al\mbox{.}}} \bibinfo{year}{2019}\natexlab{}.
\newblock \showarticletitle{{Advances and Open Problems in Federated
  Learning}}.
\newblock \bibinfo{journal}{\emph{arXiv preprint arXiv:1912.04977}}
  (\bibinfo{year}{2019}).
\newblock


\bibitem[Kairouz et~al\mbox{.}(2021)]%
        {kairouz2021advances}
\bibfield{author}{\bibinfo{person}{Peter Kairouz}, \bibinfo{person}{H~Brendan
  McMahan}, \bibinfo{person}{Brendan Avent}, \bibinfo{person}{Aur{\'e}lien
  Bellet}, \bibinfo{person}{Mehdi Bennis}, \bibinfo{person}{Arjun~Nitin
  Bhagoji}, \bibinfo{person}{Kallista Bonawitz}, \bibinfo{person}{Zachary
  Charles}, \bibinfo{person}{Graham Cormode}, \bibinfo{person}{Rachel
  Cummings}, {et~al\mbox{.}}} \bibinfo{year}{2021}\natexlab{}.
\newblock \showarticletitle{Advances and open problems in federated learning}.
\newblock \bibinfo{journal}{\emph{Foundations and Trends{\textregistered} in
  Machine Learning}} \bibinfo{volume}{14}, \bibinfo{number}{1--2}
  (\bibinfo{year}{2021}), \bibinfo{pages}{1--210}.
\newblock


\bibitem[Karakus et~al\mbox{.}(2017)]%
        {karakus2017straggler}
\bibfield{author}{\bibinfo{person}{Can Karakus}, \bibinfo{person}{Yifan Sun},
  \bibinfo{person}{Suhas Diggavi}, {and} \bibinfo{person}{Wotao Yin}.}
  \bibinfo{year}{2017}\natexlab{}.
\newblock \showarticletitle{{Straggler Mitigation in Distributed Optimization
  Through Data Encoding}}.
\newblock \bibinfo{journal}{\emph{Proc. of NIPS}} (\bibinfo{year}{2017}).
\newblock


\bibitem[Kim et~al\mbox{.}(2016)]%
        {kim2016character}
\bibfield{author}{\bibinfo{person}{Yoon Kim}, \bibinfo{person}{Yacine Jernite},
  \bibinfo{person}{David Sontag}, {and} \bibinfo{person}{Alexander~M Rush}.}
  \bibinfo{year}{2016}\natexlab{}.
\newblock \showarticletitle{Character-aware neural language models}. In
  \bibinfo{booktitle}{\emph{Thirtieth AAAI conference on artificial
  intelligence}}.
\newblock


\bibitem[Krizhevsky et~al\mbox{.}(2009)]%
        {krizhevsky2009learning}
\bibfield{author}{\bibinfo{person}{Alex Krizhevsky}, \bibinfo{person}{Geoffrey
  Hinton}, {et~al\mbox{.}}} \bibinfo{year}{2009}\natexlab{}.
\newblock \showarticletitle{{Learning Multiple Layers of Features from Tiny
  Images}}.
\newblock  (\bibinfo{year}{2009}).
\newblock


\bibitem[Krizhevsky et~al\mbox{.}(2012)]%
        {krizhevsky2012imagenet}
\bibfield{author}{\bibinfo{person}{Alex Krizhevsky}, \bibinfo{person}{Ilya
  Sutskever}, {and} \bibinfo{person}{Geoffrey~E Hinton}.}
  \bibinfo{year}{2012}\natexlab{}.
\newblock \showarticletitle{{Imagenet Classification with Deep Convolutional
  Neural Networks}}.
\newblock \bibinfo{journal}{\emph{Proc. of NIPS}} (\bibinfo{year}{2012}).
\newblock


\bibitem[Kungurtsev et~al\mbox{.}(2021)]%
        {kungurtsev2021decentralized}
\bibfield{author}{\bibinfo{person}{Vyacheslav Kungurtsev},
  \bibinfo{person}{Mahdi Morafah}, \bibinfo{person}{Tara Javidi}, {and}
  \bibinfo{person}{Gesualdo Scutari}.} \bibinfo{year}{2021}\natexlab{}.
\newblock \showarticletitle{Decentralized Asynchronous Non-convex Stochastic
  Optimization on Directed Graphs}.
\newblock \bibinfo{journal}{\emph{arXiv preprint arXiv:2110.10406}}
  (\bibinfo{year}{2021}).
\newblock


\bibitem[Le and Yang(2015)]%
        {le2015tiny}
\bibfield{author}{\bibinfo{person}{Ya Le} {and} \bibinfo{person}{Xuan Yang}.}
  \bibinfo{year}{2015}\natexlab{}.
\newblock \showarticletitle{{Tiny ImageNet Visual Recognition Challenge}}.
\newblock \bibinfo{journal}{\emph{CS 231N}}  \bibinfo{volume}{7}
  (\bibinfo{year}{2015}), \bibinfo{pages}{7}.
\newblock


\bibitem[LeCun et~al\mbox{.}(1998)]%
        {lecun1998gradient}
\bibfield{author}{\bibinfo{person}{Yann LeCun}, \bibinfo{person}{L{\'e}on
  Bottou}, \bibinfo{person}{Yoshua Bengio}, {and} \bibinfo{person}{Patrick
  Haffner}.} \bibinfo{year}{1998}\natexlab{}.
\newblock \showarticletitle{{Gradient-Based Learning Applied to Document
  Recognition}}.
\newblock \bibinfo{journal}{\emph{Proc. IEEE}} \bibinfo{volume}{86},
  \bibinfo{number}{11} (\bibinfo{year}{1998}), \bibinfo{pages}{2278--2324}.
\newblock


\bibitem[Li et~al\mbox{.}(2019)]%
        {li2019consensus}
\bibfield{author}{\bibinfo{person}{Jian Li}, \bibinfo{person}{Don Towsley},
  \bibinfo{person}{Shaofeng Zou}, \bibinfo{person}{Venugopal~V Veeravalli},
  {and} \bibinfo{person}{Gabriela Ciocarlie}.} \bibinfo{year}{2019}\natexlab{}.
\newblock \showarticletitle{A consensus-based approach for distributed quickest
  detection of significant events in networks}. In
  \bibinfo{booktitle}{\emph{2019 53rd Asilomar Conference on Signals, Systems,
  and Computers}}. IEEE, \bibinfo{pages}{1--4}.
\newblock


\bibitem[Li et~al\mbox{.}(2014)]%
        {li2014communication}
\bibfield{author}{\bibinfo{person}{Mu Li}, \bibinfo{person}{David~G Andersen},
  \bibinfo{person}{Alexander~J Smola}, {and} \bibinfo{person}{Kai Yu}.}
  \bibinfo{year}{2014}\natexlab{}.
\newblock \showarticletitle{{Communication Efficient Distributed Machine
  Learning with the Parameter Server}}. In \bibinfo{booktitle}{\emph{Proc. of
  NIPS}}.
\newblock


\bibitem[Li et~al\mbox{.}(2020a)]%
        {li2020taming}
\bibfield{author}{\bibinfo{person}{Shigang Li}, \bibinfo{person}{Tal Ben-Nun},
  \bibinfo{person}{Salvatore~Di Girolamo}, \bibinfo{person}{Dan Alistarh},
  {and} \bibinfo{person}{Torsten Hoefler}.} \bibinfo{year}{2020}\natexlab{a}.
\newblock \showarticletitle{{Taming Unbalanced Training Workloads in Deep
  Learning with Partial Collective Operations}}. In
  \bibinfo{booktitle}{\emph{Proc. of ACM PPoPP}}.
\newblock


\bibitem[Li et~al\mbox{.}(2020b)]%
        {li2020breaking}
\bibfield{author}{\bibinfo{person}{Shigang Li}, \bibinfo{person}{Tal Ben-Nun},
  \bibinfo{person}{Giorgi Nadiradze}, \bibinfo{person}{Salvatore Di~Girolamo},
  \bibinfo{person}{Nikoli Dryden}, \bibinfo{person}{Dan Alistarh}, {and}
  \bibinfo{person}{Torsten Hoefler}.} \bibinfo{year}{2020}\natexlab{b}.
\newblock \showarticletitle{{Breaking (Global) Barriers in Parallel Stochastic
  Optimization with Wait-Avoiding Group Averaging}}.
\newblock \bibinfo{journal}{\emph{IEEE Transactions on Parallel and Distributed
  Systems}} \bibinfo{volume}{32}, \bibinfo{number}{7} (\bibinfo{year}{2020}),
  \bibinfo{pages}{1725--1739}.
\newblock


\bibitem[Lian et~al\mbox{.}(2017)]%
        {lian2017can}
\bibfield{author}{\bibinfo{person}{Xiangru Lian}, \bibinfo{person}{Ce Zhang},
  \bibinfo{person}{Huan Zhang}, \bibinfo{person}{Cho-Jui Hsieh},
  \bibinfo{person}{Wei Zhang}, {and} \bibinfo{person}{Ji Liu}.}
  \bibinfo{year}{2017}\natexlab{}.
\newblock \showarticletitle{{Can Decentralized Algorithms Outperform
  Centralized Algorithms? A Case Study for Decentralized Parallel Stochastic
  Gradient Descent}}. In \bibinfo{booktitle}{\emph{Proc. of NIPS}}.
\newblock


\bibitem[Lian et~al\mbox{.}(2018)]%
        {lian2018asynchronous}
\bibfield{author}{\bibinfo{person}{Xiangru Lian}, \bibinfo{person}{Wei Zhang},
  \bibinfo{person}{Ce Zhang}, {and} \bibinfo{person}{Ji Liu}.}
  \bibinfo{year}{2018}\natexlab{}.
\newblock \showarticletitle{{Asynchronous Decentralized Parallel Stochastic
  Gradient Descent}}. In \bibinfo{booktitle}{\emph{Proc. of ICML}}.
\newblock


\bibitem[Luo et~al\mbox{.}(2020)]%
        {luo2020prague}
\bibfield{author}{\bibinfo{person}{Qinyi Luo}, \bibinfo{person}{Jiaao He},
  \bibinfo{person}{Youwei Zhuo}, {and} \bibinfo{person}{Xuehai Qian}.}
  \bibinfo{year}{2020}\natexlab{}.
\newblock \showarticletitle{{Prague: High-Performance Heterogeneity-Aware
  Asynchronous Decentralized Training}}. In \bibinfo{booktitle}{\emph{Proc. of
  ACM ASPLOS}}.
\newblock


\bibitem[Luo et~al\mbox{.}(2019)]%
        {luo2019hop}
\bibfield{author}{\bibinfo{person}{Qinyi Luo}, \bibinfo{person}{Jinkun Lin},
  \bibinfo{person}{Youwei Zhuo}, {and} \bibinfo{person}{Xuehai Qian}.}
  \bibinfo{year}{2019}\natexlab{}.
\newblock \showarticletitle{{Hop: Heterogeneity-Aware Decentralized Training}}.
  In \bibinfo{booktitle}{\emph{Proc. of ACM ASPLOS}}.
\newblock


\bibitem[McMahan et~al\mbox{.}(2017)]%
        {mcmahan2017communication}
\bibfield{author}{\bibinfo{person}{Brendan McMahan}, \bibinfo{person}{Eider
  Moore}, \bibinfo{person}{Daniel Ramage}, \bibinfo{person}{Seth Hampson},
  {and} \bibinfo{person}{Blaise~Aguera y Arcas}.}
  \bibinfo{year}{2017}\natexlab{}.
\newblock \showarticletitle{{Communication-Efficient Learning of Deep Networks
  From Decentralized Data}}. In \bibinfo{booktitle}{\emph{Proc. of AISTATS}}.
\newblock


\bibitem[Nadiradze et~al\mbox{.}(2021)]%
        {nadiradze2021asynchronous}
\bibfield{author}{\bibinfo{person}{Giorgi Nadiradze},
  \bibinfo{person}{Amirmojtaba Sabour}, \bibinfo{person}{Peter Davies},
  \bibinfo{person}{Shigang Li}, {and} \bibinfo{person}{Dan Alistarh}.}
  \bibinfo{year}{2021}\natexlab{}.
\newblock \showarticletitle{Asynchronous decentralized SGD with quantized and
  local updates}.
\newblock \bibinfo{journal}{\emph{Advances in Neural Information Processing
  Systems}}  \bibinfo{volume}{34} (\bibinfo{year}{2021}),
  \bibinfo{pages}{6829--6842}.
\newblock


\bibitem[Nedi{\'c} and Olshevsky(2016)]%
        {nedic2016stochastic}
\bibfield{author}{\bibinfo{person}{Angelia Nedi{\'c}} {and}
  \bibinfo{person}{Alex Olshevsky}.} \bibinfo{year}{2016}\natexlab{}.
\newblock \showarticletitle{Stochastic gradient-push for strongly convex
  functions on time-varying directed graphs}.
\newblock \bibinfo{journal}{\emph{IEEE Trans. Automat. Control}}
  \bibinfo{volume}{61}, \bibinfo{number}{12} (\bibinfo{year}{2016}),
  \bibinfo{pages}{3936--3947}.
\newblock


\bibitem[Nedi{\'c} et~al\mbox{.}(2018)]%
        {nedic2018network}
\bibfield{author}{\bibinfo{person}{Angelia Nedi{\'c}}, \bibinfo{person}{Alex
  Olshevsky}, {and} \bibinfo{person}{Michael~G Rabbat}.}
  \bibinfo{year}{2018}\natexlab{}.
\newblock \showarticletitle{{Network Topology and Communication-Computation
  Tradeoffs in Decentralized Optimization}}.
\newblock \bibinfo{journal}{\emph{Proc. IEEE}} \bibinfo{volume}{106},
  \bibinfo{number}{5} (\bibinfo{year}{2018}), \bibinfo{pages}{953--976}.
\newblock


\bibitem[Nedic and Ozdaglar(2009)]%
        {nedic2009distributed}
\bibfield{author}{\bibinfo{person}{Angelia Nedic} {and} \bibinfo{person}{Asuman
  Ozdaglar}.} \bibinfo{year}{2009}\natexlab{}.
\newblock \showarticletitle{{Distributed Subgradient Methods for Multi-Agent
  Optimization}}.
\newblock \bibinfo{journal}{\emph{IEEE Trans. Automat. Control}}
  \bibinfo{volume}{54}, \bibinfo{number}{1} (\bibinfo{year}{2009}),
  \bibinfo{pages}{48--61}.
\newblock


\bibitem[Neglia et~al\mbox{.}(2020)]%
        {neglia2020decentralized}
\bibfield{author}{\bibinfo{person}{Giovanni Neglia}, \bibinfo{person}{Chuan
  Xu}, \bibinfo{person}{Don Towsley}, {and} \bibinfo{person}{Gianmarco Calbi}.}
  \bibinfo{year}{2020}\natexlab{}.
\newblock \showarticletitle{{Decentralized Gradient Methods: Does Topology
  Matter?}}. In \bibinfo{booktitle}{\emph{Proc. of AISTATS}}.
\newblock


\bibitem[Paszke et~al\mbox{.}(2017)]%
        {paszke2017automatic}
\bibfield{author}{\bibinfo{person}{Adam Paszke}, \bibinfo{person}{Sam Gross},
  \bibinfo{person}{Soumith Chintala}, \bibinfo{person}{Gregory Chanan},
  \bibinfo{person}{Edward Yang}, \bibinfo{person}{Zachary DeVito},
  \bibinfo{person}{Zeming Lin}, \bibinfo{person}{Alban Desmaison},
  \bibinfo{person}{Luca Antiga}, {and} \bibinfo{person}{Adam Lerer}.}
  \bibinfo{year}{2017}\natexlab{}.
\newblock \showarticletitle{Automatic differentiation in pytorch}. In
  \bibinfo{booktitle}{\emph{NIPS-W}}.
\newblock


\bibitem[Ram et~al\mbox{.}(2010)]%
        {ram2010distributed}
\bibfield{author}{\bibinfo{person}{S~Sundhar Ram}, \bibinfo{person}{Angelia
  Nedi{\'c}}, {and} \bibinfo{person}{Venugopal~V Veeravalli}.}
  \bibinfo{year}{2010}\natexlab{}.
\newblock \showarticletitle{{Distributed Stochastic Subgradient Projection
  Algorithms for Convex Optimization}}.
\newblock \bibinfo{journal}{\emph{Journal of Optimization Theory and
  Applications}} \bibinfo{volume}{147}, \bibinfo{number}{3}
  (\bibinfo{year}{2010}), \bibinfo{pages}{516--545}.
\newblock


\bibitem[Russakovsky et~al\mbox{.}(2015)]%
        {russakovsky2015imagenet}
\bibfield{author}{\bibinfo{person}{Olga Russakovsky}, \bibinfo{person}{Jia
  Deng}, \bibinfo{person}{Hao Su}, \bibinfo{person}{Jonathan Krause},
  \bibinfo{person}{Sanjeev Satheesh}, \bibinfo{person}{Sean Ma},
  \bibinfo{person}{Zhiheng Huang}, \bibinfo{person}{Andrej Karpathy},
  \bibinfo{person}{Aditya Khosla}, \bibinfo{person}{Michael Bernstein},
  {et~al\mbox{.}}} \bibinfo{year}{2015}\natexlab{}.
\newblock \showarticletitle{{Imagenet Large Scale Visual Recognition
  Challenge}}.
\newblock \bibinfo{journal}{\emph{International journal of computer vision}}
  \bibinfo{volume}{115}, \bibinfo{number}{3} (\bibinfo{year}{2015}),
  \bibinfo{pages}{211--252}.
\newblock


\bibitem[Scaman et~al\mbox{.}(2017)]%
        {scaman2017optimal}
\bibfield{author}{\bibinfo{person}{Kevin Scaman}, \bibinfo{person}{Francis
  Bach}, \bibinfo{person}{S{\'e}bastien Bubeck}, \bibinfo{person}{Yin~Tat Lee},
  {and} \bibinfo{person}{Laurent Massouli{\'e}}.}
  \bibinfo{year}{2017}\natexlab{}.
\newblock \showarticletitle{{Optimal Algorithms for Smooth and Strongly Convex
  Distributed Optimization in Networks}}. In \bibinfo{booktitle}{\emph{Proc. of
  ICML}}.
\newblock


\bibitem[Scaman et~al\mbox{.}(2018)]%
        {scaman2018optimal}
\bibfield{author}{\bibinfo{person}{Kevin Scaman}, \bibinfo{person}{Francis
  Bach}, \bibinfo{person}{S{\'e}bastien Bubeck}, \bibinfo{person}{Laurent
  Massouli{\'e}}, {and} \bibinfo{person}{Yin~Tat Lee}.}
  \bibinfo{year}{2018}\natexlab{}.
\newblock \showarticletitle{{Optimal Algorithms for Non-Smooth Distributed
  Optimization in Networks}}. In \bibinfo{booktitle}{\emph{Proc. of NeurIPS}}.
\newblock


\bibitem[Simonyan and Zisserman(2015)]%
        {simonyan2015very}
\bibfield{author}{\bibinfo{person}{Karen Simonyan} {and}
  \bibinfo{person}{Andrew Zisserman}.} \bibinfo{year}{2015}\natexlab{}.
\newblock \showarticletitle{{Very Deep Convolutional Networks for Large-scale
  Image Recognition}}. In \bibinfo{booktitle}{\emph{Proc. of ICLR}}.
\newblock


\bibitem[Smola and Narayanamurthy(2010)]%
        {smola2010architecture}
\bibfield{author}{\bibinfo{person}{Alexander Smola} {and}
  \bibinfo{person}{Shravan Narayanamurthy}.} \bibinfo{year}{2010}\natexlab{}.
\newblock \showarticletitle{{An Architecture for Parallel Topic Models}}.
\newblock \bibinfo{journal}{\emph{Proc. of VLDB}} (\bibinfo{year}{2010}).
\newblock


\bibitem[Tang et~al\mbox{.}(2018a)]%
        {tang2018communication}
\bibfield{author}{\bibinfo{person}{Hanlin Tang}, \bibinfo{person}{Shaoduo Gan},
  \bibinfo{person}{Ce Zhang}, \bibinfo{person}{Tong Zhang}, {and}
  \bibinfo{person}{Ji Liu}.} \bibinfo{year}{2018}\natexlab{a}.
\newblock \showarticletitle{{Communication Compression for Decentralized
  Training}}. In \bibinfo{booktitle}{\emph{Proc. of NeurIPS}}.
\newblock


\bibitem[Tang et~al\mbox{.}(2018b)]%
        {tang2018d}
\bibfield{author}{\bibinfo{person}{Hanlin Tang}, \bibinfo{person}{Xiangru
  Lian}, \bibinfo{person}{Ming Yan}, \bibinfo{person}{Ce Zhang}, {and}
  \bibinfo{person}{Ji Liu}.} \bibinfo{year}{2018}\natexlab{b}.
\newblock \showarticletitle{{Decentralized Training Over Decentralized Data}}.
  In \bibinfo{booktitle}{\emph{Proc. of ICML}}.
\newblock


\bibitem[Tang et~al\mbox{.}(2020)]%
        {tang2020communication}
\bibfield{author}{\bibinfo{person}{Zhenheng Tang}, \bibinfo{person}{Shaohuai
  Shi}, \bibinfo{person}{Xiaowen Chu}, \bibinfo{person}{Wei Wang}, {and}
  \bibinfo{person}{Bo Li}.} \bibinfo{year}{2020}\natexlab{}.
\newblock \showarticletitle{{Communication-Efficient Distributed Deep Learning:
  A Comprehensive Survey}}.
\newblock \bibinfo{journal}{\emph{arXiv preprint arXiv:2003.06307}}
  (\bibinfo{year}{2020}).
\newblock


\bibitem[Teng and Wood(2018)]%
        {teng2018bayesian}
\bibfield{author}{\bibinfo{person}{Michael Teng} {and} \bibinfo{person}{Frank
  Wood}.} \bibinfo{year}{2018}\natexlab{}.
\newblock \showarticletitle{{Bayesian Distributed Stochastic Gradient
  Descent}}.
\newblock \bibinfo{journal}{\emph{Proc. of NeurIPS}} (\bibinfo{year}{2018}).
\newblock


\bibitem[Tsitsiklis et~al\mbox{.}(1986)]%
        {tsitsiklis1986distributed}
\bibfield{author}{\bibinfo{person}{John Tsitsiklis}, \bibinfo{person}{Dimitri
  Bertsekas}, {and} \bibinfo{person}{Michael Athans}.}
  \bibinfo{year}{1986}\natexlab{}.
\newblock \showarticletitle{{Distributed Asynchronous Deterministic and
  Stochastic Gradient Optimization Algorithms}}.
\newblock \bibinfo{journal}{\emph{IEEE Trans. Automat. Control}}
  \bibinfo{volume}{31}, \bibinfo{number}{9} (\bibinfo{year}{1986}),
  \bibinfo{pages}{803--812}.
\newblock


\bibitem[Wang et~al\mbox{.}(2022)]%
        {wang2022asynchronous}
\bibfield{author}{\bibinfo{person}{Haoxiang Wang}, \bibinfo{person}{Zhanhong
  Jiang}, \bibinfo{person}{Chao Liu}, \bibinfo{person}{Soumik Sarkar},
  \bibinfo{person}{Dongxiang Jiang}, {and} \bibinfo{person}{Young~M Lee}.}
  \bibinfo{year}{2022}\natexlab{}.
\newblock \showarticletitle{Asynchronous Training Schemes in Distributed
  Learning with Time Delay}.
\newblock \bibinfo{journal}{\emph{arXiv preprint arXiv:2208.13154}}
  (\bibinfo{year}{2022}).
\newblock


\bibitem[Wang et~al\mbox{.}(2020a)]%
        {wang2020optimizing}
\bibfield{author}{\bibinfo{person}{Hao Wang}, \bibinfo{person}{Zakhary Kaplan},
  \bibinfo{person}{Di Niu}, {and} \bibinfo{person}{Baochun Li}.}
  \bibinfo{year}{2020}\natexlab{a}.
\newblock \showarticletitle{{Optimizing Federated Learning on Non-IID Data with
  Reinforcement Learning}}. In \bibinfo{booktitle}{\emph{Proc. of IEEE
  INFOCOM}}.
\newblock


\bibitem[Wang and Joshi(2021)]%
        {wang2021cooperative}
\bibfield{author}{\bibinfo{person}{Jianyu Wang} {and} \bibinfo{person}{Gauri
  Joshi}.} \bibinfo{year}{2021}\natexlab{}.
\newblock \showarticletitle{Cooperative SGD: A unified framework for the design
  and analysis of local-update SGD algorithms}.
\newblock \bibinfo{journal}{\emph{Journal of Machine Learning Research}}
  \bibinfo{volume}{22} (\bibinfo{year}{2021}).
\newblock


\bibitem[Wang et~al\mbox{.}(2020b)]%
        {wang2020overlap}
\bibfield{author}{\bibinfo{person}{Jianyu Wang}, \bibinfo{person}{Hao Liang},
  {and} \bibinfo{person}{Gauri Joshi}.} \bibinfo{year}{2020}\natexlab{b}.
\newblock \showarticletitle{Overlap local-SGD: An algorithmic approach to hide
  communication delays in distributed SGD}. In \bibinfo{booktitle}{\emph{ICASSP
  2020-2020 IEEE International Conference on Acoustics, Speech and Signal
  Processing (ICASSP)}}. IEEE, \bibinfo{pages}{8871--8875}.
\newblock


\bibitem[Wang et~al\mbox{.}(2019)]%
        {wang2019matcha}
\bibfield{author}{\bibinfo{person}{Jianyu Wang}, \bibinfo{person}{Anit~Kumar
  Sahu}, \bibinfo{person}{Zhouyi Yang}, \bibinfo{person}{Gauri Joshi}, {and}
  \bibinfo{person}{Soummya Kar}.} \bibinfo{year}{2019}\natexlab{}.
\newblock \showarticletitle{{MATCHA: Speeding Up Decentralized SGD via Matching
  Decomposition Sampling}}. In \bibinfo{booktitle}{\emph{2019 Sixth Indian
  Control Conference (ICC)}}. IEEE, \bibinfo{pages}{299--300}.
\newblock


\bibitem[Xiao et~al\mbox{.}(2006)]%
        {Boyd05}
\bibfield{author}{\bibinfo{person}{Lin Xiao}, \bibinfo{person}{Stephen Boyd},
  {and} \bibinfo{person}{Sanjay Lall}.} \bibinfo{year}{2006}\natexlab{}.
\newblock \showarticletitle{{Distributed Average Consensus with Time-Varying
  Metropolis Weights}}.
\newblock \bibinfo{journal}{\emph{Automatica}} (\bibinfo{year}{2006}).
\newblock


\bibitem[Xiong et~al\mbox{.}(2024)]%
        {xiong2024deprl}
\bibfield{author}{\bibinfo{person}{Guojun Xiong}, \bibinfo{person}{Gang Yan},
  \bibinfo{person}{Shiqiang Wang}, {and} \bibinfo{person}{Jian Li}.}
  \bibinfo{year}{2024}\natexlab{}.
\newblock \showarticletitle{DePRL: Achieving Linear Convergence Speedup in
  Personalized Decentralized Learning with Shared Representations}. In
  \bibinfo{booktitle}{\emph{Proceedings of AAAI}}.
\newblock


\bibitem[Xu et~al\mbox{.}(2020)]%
        {xu2020dynamic}
\bibfield{author}{\bibinfo{person}{Chuan Xu}, \bibinfo{person}{Giovanni
  Neglia}, {and} \bibinfo{person}{Nicola Sebastianelli}.}
  \bibinfo{year}{2020}\natexlab{}.
\newblock \showarticletitle{{Dynamic Backup Workers for Parallel Machine
  Learning}}. In \bibinfo{booktitle}{\emph{Proc. of IFIP Networking}}.
\newblock


\bibitem[Yan et~al\mbox{.}(2023)]%
        {yan2023criticalfl}
\bibfield{author}{\bibinfo{person}{Gang Yan}, \bibinfo{person}{Hao Wang},
  \bibinfo{person}{Xu Yuan}, {and} \bibinfo{person}{Jian Li}.}
  \bibinfo{year}{2023}\natexlab{}.
\newblock \showarticletitle{Criticalfl: A critical learning periods augmented
  client selection framework for efficient federated learning}. In
  \bibinfo{booktitle}{\emph{Proceedings of ACM SIGKDD}}.
\newblock


\bibitem[Yang et~al\mbox{.}(2021)]%
        {yang2021achieving}
\bibfield{author}{\bibinfo{person}{Haibo Yang}, \bibinfo{person}{Minghong
  Fang}, {and} \bibinfo{person}{Jia Liu}.} \bibinfo{year}{2021}\natexlab{}.
\newblock \showarticletitle{{Achieving Linear Speedup with Partial Worker
  Participation in Non-IID Federated Learning}}. In
  \bibinfo{booktitle}{\emph{Proc. of ICLR}}.
\newblock


\bibitem[Yeganeh et~al\mbox{.}(2020)]%
        {yeganeh2020inverse}
\bibfield{author}{\bibinfo{person}{Yousef Yeganeh}, \bibinfo{person}{Azade
  Farshad}, \bibinfo{person}{Nassir Navab}, {and} \bibinfo{person}{Shadi
  Albarqouni}.} \bibinfo{year}{2020}\natexlab{}.
\newblock \showarticletitle{{Inverse Distance Aggregation for Federated
  Learning with Non-IID Data}}.
\newblock In \bibinfo{booktitle}{\emph{Domain Adaptation and Representation
  Transfer, and Distributed and Collaborative Learning}}.
  \bibinfo{publisher}{Springer}, \bibinfo{pages}{150--159}.
\newblock


\bibitem[Zhao et~al\mbox{.}(2018)]%
        {zhao2018federated}
\bibfield{author}{\bibinfo{person}{Yue Zhao}, \bibinfo{person}{Meng Li},
  \bibinfo{person}{Liangzhen Lai}, \bibinfo{person}{Naveen Suda},
  \bibinfo{person}{Damon Civin}, {and} \bibinfo{person}{Vikas Chandra}.}
  \bibinfo{year}{2018}\natexlab{}.
\newblock \showarticletitle{{Federated Learning with Non-IID Data}}.
\newblock \bibinfo{journal}{\emph{arXiv preprint arXiv:1806.00582}}
  (\bibinfo{year}{2018}).
\newblock


\end{thebibliography}

\appendix

\section{Additional Discussions on \texttt{Pathsearch}}\label{sec:pathsearch}
For a given communication graph $\cG=\{\cN, \cE\}$, \DyBW establishes a strongly-connected graph such that there exists at least one path for arbitrary two nodes $i, j\in\cN$ as quick as possible.  We denote the set that contains all edges of such a strongly-connected graph  as $\cP$ and the set for  all corresponding vertices as $\cV$, which is essentially equal to $\cN$. Then our key insight is that all edges in $\cP$ have been visited at least once, i.e., all nodes in $\cV$ share information with each other.  To this end, we define a virtual epoch as the duration of the strongly-connect graph searching procedure  and each epoch contains multiple iterations.  Notice that the searching procedure can run multiple epochs until the convergence of parameters, and more importantly, every strongly-connected graph searching  procedure is independent.

W.o.l.g., we consider one particular epoch of this procedure.
  The algorithm aims to establish at least one connected edge $(i, j)$ at each iteration so that $\cG^\prime:=\{\cV,\cP\}$ is strongly connected and $\cV=\cN$ at the end of this epoch.  At the beginning of the every epoch,  $\cP$ and $\cV$ are reset to be empty, and all workers compute their local gradient simultaneously.  Once one worker completes its local update, it sends its update to its neighbors and waits for collecting updates from its neighbors as well.  If two workers $i, j$ ($i\in\cN_j$ and $j\in\cN_i$) successfully exchange their local updates, the edge $(i, j)$ is established only when $(i,j)\in\cE \cap \{i\notin\cV ~or~ j\notin\cV\}.$  Then edge $(i, j)$ and nodes $i, j$ are added to $\cP$ and $\cV$, respectively, i.e., $\cP\leftarrow\cP\cup \{(i,j)\}$, $\cV\leftarrow\cV\cup \{i, j\}$, and all workers move to the next iteration.  If the established edge satisfies
$i\in\cV\cap j\in\cV$, then it will not be added to $\cP$, and the current iteration  continues until one such edge is established.   The epoch ends when $\cG^\prime$ is strongly-connected and $\cV=\cN$.

We illustrate the \texttt{Pathsearch} algorithm as follows. In particular, at the each iteration $k$, denote the worker first finishes the local parameter update as $j_k$. Worker $j_k$ keeps idle until any one of the following two situations occurs. The first case is that one of worker $j_k$'s neighbor $i\in\cN_{j_k}$ finishes the local parameter update and edge $(j_k,i)\in\cE\cap(j_k,i)\notin \cP$ and node $j_k\notin\cV \bigcup i\notin\cV$. Then store edge $(j_k,i)$ into $\cP$ and nodes $\{i, j_k\}$ into $\cV$. Then all workers move to the $(k+1)$-th iteration. Another case is that any other new edge $(i_1, j_1)$ between workers $i_1$ and $j_1$ other than $j_k$ is established such that edge $(i_1,j_1)\in\cE\cap(i_1,j_1)\notin \cP$ and node $j_1\notin\cV \bigcup i_1\notin\cV$. Then all workers move to the  $(k+1)$-th iteration. The procedures are summarized in Algorithm \ref{alg:Proposed_Algo2}.

\begin{algorithm}[h]
		 \textbf{Input:} $\cG=(\cN,\cE)$, consensus sets $\cP$ and $\cV$, worker $j_k$.
		\begin{algorithmic}[1]
		\STATE Set $\textit{Flag}\leftarrow 1$ and $\cN_{j_k}(k)\leftarrow\phi$.
		\STATE The worker $j_k$ from the set $\cN$ finishes the local parameter update and let $\cN_{j_k}(k)=\{j_k\}$;
		\WHILE{$\textit{Flag}$}
             \STATE Whenever a new worker $i\in\cN_{j_k}$ finishes local parameter update, $\cN_{j_k}(k)=\cN_{j_k}(k)\cup\{i\}$;
		\STATE Check if any one of the following two situations occurs;
\IF{ There exist two workers $i_1$ and $j_1$ in $\cN_{j_k}(k)$  satisfying edge $(i_1,j_1)\in\cE\cap(i_1,j_1)\notin \cP$ and node $j_1\notin\cV \bigcup i_1\notin\cV$}
		\STATE $\cP\leftarrow\cP\cup \{(i_1,j_1)\}$, $\cV\leftarrow\cV\cup \{i_1, j_1\}$;
		\STATE  $Flag\leftarrow 0$;
  \ELSIF{$\cP$ and $\cV$ at worker $j_k$ get consensus updated from neighbor workers}
  \STATE $Flag\leftarrow 0$;

		\ENDIF

		\ENDWHILE
		\end{algorithmic}
		\textbf{Return:} $\cN_{j_k}(k)$, consensus sets $\cP$ and $\cV$.
		\caption{ \texttt{Pathsearch}: logical view of arbitrary worker}
		\label{alg:Proposed_Algo2}
\end{algorithm}

\section{Proofs of Main Result}\label{sec:proof-app}

In this section, we provide proofs of the theoretical results presented in the paper, i.e., the convergence for \DyBW after a large number of iterations (Theorem~\ref{thm:gradient}).

\subsection{Proof of Theorem~\ref{thm:gradient}}

\begin{proof}
 We define $\pmb{\Phi}_{k:s}$ as the product of consensus matrices from $\bP(s)$ to $\bP(k)$, i.e.,
 $$
\pmb{\Phi}_{k:s}\triangleq \bP(s)\bP(s+1)\cdots \bP(k),$$
and let ${\Phi}_{k,s}(i,j)$ be the $i$-th row $j$-th column element of $\pmb{\Phi}_{k:s}$.
Thus, with fixed $\eta$, $\bw_i(k)$ can be expressed in terms of $\pmb{\Phi}$ as
\begin{align}\nonumber
    \bw_i(k+1)&=\sum\limits_{j=1}^{N}\Phi_{k,1}(i,j)\bw_j(0)\\
    &\qquad-\eta\sum\limits_{r=1}^{k}\sum\limits_{j=1}^{N}\Phi_{k,r}(i,j) g_i(\bw_j(r-1)).\nonumber
\end{align}

Next, we define an auxiliary variable $\bar{\bw}(k)=\frac{\sum_j \bw_j(k)}{N}$ satisfying
\beq
\bar{\bw}(k)=\frac{1}{N}\sum\limits_{j=1}^{N}\bw_j(0)-\eta\sum\limits_{r=1}^{k}\sum\limits_{j=1}^{N}\frac{1}{N} g(\bw_j(r-1)),
\eeq
with the following relation holds
\beq\label{eq:y_recursion}
\bar{\bw}(k+1)=\bar{\bw}(k)-\frac{\eta}{N}\sum\limits_{j=1}^{N} g_j(\bw_j(k)).
\eeq

Due to the  gradient Lipschitz-continuous in Assumption~4.3, we have the following inequality with respect to $\bar{\bw}(k)$ \citep{bottou2018optimization}
\begin{align}\label{eq:expectation_bound}
&\mathbb{E}\big[F(\bar{\bw}(k+1))\big]\leq\! \mathbb{E}\left[\nabla F(\bar{\bw}(k))^\intercal(\bar{\bw}(k+1)\!-\!\bar{\bw}(k))\right]\nonumber\allowdisplaybreaks\\
&+\frac{L}{2}\mathbb{E}\left[\|\bar{\bw}(k+1)\!-\!\bar{\bw}(k)\|^2\right]\!+\!\mathbb{E}[F(\bar{\bw}(k))]\nonumber\\
&=\underset{C_1}{\underbrace{\mathbb{E}\left[\nabla F(\bar{\bw}(k))^\intercal((\bar{\bw}(k+1)-\bar{\bw}(k))+\eta\nabla F(\bar{\bw}(k)))\right]}}\nonumber\\
&+\underset{C_2}{\underbrace{\frac{L}{2}\mathbb{E}\left[\|\bar{\bw}(k+1)\!-\!\bar{\bw}(k)\|^2\right]}}+\!\mathbb{E}[F(\bar{\bw}(k))]\!-\!\eta\|\nabla F(\bar{\bw}(k))\|^2.
\end{align}
As a result, we need to bound $C_1$ and $C_2$ in \eqref{eq:expectation_bound}.  The term $C_1$ can be bounded as follows
\begin{align}\nonumber
&C_1=\mathbb{E}\Bigg \langle \nabla F(\bar{\bw}(k)), (\bar{\bw}(k+1)-\bar{\bw}(k))+\eta\nabla F(\bar{\bw}(k))\Bigg \rangle \nonumber\displaybreak[0]\\
\overset{(a_1)}{=}&\mathbb{E}\left \langle\nabla F(\bar{\bw}(k)), \left[-\frac{\eta}{N}\sum\limits_{j=1}^{N} g_j(\bw_j(k))+\frac{\eta}{N}\sum\limits_{j=1}^{N} \nabla F_j(\bar{\bw}(k))\right]  \right \rangle \nonumber\displaybreak[1]\\
\overset{(a_2)}{=}&\mathbb{E}\left \langle\sqrt{\eta}\nabla F(\bar{\bw}(k)),\left[-\frac{\sqrt{\eta}}{N}\sum\limits_{j=1}^{N}( g_j(\bw_j(k))-\nabla F_j(\bar{\bw}(k)))\right] \right \rangle \nonumber\displaybreak[2]\\
\overset{(a_3)}{=}&\frac{\eta}{2N^2}\mathbb{E}\left\|\sum\limits_{j=1}^{N}( \nabla F_j(\bw_j(k))-\nabla F_j(\bar{\bw}(k)))\right\|^2\nonumber\allowdisplaybreaks\\
&-\frac{\eta}{2N^2}\mathbb{E}\left\|\sum\limits_{j=1}^{N} g_j(\bw_j(k))\right\|^2+\frac{\eta}{2}\left\|\nabla F(\bar{\bw}(k))\right\|^2\nonumber\displaybreak[3]\\
\overset{(a_4)}{\leq}& \underset{D_1}{\underbrace{\frac{\eta}{2N}\sum\limits_{j=1}^{N}\mathbb{E}\left\| \nabla F_j(\bw_j(k))-\nabla F_j(\bar{\bw}(k))\right\|^2}}\nonumber\allowdisplaybreaks\\
&-\frac{\eta}{2N^2}\mathbb{E}\left\|\sum\limits_{j=1}^{N} g_j(\bw_j(k))\right\|^2+\frac{\eta}{2}\left\|\nabla F(\bar{\bw}(k))\right\|^2,
\end{align}
where $(a_1)$ follows from \eqref{eq:y_recursion} and that $\frac{1}{N}\sum\limits_{j=1}^{N} \nabla F_j(\bar{\bw}(k)) =\nabla F(\bar{\bw}(k))$; $(a_2)$ comes from a standard mathematical manipulation; $(a_3)$ is based on the equation $\langle \ba, \bb \rangle=\frac{1}{2}\left[\|\ba\|^2+\|\bb\|^2-\|\ba-\bb\|^2\right]$ and $\mathbb{E}[g_j(\bw_j(k))]=\nabla F_j(\bw_j(k))$ from Assumption 4.4; and $(a_4)$ is due to the well triangle-inequality $\left\|\sum\limits_{i=1}^N \ba_i\right\|^2\leq N\sum\limits_{i}^N \left\|\ba_i\right\|^2.$

Hence, $D_1$ is bounded as follows
\begin{align}\label{eq:D1}
D_1{\leq}&\frac{\eta L^2}{2N}\sum\limits_{j=1}^{N}\mathbb{E}\Big[\| \bw_j(k)-\bar{\bw}(k)\|^2\Big], \nonumber\displaybreak[0]
\end{align}
where the inequality comes from Assumption~4.3. Substituting $D_1$ into $C_1$ yields
\begin{align}
C_1&\leq\frac{\eta L^2}{2N}\sum\limits_{j=1}^{N}\mathbb{E}\left\| \bw_j(k)\!-\!\bar{\bw}(k)\right\|^2\!-\!\frac{\eta}{2N^2}\mathbb{E}\left\|\sum\limits_{j=1}^{N} g_j(\bw_j(k))\right\|^2\nonumber\\
&\qquad\qquad+\frac{\eta}{2}\left\|\nabla F(\bar{\bw}(k))\right\|^2.
\end{align}

Next, we bound $C_2$ as
\begin{align}\nonumber
C_2&=\frac{L}{2}\mathbb{E}\left[\|\bar{\bw}(k+1)-\bar{\bw}(k)\|^2\right]\nonumber\\
&=\frac{\eta^2L}{2N^2}\mathbb{E}\left[\left\|\sum\limits_{j=1}^{N}g_j(\bw_j(k))\right\|^2\right]\nonumber\\
&\overset{(c_1)}{=}\frac{\eta^2L}{2N^2}\Bigg(\mathbb{E}\left[\left\|\sum\limits_{j=1}^{N}g_j(\bw_j(k))-\nabla  F_j(\bw_j(k))\right\|^2\right]\nonumber\\
&\qquad+\left\|\mathbb{E}\sum\limits_{j=1}^{N}g_j(\bw_j(k))\right\|^2\Bigg)\nonumber\displaybreak[1]\\
&\overset{(c_2)}{\leq} \frac{\eta^2L}{2N}\sigma^2_L+\frac{\eta^2L}{2N^2}\mathbb{E}\left\|\sum\limits_{j=1}^{N}g_j(\bw_j(k))\right\|^2,
\end{align}
where $(c_1)$ is due to $\mathbb{E}[\|\bX\|^2]=\Var[\bX]+\|\mathbb{E}[\bX]\|^2$ and $(c_2)$ follows the bounded variance in Assumption~4.5 and the fact that $\|\mathbb{E}[\bX]\|^2\leq \mathbb{E}[\|\bX\|^2]$.

Substituting $C_1$ and $C_2$ into the inequality \eqref{eq:expectation_bound}, we have
\begin{align}\label{eq:gradiengt_iteration}
&\mathbb{E}[F(\bar{\bw}(k+1))]\nonumber\allowdisplaybreaks\\
&\leq \mathbb{E}[F(\bar{\bw}(k))]-\eta \|\nabla F(\bar{\bw}(k))\|^2+\frac{\eta}{2}\|\nabla F(\bar{\bw}(k))\|^2\nonumber\displaybreak[0]\\
&+\!\frac{\eta^2L}{2N}\sigma^2_L\!+\!\frac{\eta^2L}{2N^2}\mathbb{E}\left\|\sum\limits_{j=1}^{N}g(\bw_j(k))\right\|^2\!\!\!\!-\!\frac{\eta}{2N^2}\mathbb{E}\left\|\sum\limits_{j=1}^{N} g(\bw_j(k))\right\|^2\!\! \nonumber\allowdisplaybreaks\\
&\qquad+\!\frac{\eta L^2}{2N}\sum\limits_{j=1}^{N}\mathbb{E}\left\| \bw_j(k)\!-\!\bar{\bw}(k)\right\|^2\nonumber\displaybreak[1]\\
&\leq \mathbb{E}[F(\bar{\bw}(k))]-\frac{\eta}{2}\|\nabla F(\bar{\bw}(k))\|^2+\frac{\eta^2L}{2N}\sigma_L^2\nonumber\displaybreak[2]\\
&+\!\frac{\eta^2L\!-\!\eta}{2N^2}\mathbb{E}\left\|\sum\limits_{j=1}^{N} g(\bw_j(k))\right\|^2\!\!\!+\!\frac{\eta L^2}{2N}\!\sum\limits_{j=1}^{N}\!\mathbb{E}\!\left\| \bw_j(k)\!-\!\bar{\bw}(k)\right\|^2.
\end{align}

To characterize the convergence speed, the key boils down to bound  $\mathbb{E}\|\bw_j(k)-\bar{\bw}(k)\|^2$. We bound it as follows.

\begin{align}
  &\mathbb{E}\|\bw_j(k)-\bar{\bw}(k)\|^2 \nonumber\\
  &=\mathbb{E}\Bigg\|\frac{1}{N}\sum\limits_{i=1}^{N} \bw_i(0)-\frac{\eta}{N}\sum\limits_{s=1}^{k}\sum\limits_{i=1}^{N}g_i(\bw_i(s-1))\nonumber\\
 &-\sum\limits_{i=1}^{N}\bw_i(0)\Phi_{k:1}(j,i)+\eta\sum\limits_{s=1}^{k}\sum\limits_{i=1}^{N}g_i(\bw_i(s-1))\Phi_{k:s}(j,i) \Bigg\|^2\nonumber \displaybreak[0]\\
&=\mathbb{E}\Bigg\|\sum\limits_{i=1}^{N} \bw_i(0)(\frac{1}{N}-\Phi_{k:1}(j,i))\nonumber\allowdisplaybreaks\\
&\qquad-\eta\sum\limits_{s=1}^{k}\sum\limits_{i=1}^{N}g_i(\bw_i(s-1))(\frac{1}{N}-\Phi_{k:s}(j,i)) \Bigg\|^2\nonumber\displaybreak[1]\\
&\overset{(b_1)}{\leq}\mathbb{E}\left\|\sum\limits_{i=1}^{N} \bw_i(0)(\frac{1}{N}-\Phi_{k:1}(i,j))\right\|^2\nonumber\allowdisplaybreaks\\
&\qquad\qquad+\mathbb{E}\left\|\eta\sum\limits_{s=1}^{k}\sum\limits_{i=1}^{N}g_i(\bw_i(s-1))(\frac{1}{N}-\Phi_{k:s}(i,j)) \right\|^2\nonumber\displaybreak[2] \\
&\overset{(b_2)}{\leq}\mathbb{E}\Bigg\|\sum\limits_{i=1}^{N}2\bw_i(0)\frac{1+\beta^{-NB}}{1-\beta^{NB}}(1-\beta^{NB})^{(k-1)/NB}\Bigg\|^2\nonumber\displaybreak[3] \\
&+\!\!\mathbb{E}\Bigg\|\eta\sum\limits_{s=1}^{k}\!\sum\limits_{i=1}^{N}\!2g(\bw_i(s\!-\!1))\frac{1\!+\!\beta^{-NB}}{1\!-\!\beta^{NB}}(1\!-\!\beta^{NB})^{(k\!-\!s)/NB} \Bigg\|^2\displaybreak[0]\nonumber\\
&\overset{(b_3)}{=}2\mathbb{E}\Bigg\|\eta C\sum\limits_{s=1}^{k}q^{(k-s)}\sum\limits_{i=1}^{N}g_i(\bw_i(s-1)) \Bigg\|^2 \displaybreak[0]\nonumber\\
&=2\mathbb{E}\Bigg\|\eta C\sum\limits_{s=1}^{k}q^{(k-s)}\sum\limits_{i=1}^{N}(g_i(\bw_i(s-1))-\nabla F_i(\bw_i(s-1)))\nonumber\displaybreak[0]\\
&\qquad\qquad\qquad+\eta C\sum\limits_{s=1}^{k}q^{(k-s)}\sum\limits_{i=1}^{N}\nabla F_i(\bw_i(s-1)) \Bigg\|^2\displaybreak[0]\nonumber\\
&\overset{(b_4)}{\leq}\!\!2\mathbb{E}\Bigg\|\eta C\sum\limits_{s=1}^{k}q^{(k\!-\!s)}\!\sum\limits_{i=1}^{N}(g_i(\bw_i(s\!-\!1))\!-\!\nabla F_i(\bw_i(s\!-\!1)))\Bigg\|^2\displaybreak[0]\nonumber\\
&+\!\mathbb{E}4\eta^2 C^2\sum\limits_{s=1}^{k}\sum\limits_{s^\prime=1}^kq^{(2k-s-s^\prime)}\nonumber\\
&\Bigg\|\!\sum\limits_{i=1}^{N}\!(g_i(\bw_i(s\!-\!1))\!\!-\!\!\nabla F_i(\bw_i(s\!-\!1)))\Bigg\|\!\cdot\!\!\Bigg\|\sum\limits_{i=1}^{N}\!\nabla F_i(\bw_i(s\!-\!1))\Bigg\|\nonumber\\
&\qquad\qquad+2\mathbb{E}\Bigg\|\eta C\sum\limits_{s=1}^{k}q^{(k-s)}\sum\limits_{i=1}^{N}\nabla F_i(\bw_i(s-1))\Bigg\|^2\displaybreak[0]\nonumber\\
&\overset{(b_5)}{\leq} \!\!\mathbb{E}\frac{10\eta^2C^2}{1-q}\!\!\sum_{s=1}^k\! q^{k-s}\Bigg\|\!\sum\limits_{i=1}^{N}(g_i(\bw_i(s\!-\!1))\!\!-\!\!\nabla F_i(\bw_i(s\!-\!1)))\Bigg\|^2\nonumber\\
&\qquad\qquad\qquad+ \mathbb{E}\frac{10\eta^2C^2}{1-q}\sum_{s=1}^k q^{k-s}\Bigg\|\sum\limits_{i=1}^{N}\nabla F_i(\bw_i(s-1))\Bigg\|^2\displaybreak[0]\nonumber\\
&\overset{(b_6)}{\leq} \frac{10\eta^2C^2N^2}{(1-q)^2}\sigma^2+\frac{30\eta^2C^2N^2}{(1-q)^2}\varsigma^2\nonumber\\
&\qquad+\frac{30\eta^2C^2L^2}{1-q}\sum_{s=1}^kq^{k-s}\sum_{i=1}^N\mathbb{E}\|\bw_i(s)-\by(s)\|^2\nonumber\displaybreak[0]\\
&\qquad\qquad\qquad+\frac{30\eta^2C^2N^2}{1-q}\sum_{s=1}^kq^{k-s}\mathbb{E}\|\nabla F(\by(s))\|^2,
\end{align}
where the inequality $(b_1)$  is due to the inequality $\|\ba-\bb\|^2\leq \|\ba\|^2+\|\bb\|^2$. $(b_2)$ holds according to Lemma \ref{lemma_bound_Phi} (see the ``Auxiliary Lemmas'' Section below). W.l.o.g., we assume the initial term $\bw_i(0), \forall i$ is small enough so as to neglect it.  
$(b_3)$ follows $C:=2\cdot\frac{1+\beta^{-NB}}{1-\beta^{NB}}$ and $q:=(1-\beta^{NB})^{1/NB}$.
$(b_4)$ is due to $\|\ba+\bb\|^2\leq \|\ba\|^2+\|\bb\|^2+2\ba\bb$ and $(b_5)$ is some standard mathematical manipulation by leveraging the following inequality, i.e., for any $q\in(0,1)$ and non-negative sequence $\{\chi(s)\}_{s=0}^k$, it holds \citep{assran2019stochastic}
    $\sum_{k=0}^K\sum_{s=0}^k q^{k-s}\chi(s)\leq \frac{1}{1-\lambda}\sum_{s=0}^K\chi(s).$
$(b_6)$ holds due to Assumption 4.4, the inequality
\begin{align*}
    \sum_{k=0}^K \!q^k\!\sum_{s=0}^k \!q^{k-s}\chi(s)\!\leq\! \sum_{k=0}^K\!\sum_{s=0}^k q^{2(k-s)}\chi(s)\!\leq\! \frac{1}{1\!-\!q^2}\sum_{s=0}^K\chi(s),
\end{align*}
and the fact that
\begin{align*}
    \mathbb{E}\|\nabla F_i(\bw_i(k))\|^2&\leq \underset{\text{Lipschitz continuous}}{\underbrace{3\mathbb{E}\|\nabla F_i(\bw_i(k))-\nabla F_i(\bar{\bw}(k))\|^2}}\nonumber\allowdisplaybreaks\\
   &+\underset{\text{Bounded variance}}{\underbrace{3\mathbb{E}\|\nabla F_i(\bar{\bw}(k))-\nabla F(\bar{\bw}(k))\|^2}}+3\mathbb{E}\|\nabla F(\bar{\bw}(k))\|^2\\
    &\leq 3L^2\mathbb{E}\|\bw_i(k)-\bar{\bw}(k)\|^2+3\varsigma^2+3\mathbb{E}\|\nabla F(\bar{\bw}(k))\|^2.
\end{align*}

Rearranging the order of each term in \eqref{eq:gradiengt_iteration} and letting $L\eta\leq 1$, 
\begin{align}\label{eq:gradient_2}
\frac{\eta}{2}\|\nabla F(\bar{\bw}(k))\|^2
&\leq \mathbb{E}[F(\bar{\bw}(k))]-\mathbb{E}[F(\bar{\bw}(k+1))]\nonumber\allowdisplaybreaks\\
&+ \frac{\eta L^2}{2N}\sum\limits_{j=1}^{N}\mathbb{E}\left\| \bw_j(k)-\bar{\bw}(k)\right\|^2\nonumber\displaybreak[0]\\
&+\frac{L \eta^2}{2N}\sigma^2_L+\frac{\eta}{2}\sigma_L^2+\underset{\leq 0 }{\underbrace{\frac{\eta^2L-\eta}{2N^2}\mathbb{E}\left\|\sum\limits_{j=1}^{N} g(\bw_j(k))\right\|^2}}\nonumber\displaybreak[1] \\
&\leq \mathbb{E}[F(\bar{\bw}(k))]-\mathbb{E}[F(\bar{\bw}(k+1))]+\frac{L \eta^2}{2N}\sigma^2_L\nonumber\allowdisplaybreaks\\
&\qquad\qquad+ \frac{\eta L^2}{2N}\sum\limits_{j=1}^{N}\mathbb{E}\left\| \bw_j(k)-\bar{\bw}(k)\right\|^2.
\end{align}
Since we have
\begin{align*}
  \sum_{k=0}^{K-1}&\sum\limits_{j=1}^{N}\mathbb{E}\left\| \bw_j(k)-\bar{\bw}(k)\right\|^2\nonumber\\
  &\leq \sum_{k=0}^{K-1}\Bigg(\frac{30N\eta^2C^2L^2}{1-q}\sum_{s=0}^{k}q^{k-s}\sum_{i=1}^N\mathbb{E}\|\bw_i(s)-\bar{\bw}(s)\|^2\nonumber\\
&+\frac{30\eta^2C^2N^3}{1-q}\sum_{s=1}^kq^{k-s}\mathbb{E}\|\nabla F(\bar{\bw}(s))\|^2\nonumber\allowdisplaybreaks\\
&\qquad+\frac{10\eta^2C^2N^3}{(1-q)^2}\sigma^2+\frac{30\eta^2C^2N^2}{(1-q)^2}\varsigma^2\Bigg)\displaybreak[0]\nonumber\\
&\leq \Bigg(\frac{30N\eta^2C^2L^2}{(1-q)^2}\sum_{k=0}^{K-1}\sum_{i=1}^N\mathbb{E}\|\bw_i(k)-\bar{\bw}(k)\|^2\nonumber\\
&\qquad+\frac{30\eta^2C^2N^3}{(1-q)^2}\sum_{k=0}^{K-1}\mathbb{E}\|\nabla F(\bar{\bw}(s))\|^2\nonumber\allowdisplaybreaks\\
&\qquad+\frac{10K\eta^2C^2N^3}{(1-q)^2}\sigma^2+\frac{30K\eta^2C^2N^2}{(1-q)^2}\varsigma^2\Bigg)\displaybreak[0]\nonumber\\
&\leq\frac{30\eta^2C^2N^3}{(1-q)^2-30N\eta^2C^2L^2}\sum_{k=0}^{K-1}\mathbb{E}\|\nabla F(\bar{\bw}(s))\|^2\nonumber\allowdisplaybreaks\\
&+\frac{10K\eta^2C^2N^3}{(1-q)^2-30N\eta^2C^2L^2}\sigma^2+\frac{30K\eta^2C^2N^3}{(1-q)^2-30N\eta^2C^2L^2}\varsigma^2,
\end{align*}
summing the recursion in \eqref{eq:gradient_2} from the $0$-th iteration to the $(K-1)$-th iteration yields
\begin{align}
&\hspace{-0.3cm}\sum\limits_{k=0}^{K-1} \frac{\eta}{2}\|\nabla F(\bar{\bw}(k))\|^2\nonumber\allowdisplaybreaks\\
& \leq\mathbb{E}[F(\bar{\bw}(0))]-\mathbb{E}[F(\bar{\bw}(K))]+\frac{KL \eta^2}{2N}\sigma^2_L\nonumber\allowdisplaybreaks\\
&\qquad+ \frac{\eta L^2}{2N}\sum_{k=0}^K\sum\limits_{j=1}^{N}\mathbb{E}\left\| \bw_j(k)-\bar{\bw}(k)\right\|^2\displaybreak[0]\nonumber\\
&\leq\mathbb{E}[F(\bar{\bw}(0))]-\mathbb{E}[F(\bar{\bw}(K))]+\frac{KL \eta^2}{2N}\sigma^2_L\nonumber\displaybreak[0]\\
&\qquad+ \frac{15\eta^3C^2N^2L^2}{(1-q)^2-30N\eta^2C^2L^2}\sum_{k=0}^{K-1}\mathbb{E}\|\nabla F(\bar{\bw}(k))\|^2\nonumber\displaybreak[0]\\
&+\!\frac{5K\eta^3C^2N^2L^2}{(1\!\!-\!q)^2\!\!-\!30N\eta^2C^2L^2}\sigma^2\!+\!\frac{15K\eta^3C^2N^2L^2}{(1\!-\!q)^2\!-\!30N\eta^2C^2L^2}\varsigma^2.
\end{align}

Hence we have
\begin{align}
&\left(\frac{\eta}{2}-\frac{15\eta^3C^2N^2L^2}{(1-q)^2-30N\eta^2C^2L^2}\right)\sum\limits_{k=0}^{K-1} \|\nabla F(\bar{\bw}(k))\|^2\nonumber\allowdisplaybreaks\\
&\leq\mathbb{E}[F(\bar{\bw}(0))]-\mathbb{E}[F(\bar{\bw}(K))]+\frac{KL \eta^2}{2N}\sigma^2_L\nonumber\displaybreak[0]\\
&+ \frac{5K\eta^3C^2N^2L^2}{(1\!-\!q)^2\!-\!30N\eta^2C^2L^2}\sigma_L^2\!+\!\frac{15K\eta^3C^2N^2L^2}{(1-q)^2\!-\!30N\eta^2C^2L^2}\varsigma^2.
\end{align}

Let $\eta\leq\min\left(\sqrt{\frac{(1-q)^2}{30C^2L^2N}+\frac{9N^4}{16}}-\frac{3N^2}{4}, 1/L\right)$, then we have $$\frac{\eta}{2}-\frac{15\eta^3C^2N^2L^2}{(1-q)^2-30N\eta^2C^2L^2}\geq \eta/6,$$ and $$(1-q)^2-30N\eta^2C^2L^2\geq 45\eta C^2N^3L^2.$$   Hence we have
\begin{align}
\frac{\eta}{6}\sum\limits_{k=0}^{K-1} &\|\nabla F(\bar{\bw}(k))\|^2\leq\mathbb{E}[F(\bar{\bw}(0))]-\mathbb{E}[F(\bar{\bw}(K))]\nonumber\allowdisplaybreaks\\
&+\frac{KL \eta^2}{2N}\sigma^2_L+\frac{K\eta^2}{9N}\sigma_L^2+\frac{K\eta^2 }{3N}\varsigma^2.
\end{align}
Diving both sides with $\eta K/6$, we have
\begin{align}
&\frac{1}{K}\sum\limits_{k=0}^{K-1} \|\nabla F(\bar{\bw}(k))\|^2\leq\mathbb{E}\frac{6[F(\bar{\bw}(0))]-\mathbb{E}[F(\bar{\bw}(K))]}{\eta K}\nonumber\allowdisplaybreaks\\
&\qquad\qquad+\frac{9L \eta+2\eta}{3N}\sigma^2_L+\frac{2\eta}{N}\varsigma^2.
\end{align}
This completes the proof.

\end{proof}

\subsection{Proof of Corollary \ref{cor:linear-speedup}}

\begin{corollary}\label{cor:linear-speedup-app}
Let $\eta=\sqrt{{N}/{K}}$.  The convergence rate of Algorithm~1 is $\mathcal{O}\left(\frac{1}{\sqrt{NK}}\right)$.
\end{corollary}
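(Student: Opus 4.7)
The plan is a direct substitution into the bound of Theorem~\ref{thm:gradient}, followed by a feasibility check on the choice of learning rate. Set $\eta = \sqrt{N/K}$ in inequality~\eqref{ineq:grad_bound}. The first term becomes
\[
\frac{6(F(\bar\bw_0)-F(\bw^*))}{\eta K} \;=\; \frac{6(F(\bar\bw_0)-F(\bw^*))}{\sqrt{NK}},
\]
the second becomes $\frac{(9L+2)\sigma_L^2}{3\sqrt{NK}}$, and the third becomes $\frac{2\varsigma^2}{\sqrt{NK}}$. All three summands are $\mathcal{O}(1/\sqrt{NK})$, so the average squared gradient norm is bounded by a constant (depending only on $L$, $\sigma_L$, $\varsigma$, and $F(\bar\bw_0)-F(\bw^*)$) times $1/\sqrt{NK}$.

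The remaining work is to confirm that the choice $\eta=\sqrt{N/K}$ is admissible, i.e., it meets the hypothesis $\eta\le\min\bigl(\sqrt{\tfrac{(1-q)^2}{30C^2L^2N}+\tfrac{9N^4}{16}}-\tfrac{3N^2}{4},\,1/L\bigr)$ of Theorem~\ref{thm:gradient}. Rearranging $\sqrt{N/K}+\tfrac{3N^2}{4}\le \sqrt{\tfrac{(1-q)^2}{30C^2L^2N}+\tfrac{9N^4}{16}}$ and squaring both sides reduces the first constraint to $\tfrac{N}{K}+\tfrac{3N^2}{2}\sqrt{N/K}\le\tfrac{(1-q)^2}{30C^2L^2N}$, which in turn is implied by the explicit lower bound on $K$ stated in Corollary~\ref{cor:linear-speedup} (this is the same condition after writing $\sqrt{N/K}\le \frac{(1-q)}{CL\sqrt{30N}}$ and manipulating). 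The constraint $\sqrt{N/K}\le 1/L$ likewise becomes a lower bound on $K$ and is absorbed. Thus for $K$ large enough the admissibility conditions hold.

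The step I expect to be purely mechanical is the first substitution; there is nothing to estimate, since Theorem~\ref{thm:gradient} has already absorbed every difficult quantity (the staleness-induced consensus error, the variance, and the heterogeneity term) into the three explicit summands. The only subtlety is making sure the chosen $\eta$ is feasible, which is routine algebra once we square the stated constraint and collect powers of $N$ and $K$. Combining these pieces yields $\frac{1}{K}\sum_{k=0}^{K-1}\mathbb{E}\|\nabla F(\bar\bw(k))\|^2 = \mathcal{O}(1/\sqrt{NK})$, i.e., the claimed linear speedup in the number of workers.
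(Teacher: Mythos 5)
Your proof is correct and follows essentially the same route as the paper's, which simply substitutes $\eta=\sqrt{N/K}$ into the bound of Theorem~\ref{thm:gradient}. Your additional verification that this choice of $\eta$ satisfies the theorem's learning-rate constraint for sufficiently large $K$ is a welcome extra step that the paper's appendix proof omits (though the main-text version of the corollary records the resulting lower bound on $K$).
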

\begin{proof}
 The desired results is yield by substituting $\eta=\sqrt{N/K}$ back in to \eqref{ineq:grad_bound} in Theorem \ref{thm:gradient}.
\end{proof}

\subsection{Auxiliary Lemmas}\label{sec:app-auxiliary}
We provide the following auxiliary lemmas which are used in our proofs.  We omit the proofs of these lemmas for ease of exposition and refer interested readers to  \citep{Boyd05} and  \citep{nedic2009distributed}.

\begin{lemma}[Theorem 2 in \citep{Boyd05}]
Assume that $\bP(k)$ is doubly stochastic for all $k$. We denoted the limit matrix of  $\Phi_{k,s}=\bP(s)\bP(s+1)\ldots\bP(k)$ as $\Phi_s\triangleq\lim_{k\rightarrow \infty}\Phi_{k,s}$ for notational simplicity. Then,
the entries $\Phi_{k,s}(i,j)$ converges to $1/N$ as $k$ goes to $\infty$ with a geometric rate.
The limit matrix $\Phi_s$ is doubly stochastic and correspond to a uniform steady distribution for all $s$, i.e.,
$$
\Phi_s=\frac{1}{N}\pmb{1}\pmb{1}^T.
$$
\label{themorem_convergence_Phi}
\end{lemma}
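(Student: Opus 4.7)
The plan is to prove this classical result via the theory of \emph{ergodic (Dobrushin) coefficients} for products of stochastic matrices, exploiting Assumptions~\ref{assumption-weight} and~\ref{assumption-connectivity} together with the auxiliary fact already quoted in the main text from \cite{nedic2018network}, namely that every entry of $\pmb{\Phi}_{(\ell+n)B-1:\ell B}$ is at least $\beta^{nB}$.

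First, I would show that $\Phi_{k,s}$ is doubly stochastic for every $k\geq s$ and that any entrywise limit inherits this property: by Assumption~\ref{assumption-weight} each $\bP(k)$ is doubly stochastic, and the set of doubly stochastic matrices is closed under multiplication since $(\bA\bB)\mathbf{1}=\bA\mathbf{1}=\mathbf{1}$ and $\mathbf{1}^T(\bA\bB)=\mathbf{1}^T\bB=\mathbf{1}^T$, hence the same holds for every finite product and, by continuity, for any pointwise limit. I would then introduce the Dobrushin coefficient $\tau(\bA):=\tfrac12\max_{i,i'}\sum_{j}|A_{ij}-A_{i'j}|\in[0,1]$, which enjoys the well-known submultiplicativity $\tau(\bA\bB)\leq \tau(\bA)\tau(\bB)$ on stochastic matrices, together with the entrywise bound $\tau(\bA)\leq 1-N\min_{i,j}A_{ij}$. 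Applying this bound to an $NB$-step block and invoking the quoted fact with $n=N$ gives $\tau\bigl(\pmb{\Phi}_{(\ell+N)B-1:\ell B}\bigr)\leq 1-N\beta^{NB}\leq q^{NB}$, where $q=(1-\beta^{NB})^{1/NB}$ is exactly the constant appearing elsewhere in the paper.

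Chaining $\lfloor (k-s)/(NB)\rfloor$ such non-overlapping blocks via submultiplicativity, and using $\tau\leq 1$ on the leftover factor, I obtain $\tau(\Phi_{k,s})\leq q^{k-s}$. For any doubly stochastic $\bA$, the inequality $A_{ij}\leq A_{i'j}+\tau(\bA)$ averaged over $i'$ yields the entrywise bound $|A_{ij}-1/N|\leq \tau(\bA)$; specializing to $\bA=\Phi_{k,s}$ gives $|\Phi_{k,s}(i,j)-1/N|\leq q^{k-s}$. This simultaneously establishes (i) that each entry of $\Phi_{k,s}$ is Cauchy in $k$, so the limit $\Phi_s:=\lim_{k\to\infty}\Phi_{k,s}$ exists; (ii) that every entry of $\Phi_s$ equals $1/N$; (iii) that $\Phi_s$ is doubly stochastic, so $\Phi_s=\tfrac{1}{N}\mathbf{1}\mathbf{1}^T$; and (iv) that the convergence is geometric at rate $q$.

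The main obstacle is getting the quantitative constants in the middle step right --- justifying the inequality $\tau(\bA)\leq 1-N\min_{i,j}A_{ij}$ cleanly, and choosing the block length $NB$ so that the entrywise lower bound $\beta^{NB}$ from \cite{nedic2018network} matches the rate $q=(1-\beta^{NB})^{1/NB}$ already used in Theorem~\ref{thm:gradient}. Once these ingredients are lined up, the remaining steps (preservation of double stochasticity, passage to the limit, and the identification $\Phi_s=\tfrac{1}{N}\mathbf{1}\mathbf{1}^T$) are routine.
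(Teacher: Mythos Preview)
The paper does not actually prove this lemma: it is stated in the ``Auxiliary Lemmas'' subsection with the explicit disclaimer that the proofs are omitted and the reader is referred to \cite{Boyd05} and \cite{nedic2009distributed}. So there is no in-paper argument to compare your proposal against.

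That said, your Dobrushin--coefficient route is a correct and standard way to establish exactly this statement, and it dovetails nicely with the constants $\beta$, $B$, and $q=(1-\beta^{NB})^{1/NB}$ that the paper already uses. Two minor points worth tightening if you carry it out in full: (i) after chaining $\lfloor (k-s)/(NB)\rfloor$ blocks you obtain $\tau(\Phi_{k,s})\le q^{NB\lfloor (k-s)/(NB)\rfloor}$, which is $q^{k-s}$ only up to the fixed multiplicative constant $q^{-NB}$; this does not affect the ``geometric rate'' conclusion but the displayed inequality $\tau(\Phi_{k,s})\le q^{k-s}$ is slightly stronger than what the chaining literally gives. (ii) The step $|A_{ij}-A_{i'j}|\le \tau(\bA)$ is correct for row-stochastic $\bA$ via the total-variation identity $\tfrac{1}{2}\|p-q\|_1=\max_S|p(S)-q(S)|$ applied to singletons; it may be worth writing that one line out, since it is the only place where the argument is not purely mechanical.
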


\begin{lemma}[Lemma 4 in \citep{nedic2009distributed}]
Assume that $\bP(k)$ is doubly stochastic for all $k$.  Under Assumption 2, the difference between $1/N$ and any element of   $\Phi_{k,s}=\bP(s)\bP(s+1)\ldots\bP(k)$ can be bounded by
\begin{align}
\left|\frac{1}{N}-\Phi_{k,s}(i,j)\right|\leq 2 \frac{(1+\beta^{-NB})}{1-\beta^{NB}}(1-\beta^{NB})^{(k-s)/NB},
\end{align}
where  $\beta$ is the smallest positive value of all consensus matrices, i.e., $\beta=\arg\min P_{i,j}(k), \forall k$ with $P_{i,j}(k)>0, \forall i,j.$
\label{lemma_bound_Phi}
\end{lemma}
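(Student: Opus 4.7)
The plan is to give a quantitative refinement of Lemma~\ref{themorem_convergence_Phi} by establishing geometric contraction of $\pmb{\Phi}_{k:s}$ toward its doubly stochastic limit $\tfrac{1}{N}\mathbf{1}\mathbf{1}^\intercal$. I would follow the standard Dobrushin/coefficient-of-ergodicity route used in \cite{nedic2009distributed}: define a sub-multiplicative scalar functional on stochastic matrices, show that a single $NB$-block of the product contracts it by a definite factor using Assumption~\ref{assumption-connectivity}, and then multiply across $\lfloor (k-s)/NB\rfloor$ blocks.

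Step 1 (contraction coefficient): For any stochastic matrix $\bM$, work with the Dobrushin coefficient $\lambda(\bM) := 1 - \min_{i,i'}\sum_{r}\min(M_{ir}, M_{i'r})$ (equivalently I could use $\tau(\bM) := \max_j(\max_i M_{ij}-\min_i M_{ij})$; the two are compatible and $\tau \leq \lambda$). The two facts I need are standard and I would verify directly: (i) $\lambda(\bM_1\bM_2)\leq \lambda(\bM_1)\lambda(\bM_2)$, and (ii) if every entry of $\bM$ is $\geq \alpha$, then $\lambda(\bM)\leq 1-N\alpha\leq 1-\alpha$.

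Step 2 (positivity of $NB$-step blocks): Invoke the structural consequence of Assumption~\ref{assumption-connectivity} recalled in the paper: for every $\ell$, every entry of $\pmb{\Phi}_{(\ell+N)B-1:\ell B}$ is at least $\beta^{NB}$. This is the combinatorial heart of the argument, obtained by concatenating the $B$-block union-connectedness with the fact that any directed walk in a strongly connected $N$-vertex graph has length $\leq N$; together these guarantee a positive path of length $NB$ for every ordered pair $(i,j)$. Combined with Step~1, this gives $\lambda(\pmb{\Phi}_{(\ell+N)B-1:\ell B}) \leq 1 - \beta^{NB}$ for each $\ell$.

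Step 3 (chaining and passage to entrywise bound): Write $k-s = m\cdot NB + r$ with $0\leq r<NB$ and decompose $\pmb{\Phi}_{k:s}$ into $m$ such full blocks times a remainder of at most $NB$ stochastic matrices (for which $\lambda\leq 1$). Sub-multiplicativity gives $\lambda(\pmb{\Phi}_{k:s})\leq (1-\beta^{NB})^m$. Because $\pmb{\Phi}_{k:s}$ is doubly stochastic, every column sums to $1$, so $\tfrac{1}{N}$ lies between $\min_i \Phi_{k,s}(i,j)$ and $\max_i \Phi_{k,s}(i,j)$; hence
\begin{align*}
\bigl|\Phi_{k,s}(i,j)-\tfrac{1}{N}\bigr| \;\leq\; \max_i \Phi_{k,s}(i,j) - \min_i \Phi_{k,s}(i,j) \;\leq\; \lambda(\pmb{\Phi}_{k:s}) \;\leq\; (1-\beta^{NB})^m.
\end{align*}
Finally, using $m\geq (k-s)/NB - 1$ gives $(1-\beta^{NB})^m \leq (1-\beta^{NB})^{-1}(1-\beta^{NB})^{(k-s)/NB}$. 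Absorbing the worst-case slack from the ``partial'' block of length $r<NB$ at either end (whose ergodic contribution is controlled by $\beta^{-NB}$, since in the degenerate case every entry can be as small as $\beta^{NB}$ but some entries may still be $0$) and doubling to accommodate both $\Phi_{k,s}$ and its transpose yields the claimed constant $2(1+\beta^{-NB})/(1-\beta^{NB})$.

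The main obstacle is Step~2: the combinatorial positivity of $\pmb{\Phi}_{(\ell+N)B-1:\ell B}$ with uniform lower bound $\beta^{NB}$. Once this is available (cleanly extracted from the union-connectedness together with the ``walk of length $\leq N$'' argument), the remainder of the proof is mechanical sub-multiplicativity plus the doubly stochastic column-sum trick to convert a variation bound into an entrywise deviation from $1/N$; the slightly ugly constant $2(1+\beta^{-NB})/(1-\beta^{NB})$ is just a convenient way of absorbing the initial- and final-block slack so that the stated decay rate $(1-\beta^{NB})^{(k-s)/NB}$ holds for every $k\geq s$.
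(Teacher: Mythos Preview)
The paper does not prove this lemma; in the auxiliary-lemmas subsection it explicitly omits the proof and refers the reader to \cite{nedic2009distributed}. Your sketch is precisely the standard argument from that reference (Dobrushin/ergodicity coefficient, the $\beta^{NB}$ entrywise lower bound on each $NB$-step block coming from Assumption~\ref{assumption-connectivity}, sub-multiplicativity across blocks, and the doubly-stochastic column-sum observation to convert a variation bound into $|\Phi_{k,s}(i,j)-1/N|$), so the approach is correct and coincides with what the paper defers to.

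The one loose end is your verbal justification of the exact prefactor $2(1+\beta^{-NB})/(1-\beta^{NB})$. The ``doubling to accommodate both $\Phi_{k,s}$ and its transpose'' plays no role here (for doubly stochastic matrices the row and column variation bounds coincide), and the $\beta^{-NB}$ term does not arise from ``degenerate entries as small as $\beta^{NB}$'' in a partial block. In \cite{nedic2009distributed} the extra factors come from aligning the endpoints $s,k$ to the $NB$-block grid (up to one partial block at each end contributes nothing to contraction but shifts the count) and from replacing the exponent $\lfloor(k-s)/NB\rfloor$ by the cleaner $(k-s)/NB$. This is purely cosmetic bookkeeping and does not affect the geometric rate you derived; if you want to recover the stated constant exactly, redo that alignment step carefully rather than rely on the heuristic you gave.
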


\subsection{Assumption on Adaptive Asynchronous Updates}\label{sec:assumption-selection-app}
We focus on stragglers caused by heterogeneous computation capabilities among workers as in \citep{ananthanarayanan2013effective,ho2013more,lian2017can,lian2018asynchronous,luo2020prague}, while the communication bandwidth is not a bottleneck.  Hence the major time consumption comes from the calculation of gradients.
First, we prove that \DyBW achieves a linear speedup for convergence in terms of iterations.  In such case, the communication does not affect the result with respect to the convergence iteration.  However, the communication time will impact the wall-clock time for convergence.  To the best of knowledge, Prague \citep{luo2020prague} is perhaps the first work to separately measure the computation time and communication time by running different number of workers and found that the communication time is much smaller and dominated by the computation time.  We further explore this and estimate the communication time in our system given the network speed.  For instance, in our system, the network speed is about 20GB/s given the hardware of our server (memory card), and the communication time only accounts for 0.14\%-4\% of the total time measured in communication and computation time under the CIFAR-10 dataset.
Moreover, our convergence time in the numerical evaluation considers the total time of the whole process, which includes both computation and communication time. Exactly same thing has been done in Prague \citep{luo2020prague}, AD-PSGD \citep{lian2018asynchronous}, and Li et al. \citep{li2020taming,li2020breaking}.

 \section{Additional Experimental Results}\label{sec:sim-app}

We provide the experiment details, parameter settings and some additional experimental results of the experiment setting in Section~\ref{sec:sim}. 

\noindent\textbf{Dataset and Model.} We evaluate the performance of different algorithms on both the classification tasks and next-character prediction tasks.  For the classification tasks, we use CIFAR-10 \citep{krizhevsky2009learning}, MNIST \citep{lecun1998gradient} and Tiny-ImageNet \citep{le2015tiny,russakovsky2015imagenet} as the evaluation datasets for both i.i.d. and non-i.i.d. versions.  However, we mainly focus on the non-i.i.d. versions since data are generated locally at the workers based on their circumstances in real-world systems (see Section~\ref{prelim}).  For the task of next-character prediction, we use the dataset of \textit{The Complete Works of William Shakespeare} \citep{mcmahan2017communication} (Shakespeare).

The CIFAR-10 dataset consists of $60,000$ $32\times32$ color images in $10$ classes where $50,000$ samples are for training and the other $10,000$ samples for testing.  The MNIST dataset contains handwritten digits with $60,000$ samples for training and $10,000$ samples for testing.  The Tiny-ImageNet \citep{le2015tiny} has a total of $200$ classes with $500$ images each class for training and $50$ for testing, which is drawn from the ImageNet dataset \citep{russakovsky2015imagenet} and down-sampled to $64\times 64.$  For MNIST, we split the data based on the digits they contain in the dataset, i.e., we distribute the data to all workers such that each worker contains only a certain class of digits with the same number of training/testing samples \citep{yang2021achieving}. We also use the idea suggested by McMahan et al. \citep{mcmahan2017communication} and Yang et al. \citep{yang2021achieving} to deal with CIFAR-10 dataset. We use $N=128$ workers as an example.  We sort the data by label of different classes (10 classes in total), and split each class into $N/2=64$ pieces.  Each worker will randomly select 5 classes of data partitions to forms its local dataset.

We consider a network with $32$, $64$, $128$ and $256$ workers and randomly generate a connected graph for evaluation. The loss functions we consider are the cross-entropy one.

Our experiments are running using the following setups.  Model: AMD TRX40 Motherboard. System: Ubuntu 20.04. CPU: AMD Ryzen threadripper 3960x 24-core processor x 48. GPU: NVIDIA RTX A6000. Network Speed: 20GB/s. Language: Python3, Tensorflow 2.0.   In real systems, there are several reasons leading to the existing of stragglers, e.g., heterogeneous hardware, hardware failure, imbalanced data distributions among tasks and different OS effects, resource contention, etc. Since we run our experiments of different number of workers in one sever as in many other literatures, e.g., Li et al. \citep{li2020taming,li2020breaking} and motivated by the idea of AD-PSGD \citep{lian2018asynchronous} and Cipar et al. \citep{cipar2013solving}, we randomly select workers as stragglers in each iteration.  If a worker is selected to be a straggler, then it will sleep for some time in the iteration (e.g., the sleep time could be 6x of the average one local computation time).  We also investigate the impact of the sleep time and the percentage of stragglers in our ablation study below.  
Such techniques have been widely used in the literature such as AD-PSGD \citep{lian2018asynchronous}, Prague \citep{luo2020prague} and Li et al. \citep{li2020taming,li2020breaking}.

\begin{table}[t]
\centering
\scalebox{0.8}{\begin{tabular}{|c|c|c|c|c|c|}
\hline
\begin{tabular}[c]{@{}c@{}}Dataset \end{tabular} & Model     & AGP     & AD-PSGD & Prague  & \DyBW         \\ \hline\hline

\multirow{4}{*}{CIFAR-10}
& 2-NN      & 43.87\% & 43.5\%  & 44.5\%  & \textbf{45.4\%}  \\
& AlexNet   & 52.85\% & 49.5\%  & 56.3\%  & \textbf{58.6\%}  \\
& VGG-13    & 59.41\% & 56.3\%  & 63.5\%  & \textbf{67.1\%}  \\
& ResNet-18 & 76.25\% & 73.7\%  & 77.3\%  & \textbf{79.8\%}  \\ \hline\hline

\multirow{4}{*}{MNIST}
& 2-NN      & 95.75\% & 95.56\% & 95.86\% & \textbf{95.98\%} \\
& AlexNet   & 95.15\% & 95.12\% & 95.47\% & \textbf{95.58\%} \\
& VGG-13    & 95.97\% & 95.38\% & 96.15\% & \textbf{96.27\%} \\
& ResNet-18 & 97.31\% & 97.19\% & 97.33\% & \textbf{97.43\%} \\ \hline\hline

Tiny-ImageNet
& ResNet-18 & 41.81\% & 42.39\% & 45.28\% & \textbf{46.21\%} \\ \hline\hline

Shakespeare
& LSTM      & 54.88\% & 54.65\% & 55.72\% & \textbf{56.22\%} \\ \hline

\end{tabular}}
\caption{The test accuracy of different models using different non-i.i.d. datasets with 128 workers.}
\label{tbl:app-test-acc-128}
\vspace{-0.25in}
\end{table}

\begin{table}[t]
\centering
\scalebox{0.8}{\begin{tabular}{|c|c|c|c|c|c|}
\hline
\begin{tabular}[c]{@{}c@{}}Dataset\\ (Model) \end{tabular}                      & \#  & AGP     & AD-PSGD & Prague  & \DyBW         \\ \hline\hline

\multirow{4}{*}{\begin{tabular}[c]{@{}c@{}}CIFAR-10\\ (ResNet-18)\end{tabular}}
& 32  & 67.19\% & 55.67\% & 60.22\% & \textbf{71.56\%} \\
& 64  & 73.01\% & 63.09\% & 69.55\% & \textbf{77.34\%} \\
& 128 & 75.18\% & 68.64\% & 74.72\% & \textbf{78.58\%} \\
& 256 & 75.32\% & 73.5\%  & 77.14\% & \textbf{78.76\%} \\ \hline\hline

\multirow{4}{*}{\begin{tabular}[c]{@{}c@{}}MNIST\\ (ResNet-18)\end{tabular}}
& 32  & 96.03\% & 94.84\% & 95.43\% & \textbf{96.61\%} \\
& 64  & 96.82\% & 95.43\% & 96.35\% & \textbf{97.01\%} \\
& 128 & 96.99\% & 96.14\% & 96.43\% & \textbf{97.12\%} \\
& 256 & 97.04\% & 96.15\% & 96.45\% & \textbf{97.15\%} \\ \hline\hline

\multirow{4}{*}{\begin{tabular}[c]{@{}c@{}}Tiny-ImageNet\\ (ResNet-18)\end{tabular}}
& 32  & 38.62\% & 36.8\%  & 39.54\% & \textbf{41.71\%} \\
& 64  & 39.64\% & 39.04\% & 41.9\%  & \textbf{44.13\%} \\
& 128 & 41.30\% & 41.32\% & 43.72\% & \textbf{44.88\%} \\
& 256 & 42.05\% & 43.02\% & 43.79\% & \textbf{45.18\%} \\ \hline\hline

\multirow{4}{*}{\begin{tabular}[c]{@{}c@{}}Shakespeare\\ (LSTM)\end{tabular}}
& 32  & 48.72\% & 46.95\% & 48.09\% & \textbf{50.25\%} \\
& 64  & 51.53\% & 49.16\% & 50.80\% & \textbf{53.25\%} \\
& 128 & 52.88\% & 50.79\% & 52.42\% & \textbf{54.26\%} \\
& 256 & 53.03\% & 52.23\% & 53.14\% & \textbf{54.29\%} \\ \hline
\end{tabular}}
\caption{The test accuracy of ResNet-18 on non-i.i.d. CIFAR-10, MNIST, and Tiny-ImageNet trained for 50 seconds, 10 seconds and 30 seconds, respectively; and the test accuracy of LSTM on non-i.i.d. Shakespeare trained for 30 seconds.}
\label{tbl:app-test-acc-128-time}
\vspace{-0.2in}
\end{table}

\textbf{Test Accuracy.} Complementary to Table~\ref{tbl:test-accuracy-comparison-all32-CIFAR10}, we summarize the test accuracy of different models using different non-i.i.d. datasets with 128 workers in Table~\ref{tbl:app-test-acc-128}.  Again, we observe that \DyBW consistently outperforms all state-of-the-art baselines in consideration. Complementary to Table~\ref{tbl:test-accuracy-comparison-cifar10}, we summarize the test accuracy of different models over different non-i.i.d. datasets when trained for a limited time in Table~\ref{tbl:app-test-acc-128-time} with different number of workers.  For example, for the image classification tasks, we train ResNet-18 for 50 seconds on non-i.i.d. CIFAR-10, 10 seconds on non-i.i.d. MNIST, and 50 seconds for non-i.i.d. Tiny-ImageNet.  For the next-character prediction task, we train LSTM on Shakespeare for 30 seconds.  Again, we observe that for a limited training time, \DyBW consistently achieves a higher test accuracy compared to AGP, AD-PSGD and Prague. Complementary to Tables~\ref{tbl:app-test-acc-128} and~\ref{tbl:app-test-acc-128-time}, the corresponding results when trained on the i.i.d. version of datasets are presented in Tables~\ref{tbl:app-test-acc-128-iid} and~\ref{tbl:app-test-acc-128-time-iid}, respectively.  Similar observations can be made.

\begin{table}[h]
\centering
\scalebox{0.8}{\begin{tabular}{|c|c|c|c|c|c|}
\hline
\begin{tabular}[c]{@{}c@{}}Dataset \\ (IID)\end{tabular} & Model     & AGP     & AD-PSGD & Prague  & \DyBW         \\ \hline\hline

\multirow{4}{*}{CIFAR-10}
& 2-NN      & 45.35\% & 45.66\% & 46.48\% & \textbf{46.95\%} \\
& AlexNet   & 53.85\% & 55.9\%  & 59.84\% & \textbf{61.01\%} \\
& VGG-13    & 59.51\% & 63.12\% & 66.62\% & \textbf{68.26\%} \\
& ResNet-18 & 78.61\% & 78.24\% & 81.47\% & \textbf{82.91\%} \\ \hline\hline

\multirow{4}{*}{MNIST}
& 2-NN      & 96.55\% & 96.96\% & 97.25\% & \textbf{97.27\%} \\
& AlexNet   & 95.87\% & 96.56\% & 96.89\% & \textbf{96.93\%} \\
& VGG-13    & 96.38\% & 96.97\% & 97.44\% & \textbf{97.62\%} \\
& ResNet-18 & 97.84\% & 98.22\% & 98.48\% & \textbf{98.52\%} \\ \hline\hline

Tiny-ImageNet
& ResNet-18 & 46.08\% & 46.27\% & 48.09\% & \textbf{48.5\%}  \\ \hline\hline

Shakespeare
& LSTM      & 54.88\% & 54.65\% & 55.72\% & \textbf{56.22\%} \\ \hline
\end{tabular}}
\caption{The test accuracy of different models using different i.i.d. datasets with 128 workers.}
\label{tbl:app-test-acc-128-iid}
\vspace{-0.2in}
\end{table}

\begin{table}[h]
\centering
\scalebox{0.8}{\begin{tabular}{|c|c|c|c|c|c|}
\hline
\begin{tabular}[c]{@{}c@{}}Dataset\\ (Model)\\ IID\end{tabular}                         & \#  & AGP     & AD-PSGD & Prague  & \DyBW        \\ \hline\hline

\multirow{4}{*}{\begin{tabular}[c]{@{}c@{}}CIFAR-10\\ (ResNet-18)\end{tabular}}
& 32  & 68.95\% & 62.59\% & 65.19\% & \textbf{75.53\%} \\
& 64  & 74.26\% & 68.40\% & 74.91\% & \textbf{80.46\%} \\
& 128 & 77.08\% & 74.01\% & 79.19\% & \textbf{81.43\%} \\
& 256 & 77.17\% & 77.47\% & 80.06\% & \textbf{81.96\%} \\ \hline\hline

\multirow{4}{*}{\begin{tabular}[c]{@{}c@{}}MNIST\\ (ResNet-18)-10\end{tabular}}
& 32  & 96.93\% & 96.40\% & 96.89\% & \textbf{97.36\%} \\
& 64  & 97.38\% & 96.85\% & 97.52\% & \textbf{98.06\%} \\
& 128 & 97.47\% & 97.13\% & 97.57\% & \textbf{98.16\%} \\
& 256 & 97.56\% & 97.23\% & 97.67\% & \textbf{98.22\%} \\ \hline\hline

\multirow{4}{*}{\begin{tabular}[c]{@{}c@{}}Tiny-ImageNet\\ (ResNet-18)-50\end{tabular}}
& 32  & 42.81\% & 40.17\% & 43.09\% & \textbf{45.13\%} \\
& 64  & 44.08\% & 42.66\% & 45.24\% & \textbf{47.30\%} \\
& 128 & 44.51\% & 44.15\% & 46.22\% & \textbf{47.63\%} \\
& 256 & 44.52\% & 45.46\% & 46.31\% & \textbf{47.77\%} \\ \hline\hline

\multirow{4}{*}{\begin{tabular}[c]{@{}c@{}}Shakespeare\\ (LSTM)-30\end{tabular}}
& 32  & 48.72\% & 46.95\% & 48.09\% & \textbf{50.25\%} \\
& 64  & 51.53\% & 49.16\% & 50.80\% & \textbf{53.25\%} \\
& 128 & 52.88\% & 50.79\% & 52.42\% & \textbf{54.26\%} \\
& 256 & 53.03\% & 52.23\% & 53.14\% & \textbf{54.29\%} \\ \hline
\end{tabular}}
\caption{The test accuracy of ResNet-18 on i.i.d. CIFAR-10, MNIST, and Tiny-ImageNet when trained for 50 seconds, 10 seconds and 30 seconds, respectively; and the test accuracy of LSTM on Shakespeare when trained for 30 seconds.}
\label{tbl:app-test-acc-128-time-iid}
\vspace{-0.2in}
\end{table}

\textbf{Speedup and Communication.}  Complementary to Figure~\ref{fig:comparison-speedup-communication-CIFAR10-different-workers}, we present corresponding results for 2-NN, AlexNet, and VGG-13 in Figures~\ref{fig:comparison-speedup-communication-CIFAR10-different-workers-2NN-app},~\ref{fig:comparison-speedup-communication-CIFAR10-different-workers-alexnet-app} and~\ref{fig:comparison-speedup-communication-CIFAR10-different-workers-VGG-app}.  Again, we observe that \DyBW consistently outperforms baseline methods and achieves the best speedup at no cost of additional communication.

\begin{figure}[h]
    \centering
    \includegraphics[width=0.4\textwidth]{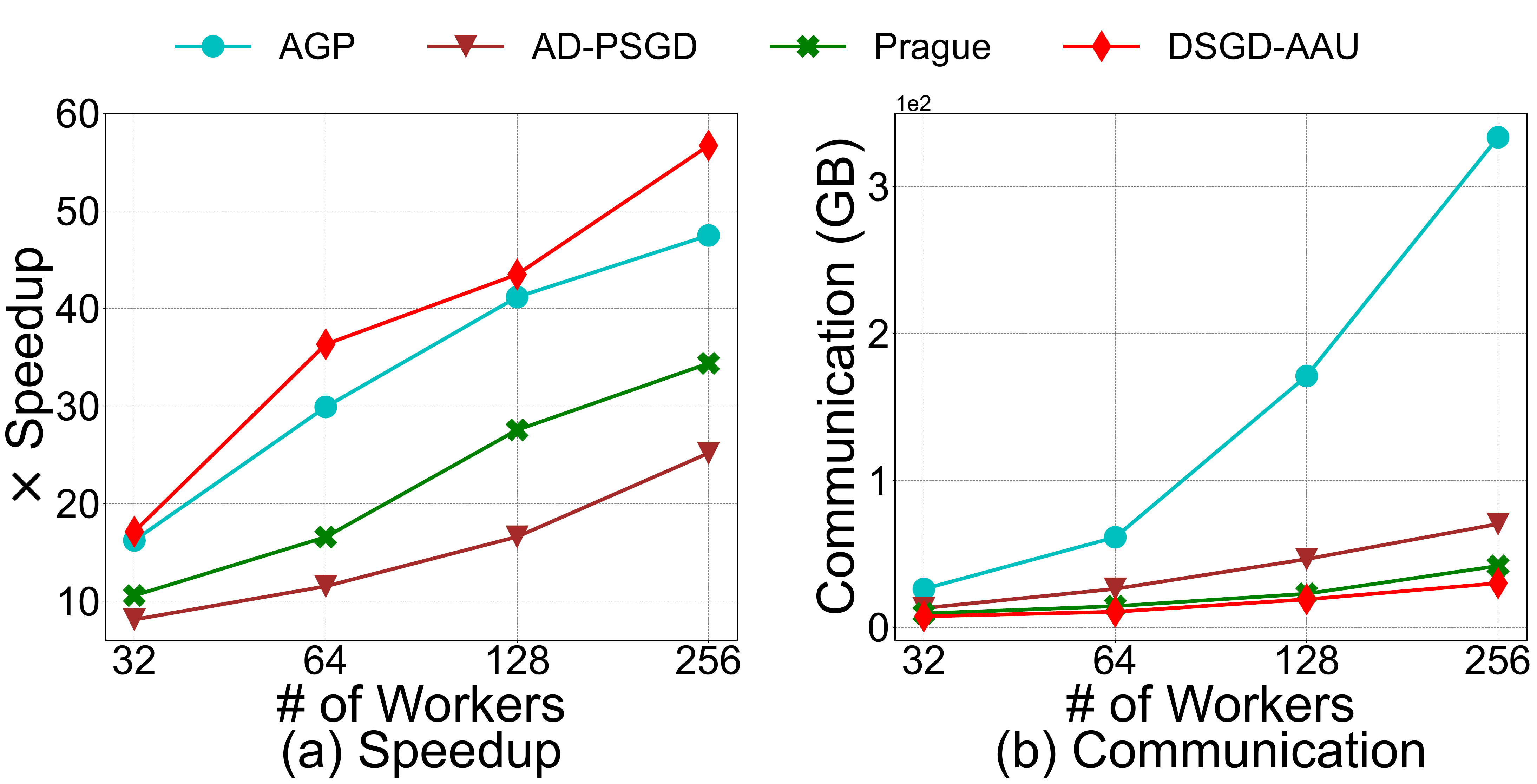}
    \vspace{-0.15in}
 \caption{Speedup and communication for 2-NN on non-i.i.d. CIFAR-10 with different number of workers.}
	\label{fig:comparison-speedup-communication-CIFAR10-different-workers-2NN-app}
	\vspace{-0.1in}
\end{figure}

\begin{figure}[h]
    \centering
    \includegraphics[width=0.4\textwidth]{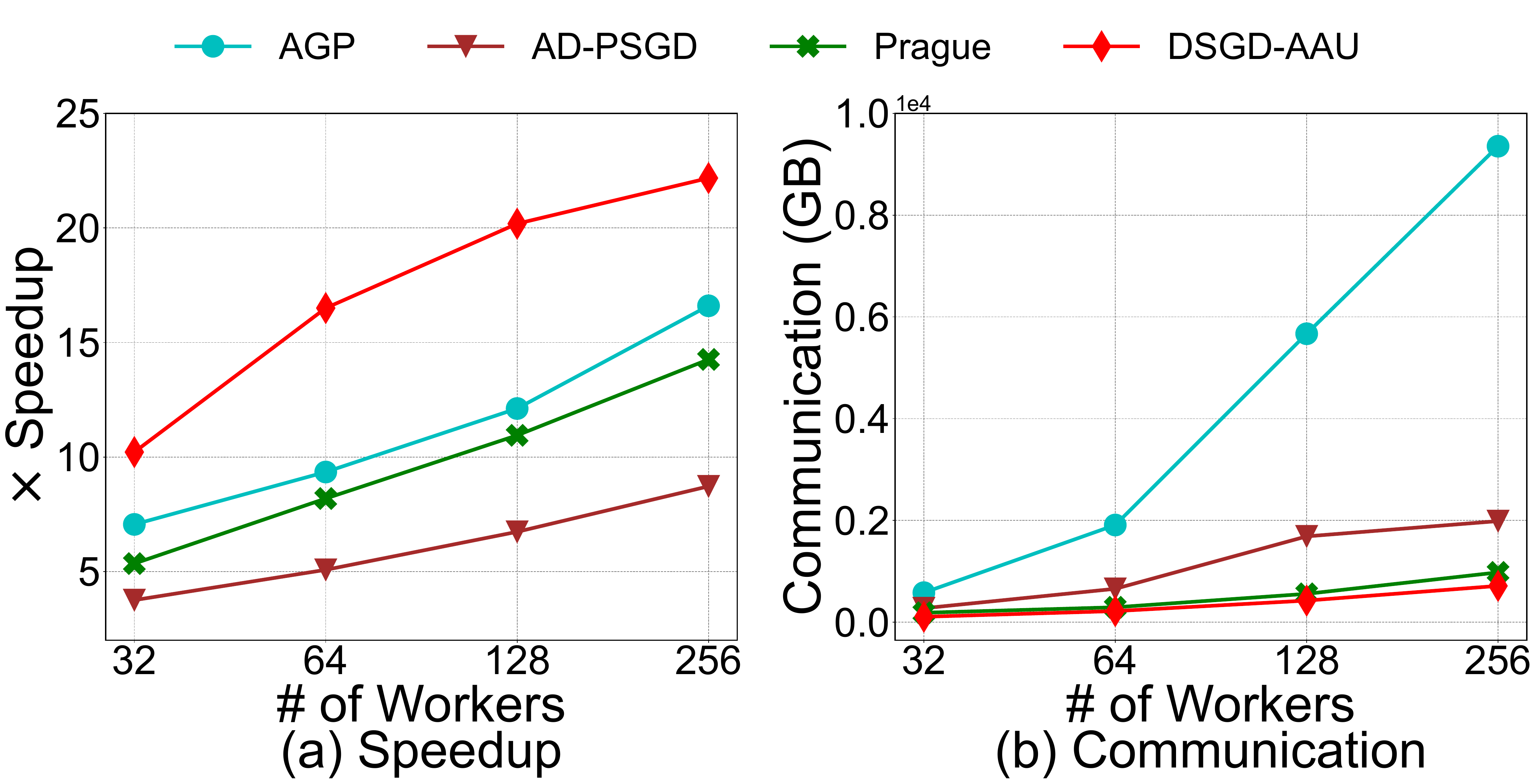}
    \vspace{-0.15in}
    \caption{Speedup and communication for AlexNet on non-i.i.d. CIFAR-10 with different number of workers.}
	\vspace{-0.15in}
	\label{fig:comparison-speedup-communication-CIFAR10-different-workers-alexnet-app}
\end{figure}

\begin{figure}[h]
    \centering
    \includegraphics[width=0.4\textwidth]{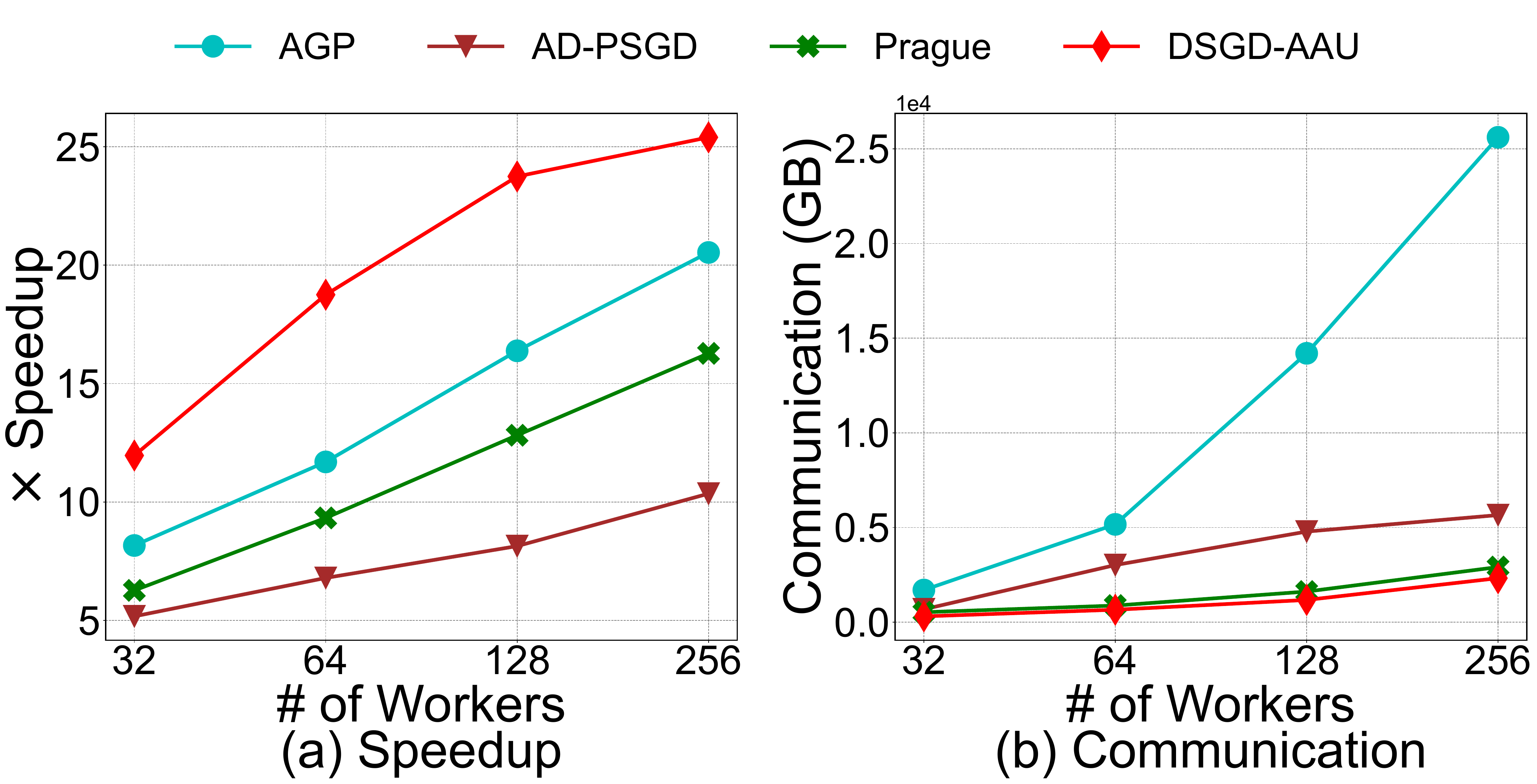}
    \vspace{-0.15in}
    \caption{Speedup and communication for VGG-13 on non-i.i.d. CIFAR-10 with different number of workers.}
	\vspace{-0.15in}
	\label{fig:comparison-speedup-communication-CIFAR10-different-workers-VGG-app}
\end{figure}

\textbf{Ablation Study.}
Complementary to discussions in Section~\ref{sec:sim}, we consider two key hyperparameters in decentralized ML optimization problems, i.e., the batch size and the portion of stragglers.  We now evaluate the their impact on the performance of VGG-13 using non-i.i.d. CIFAR-10 dataset with 128 workers.   The final test accuracy is reported in Figure~\ref{fig:ablation-vgg13-cifar10-app}(a), while the test accuracy when trained VGG-13 for 50 seconds is reported in Figure~\ref{fig:ablation-vgg13-cut-cifar10-app}(a). Based on these results, we find that the batch size of 128 is a proper value and use it throughout our experiments.

Furthermore, we investigate the impact of the portion of stragglers.  Specifically, we consider the case that a work can become a straggler in one iteration/round with probability ``P'' and we call this the ``straggler probability''.  We vary the straggler rate from 5\% to 40\%.  The final test accuracy is reported in Figure~\ref{fig:ablation-vgg13-cifar10-app}(b), while the test accuracy when trained VGG-13 for 50 seconds is reported in Figure~\ref{fig:ablation-vgg13-cut-cifar10-app}(b).  On the one hand, we observe that as the straggler rate increases,  the performance of all algorithms degrade, which is quite intuitive.  On the other hand, we observe that our \DyBW consistently outperforms state of the arts over all cases.  We choose the straggler rate to be $10\%$ throughout our experiments.

Finally, we investigate the impact of ``slow down'' for stragglers since we randomly select workers as stragglers in each iteration, e.g., the computing speed of the straggler is slowed down in the iteration.  We vary the slow down from 5$\times$ to 40$\times$.  The final test accuracy is reported in Figure~\ref{fig:ablation-vgg13-cifar10-app}(c), while the test accuracy when trained VGG-13 for 50 seconds is reported in Figure~\ref{fig:ablation-vgg13-cut-cifar10-app}(c).  Again, we observe that our \DyBW consistently outperforms state of the arts over all cases and we choose the slow down to be 10$\times$ in our experiments.

Similar observations can be made when consider the i.i.d. CIFAR-10, 
as well as other models and datasets.  Hence we omit the results and discussions here.

\begin{figure}[h]
    \centering
    \includegraphics[width=0.45\textwidth]{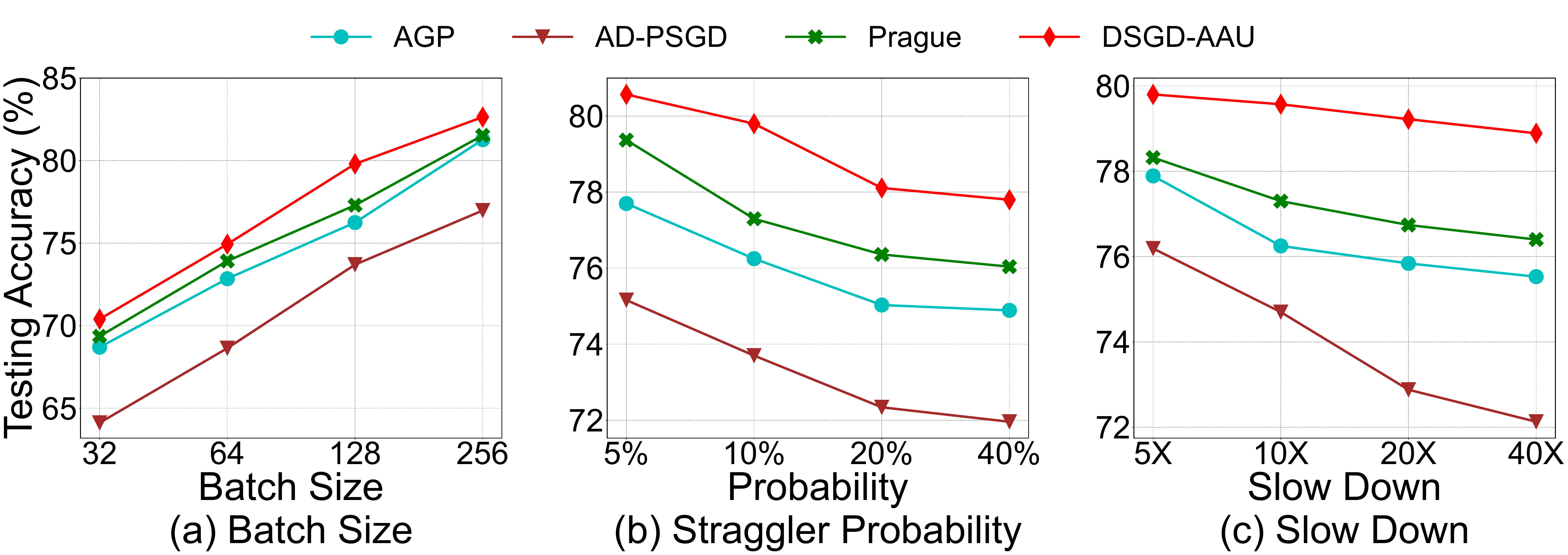}
    \caption{Ablation study for VGG-13 on non-i.i.d. CIFAR-10.}
	\label{fig:ablation-vgg13-cifar10-app}
 \vspace{-0.2in}
\end{figure}

\begin{figure}[h]
    \centering
    \includegraphics[width=0.45\textwidth]{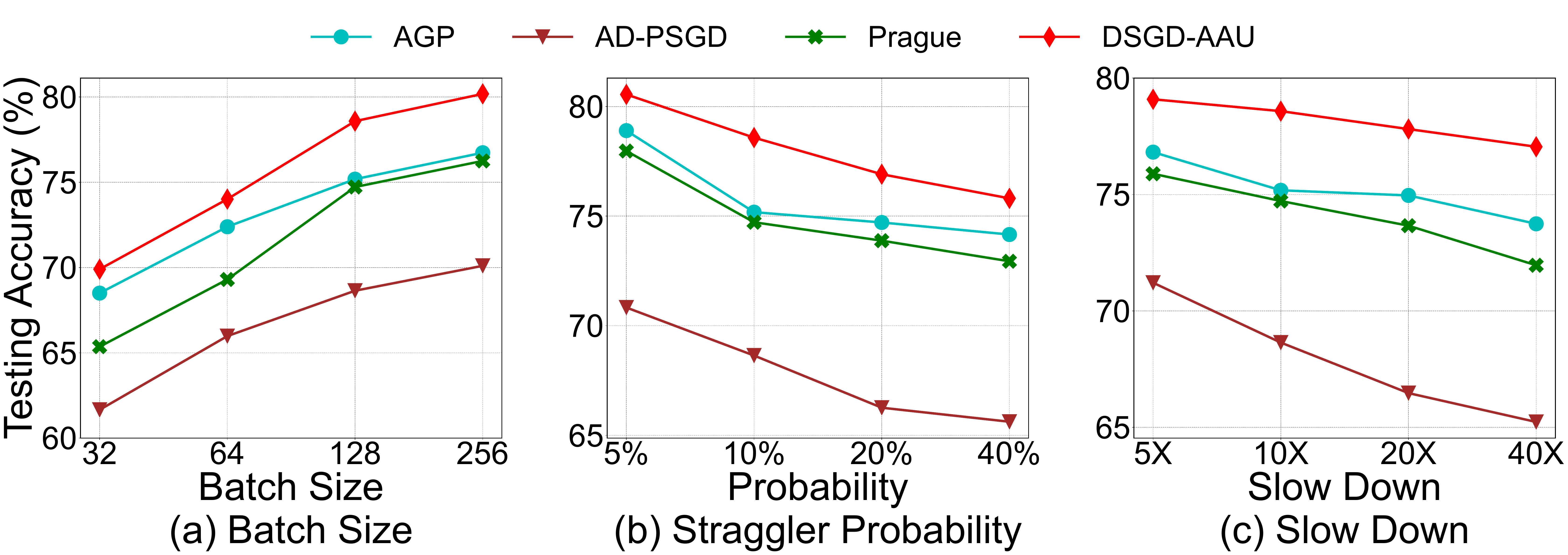}
    \caption{Ablation study for VGG-13 when trained for 50 seconds on non-i.i.d. CIFAR-10.}
	\label{fig:ablation-vgg13-cut-cifar10-app}
 \vspace{-0.2in}
\end{figure}

%
%
%

%

\end{document}